\documentclass{article}

\usepackage[a4paper,margin=1in]{geometry}

\usepackage[utf8]{inputenc}
\usepackage[T1]{fontenc}
\usepackage{lmodern}
\usepackage{microtype}

\usepackage{amsmath,amssymb,amsfonts,amsthm,mathtools}
\usepackage{autobreak}

\usepackage{graphicx}
\usepackage{subcaption}
\usepackage{booktabs}
\usepackage{wrapfig}
\usepackage{caption}
\usepackage{placeins}
\usepackage{afterpage}

\usepackage{comment}

\usepackage{algorithm}

\usepackage[numbers,sort&compress]{natbib}
\usepackage{xcolor}
\usepackage{url}
\usepackage{hyperref}

\theoremstyle{plain}
\newtheorem{thm}{Theorem}[section]
\newtheorem{cor}[thm]{Corollary}
\newtheorem{lem}[thm]{Lemma}

\theoremstyle{definition}

\newtheorem{ex}[thm]{Example}

\theoremstyle{remark}
\newtheorem{remark}[thm]{Remark}

\DeclareMathOperator{\vol}{Vol}
\DeclareMathOperator{\tr}{tr}
\DeclareMathOperator{\dist}{dist}
\title{The Geometry of Phase Transitions in Generative Dynamics via Projection Caustics}

\author{
Ryosuke Sakamoto\textsuperscript{1}
\quad
Kotaro Sakamoto\textsuperscript{2}
\\
\textsuperscript{1}Institute for the Advanced Study of Human Biology, \\Institute for Advanced Study, Kyoto University
\\
\textsuperscript{2}Graduate School of Engineering, The University of Tokyo
\\
\texttt{sakamoto.ryosuke.7x@kyoto-u.ac.jp}
\\
\texttt{kotaro.sakamoto@weblab.t.u-tokyo.ac.jp}
}

\begin{document}

\maketitle

\begin{abstract}
Continuous-state generative samplers, including diffusion and flow-matching models, evolve through continuous reverse-time dynamics, yet their samples often undergo abrupt qualitative changes: trajectories commit to modes, semantic alternatives collapse, and small perturbations in narrow time windows can produce large downstream effects.
This paper develops a geometric account of such phase-transition-like behaviour.
We view denoising as gradient descent on a free energy landscape and show that sharp transitions arise near projection caustics, where the nearest-point projection onto the data support ceases to be unique.
Motivated by this perspective, we introduce the Critical Boundary Detector (CBD), as practical diagnostics for score-direction instability.
Across toy models, standard diffusion models, and latent text-to-image diffusion models, CBD localises mode commitment, predicts intervention-sensitive windows, and supports targeted control in geometrically sensitive regions.
Our results connect geometry of data and dynamics of diffusion generation.
\end{abstract}

\section{Introduction}

\afterpage{
\begin{figure*}[t!]
    \centering
    \begin{minipage}[c]{0.70\textwidth}
        \centering

        \begin{subfigure}[b]{0.75\textwidth}
            \centering
            \includegraphics[width=\textwidth]{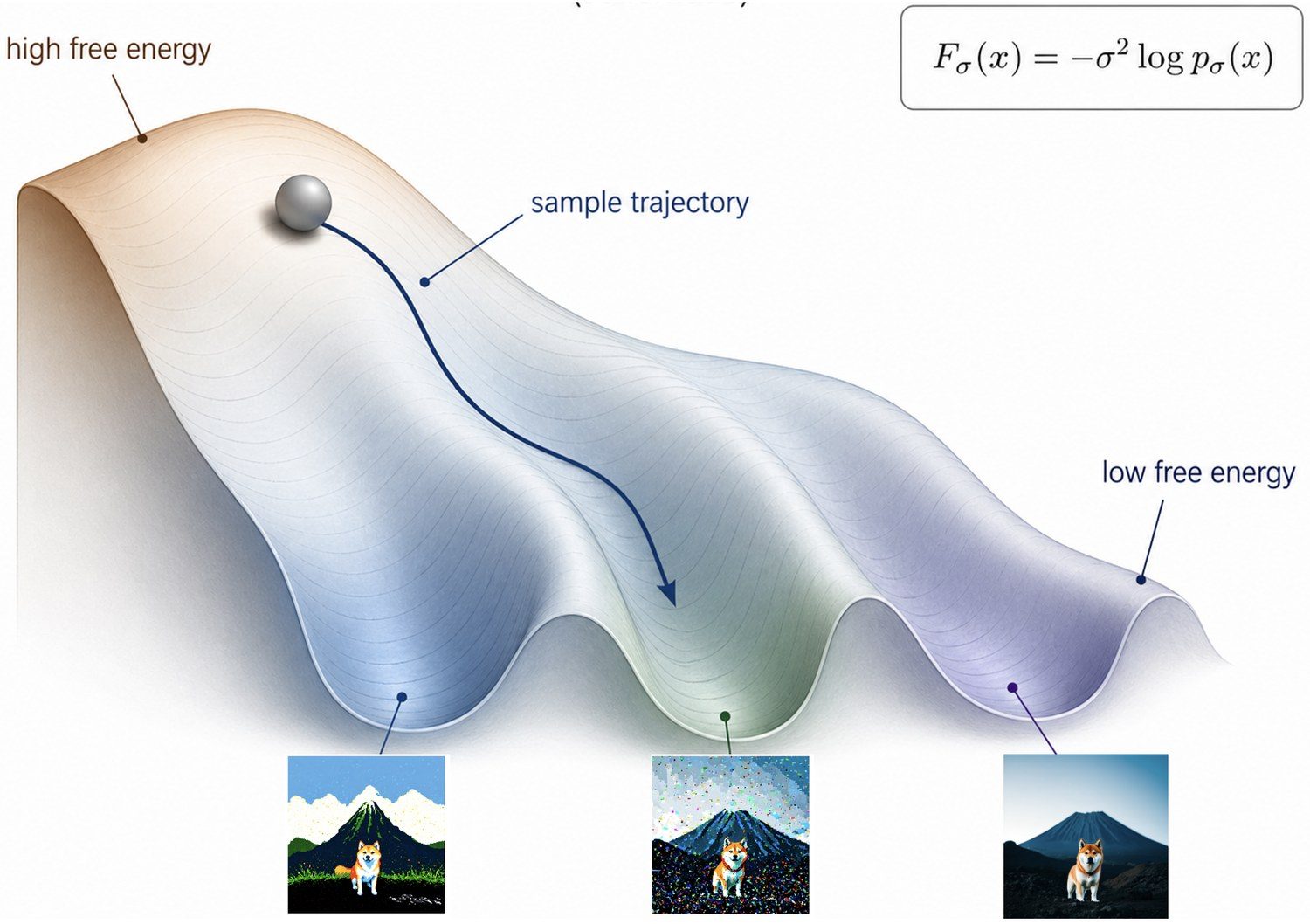}
            \caption{Denoising is interpreted as descent on a free energy landscape.
Near projection caustics, multiple branches compete and create free energy ridges.}
        \end{subfigure}

        \vspace{1em}

        \begin{subfigure}[b]{0.75\textwidth}
            \centering
            \includegraphics[width=\textwidth]{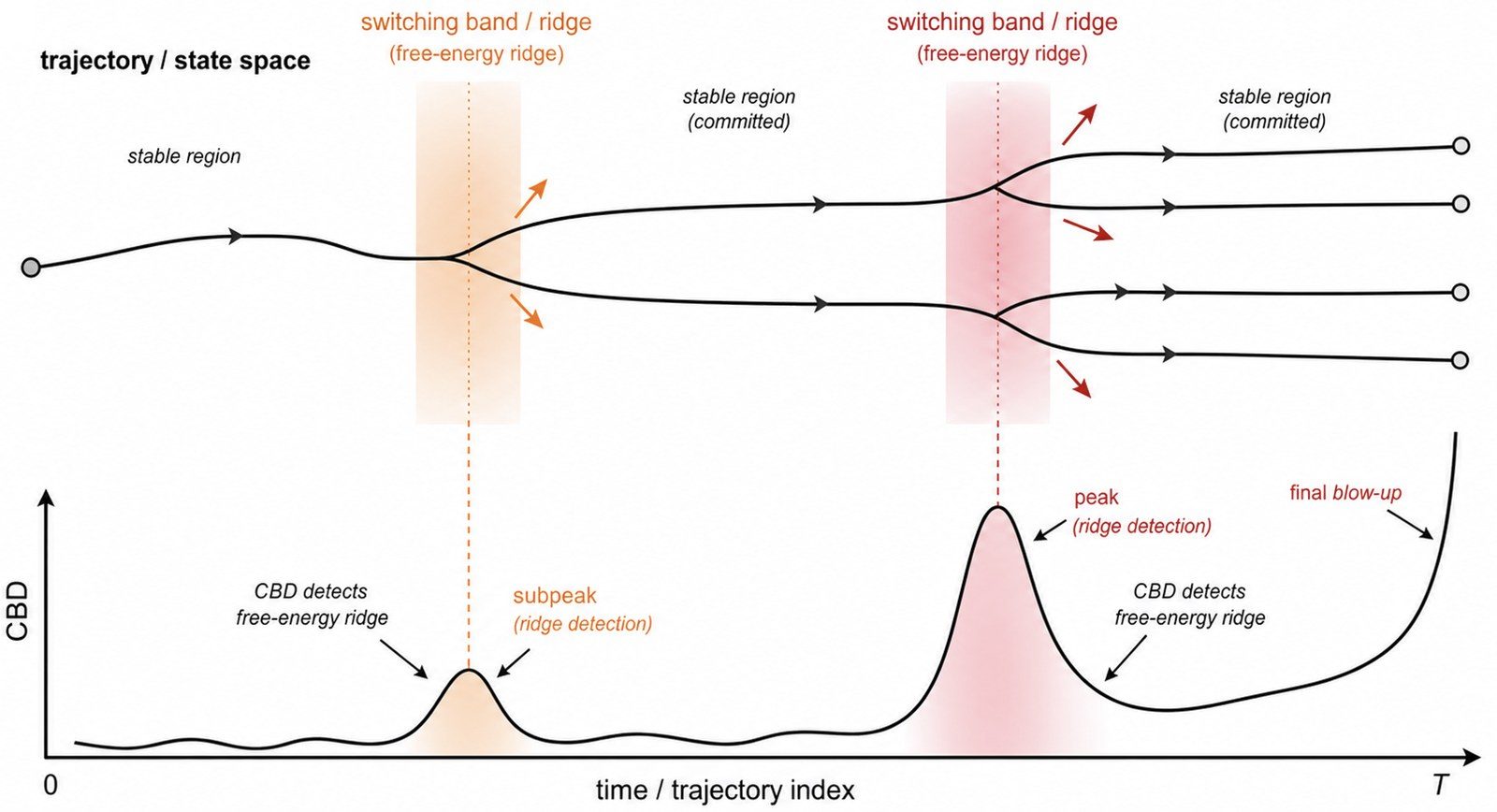}
            \caption{Trajectory segments located near these ridges exhibit rapid variation of the normalised score
direction, which is detected as CBD peaks or subpeaks along the trajectory.}
        \end{subfigure}

        \caption{}\label{fig:scheme}
        \label{fig:figure1}
    \end{minipage}%
    \hfill
    \begin{minipage}[c]{0.30\textwidth}
        \centering
        \includegraphics[width=\textwidth]{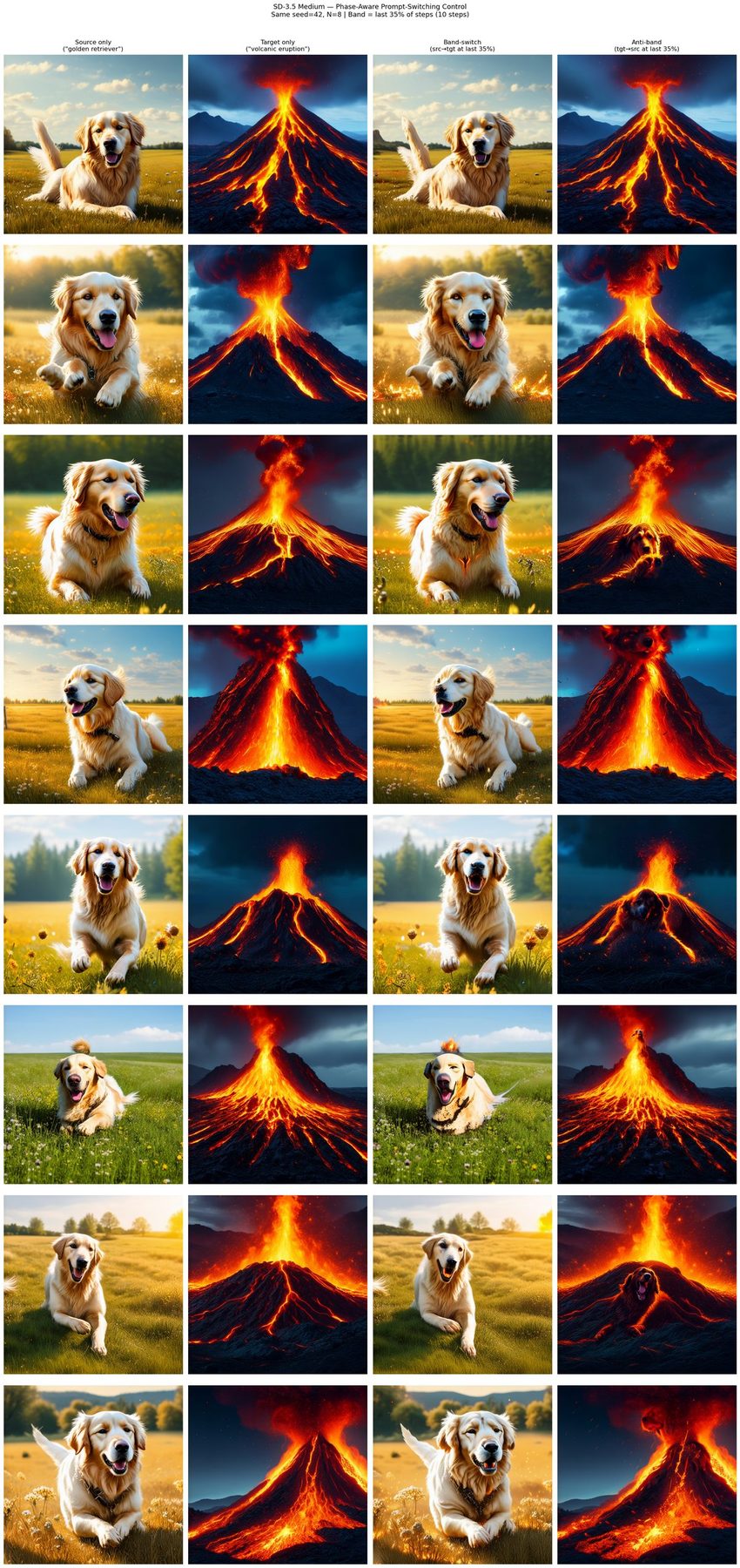}
        \caption{SD~3.5 CBD-triggered-window-intervention results. Dog- and volcano-like features are injected while largely preserving the composition and concept of each image.}
        \label{fig:figure2}
    \end{minipage}

\end{figure*}
}

Diffusion and score-based generative models build on the estimation of score fields, rooted in score matching, and sample by reversing a noise-to-data stochastic process or its associated probability-flow dynamics~\citep{hyvarinen2005score,vincent2011connection,song2019generative,anderson1982reverse,pmlr-v37-sohl-dickstein15,Ho2020denosing,song2021scorebased}.
Related continuous-state samplers, including flow matching, rectified flows, and stochastic interpolants, replace or complement stochastic reverse diffusion with learned transport or interpolation vector fields~\citep{lipman2023flow,liu2023rectified,albergo2025stochastic}.
Although these samplers evolve continuously in state space, individual trajectories often display apparently discrete events: a sample commits to one mode rather than another, semantic alternatives collapse, and small perturbations applied within a narrow time window produce macroscopic downstream effects.
Since the sampler state itself evolves continuously, these events are not literal discontinuities of the dynamics.
They instead indicate sharp reorganisations of the effective geometry governing denoising.
This paper asks what geometric event makes a continuous generative trajectory behave as if a discrete choice has been made.

Recent theory supports the view that diffusion generation is not a uniform relaxation process, but proceeds through multiple dynamical regimes. 
Statistical-mechanics and information-theoretic approaches describe these regimes in terms of speciation, condensation, symmetry breaking, critical instability, spectral gaps, and information flow~\citep{biroli2023generative,biroli2024dynamical,raya2023symmetry,ventura2025spectralgaps,ambrogioni2025thermo,stancevic2026information}. 
Related work further shows that diffusion models can resolve hierarchical structure over different noise scales, with coarse semantic features and fine details becoming recoverable at different stages of the reverse process~\citep{sclocchi2025phase}. 
More recent studies identify critical windows through complementary observables, including out-of-equilibrium pattern formation, correlation-length growth, low-frequency instabilities, and the concurrence of symmetry-breaking and nonlocality transitions in modern diffusion architectures~\citep{ambrogioni2026outofequilibrium,zhang2026concurrence}. 
Together, these results provide strong evidence for phase-transition-like behaviour in generative dynamics. 
However, they are primarily formulated at the population, thermodynamic, spectral, or architectural level, and therefore do not directly provide a lightweight geometric diagnostic for detecting branch-sensitive windows along a single trajectory of a large pretrained sampler. 
We seek such a local mechanism, and an observable trace of it, in the geometry of denoising itself.

A complementary line of work connects denoising to the local geometry of the data distribution. 
Denoising objectives and regularized autoencoders are known to encode score information~\citep{alain2014regularized,vincent2011connection}, and recent analyses show that score-based models can detect low-dimensional manifolds and that low-noise scores align with normal directions to the data support~\citep{pidstrigach2022score,pmlr-v235-stanczuk24a}. 
Projection-based and optimization-based interpretations make this geometry especially explicit: under a smooth manifold hypothesis, denoising behaves like approximate projection or descent toward the data support~\citep{permenter2024interpreting,yaguchi2025geometry}. 
This explains the projection-regular regime, where a single nearest branch of the data support dominates and the denoising direction is stable. 
The remaining question is what happens at the boundary of this uniqueness.

Our central claim is that sharp generative transitions arise precisely where projection regularity fails. 
Realistic data supports are unlikely to be globally smooth manifolds with a single-valued projection map; they may contain stratified, intersecting, or otherwise singular structures. 
Classical geometric measure theory, medial-axis theory, and singularity theory identify the failure of nearest-point uniqueness with medial-axis, cut-locus, and caustic-type singularities~\citep{federer1959curvature,blum1967medial,mather1983distance,chazal2005lambda,arnold1985singularities}. 
Motivated by this geometry, we study \emph{projection caustics}: regions where several nearest-point branches of the data support coexist and compete.

The mechanism is simple at an intuitive level. 
Away from a projection caustic, denoising is controlled by a single dominant branch, so the update direction changes smoothly. 
Near a projection caustic, several branch explanations can have comparable weight at finite noise. 
At the level of the Gaussian-smoothed free-energy landscape, this geometric ambiguity appears as a multi-branch log-sum-exp competition. 
As the trajectory moves or the noise level decreases, the dominant branch responsibility can switch rapidly from one branch to another. 
This responsibility switching reorients the normalised denoising direction, giving a local geometric mechanism for mode commitment, semantic branching, and symmetry breaking. 
Importantly, the zero-noise projection caustic is only the geometric skeleton of the transition; the sampler observes a finite-noise switching band around it, whose location can be shifted by branch density, curvature, and local amplitude factors.

This viewpoint suggests an operational diagnostic. 
Rather than reconstructing the unknown data support or explicitly estimating its caustic set, one can look directly for rapid changes in the normalised score direction or in the sampler's effective update direction. 
We introduce the \emph{Critical Boundary Detector} (CBD), a trajectory-level probe for this direction instability. 
CBD and its trajectory-adapted variants are designed to identify finite-noise switching bands: regions where competing branch explanations are locally balanced and small perturbations are likely to affect downstream commitment.

This also clarifies the relation between phase transitions and controllable generation. 
Existing control methods specify \emph{how} to steer a diffusion trajectory, for example through classifier guidance, classifier-free guidance, interval-restricted guidance, editing-by-noising, or attention-based prompt editing~\citep{dhariwal2021diffusion,ho2022classifierfree,kynkaanniemi2024interval,meng2022sdedit,hertz2023prompt}. 
Our question is complementary: \emph{when} along a trajectory is a small intervention most likely to change the eventual semantic branch? 
The projection-caustic perspective predicts that such intervention-sensitive windows should coincide with rapid reorientation of the denoising direction, and CBD is designed to detect exactly this signal.

\paragraph{Contributions.}
We make three contributions. 
First, we characterize the projection-regular regime and show that, when the nearest projection is unique, the leading geometry of the denoising landscape is governed by distance to the data support, while density and curvature appear only as lower-order corrections. 
Second, we analyse the non-regular regime near projection caustics and show that multiple nearest branches induce a finite branch-competition structure, leading to rapid switching of the dominant score direction. 
Third, we introduce CBD as the corresponding score-direction instability diagnostic and test its trajectory-level implications in toy geometries, pretrained DDPMs, transformer- and EDM-style diffusion models, and latent text-to-image samplers~\citep{Ho2020denosing,peebles2023scalable,karras2022edm,karras2024edm2,rombach2022latent,esser2024scaling}. 

The paper is primarily theoretical. 
The experiments do not aim to benchmark sample quality or to establish a universal raw instability profile across architectures. 
They test the operational implication of the theory: windows with strong score-direction reorientation should preferentially coincide with branch competition, mode commitment, and intervention-sensitive phases of generation.

\section{Asymptotic analysis of free energy in generative dynamics}\label{Sec:Theoretic}
Section~\ref{Sec:Theoretic} provides the asymptotic analysis underlying CBD. From the
free-energy gradient-flow perspective, we explain why transition windows are
expected to occur near peaks of score-direction instability.

\subsection{Free energy gradient-flow perspective and diffusion generative dynamics}
We adopt the following free energy gradient-flow perspective:
\begin{center}
\emph{diffusion dynamics is driven toward states of lower free energy.}
\end{center}
In other words, diffusion dynamics evolves as if it were descending along the free energy landscape.
In the present setting, this free energy $F_\sigma(x)$ is induced by Gaussian smoothing of the data distribution,
\[
p_\sigma(x)=\int_K\frac{1}{(2\pi\sigma^2)^{D/2}}
\exp\left(-\frac{\|x-y\|^2}{2\sigma^2}\right) p_0(y)d\mu_K,
\]
\[
F_\sigma(x):=-\sigma^2 \log p_\sigma(x),
\]
where \(K\subset \mathbb{R}^D\) denotes the support of the data distribution and \(p_0\) is the underlying density on the data support \(K\). Since the score satisfies
\[
s_\sigma(x)=\nabla_x \log p_\sigma(x)
=
-\frac{1}{\sigma^2}\nabla_x F_\sigma(x),
\]
the denoising direction is given by the steepest descent direction of the free energy landscape. Thus, at the level of idealised score dynamics, diffusion sampling may be viewed as a relaxation process on \(F_\sigma\).

This viewpoint suggests that phase transitions should be understood as changes in the geometry of the free energy landscape across noise scales. At high noise, the Gaussian smoothing washes out fine geometric structure, and the resulting landscape is dominated by coarse global statistics such as the data centroid (Figure \ref{fig:cross-free-energy-part1} (a), $\sigma=2.0$). At low noise, by contrast, the geometry of \(F_\sigma\) becomes increasingly sensitive to the local structure of the support \(K\), and in particular to the nearest-point projection onto \(K\) (Figure \ref{fig:cross-free-energy-part1} (a), $\sigma=0.1$). From this perspective, the key geometric distinction is whether a point \(x\) admits a unique nearest point on \(K\), or whether several competing nearest points coexist.


Recall that for \(x\in\mathbb{R}^D\), the nearest-point projection set is $\Pi_K(x):=\arg\min_{y\in K}\|x-y\|_2.$
We define the projection-regular region
\[
\mathcal{U}(K):=\{x\in\mathbb{R}^D:\ |\Pi_K(x)|=1\},
\]
and the projection caustic
\[
\mathcal{C}(K):=\{x\in\mathbb{R}^D:\ |\Pi_K(x)|>1\}.
\]
\begin{wrapfigure}{r}{0.45\textwidth}
    \centering
    \includegraphics[width=0.45\textwidth]{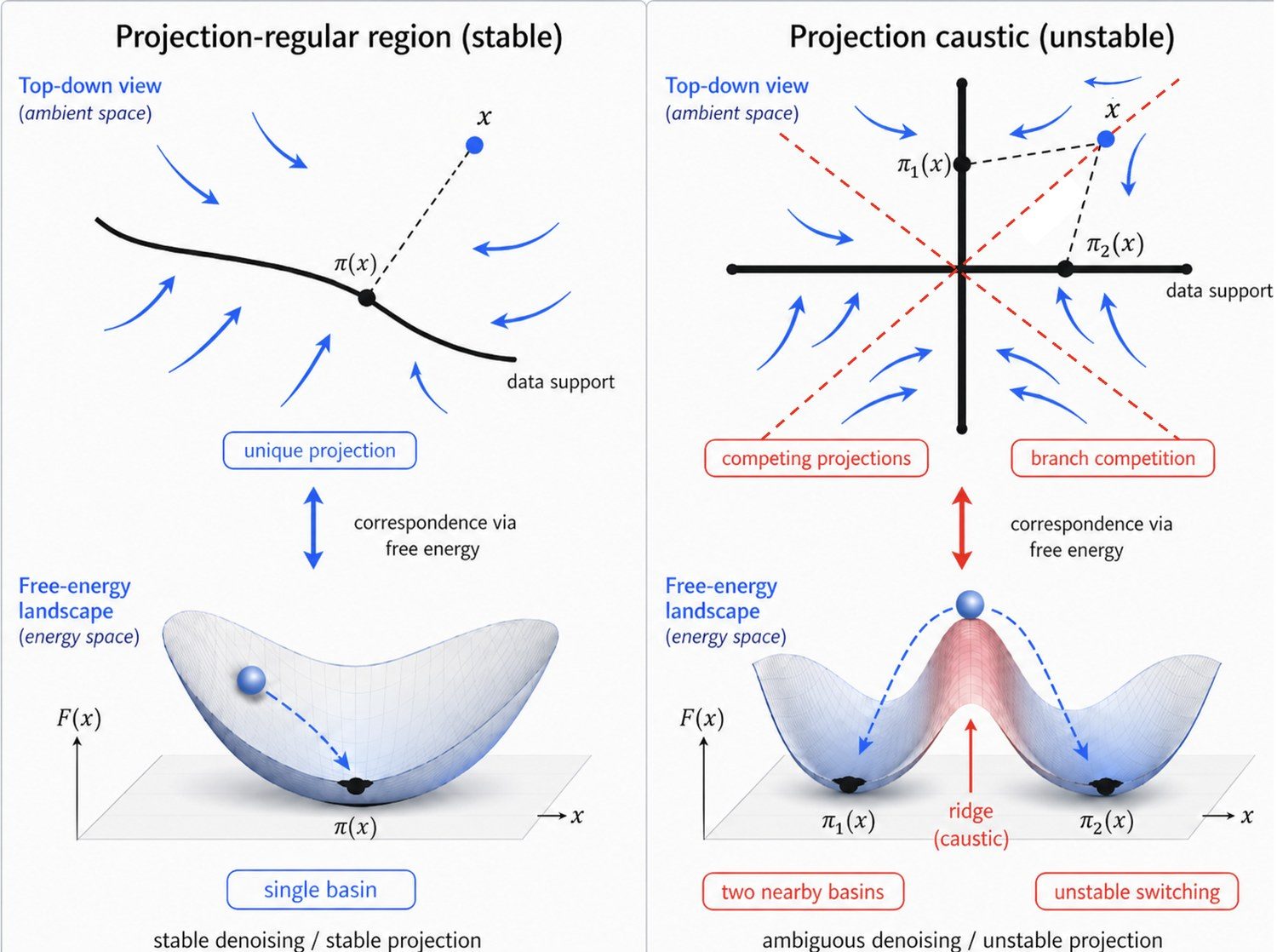}
    \caption{
    Projection-regular versus caustic regimes. 
    }
    \label{fig:projection_regular_vs_caustic}
\end{wrapfigure}

The set \(\mathcal{C}(K)\) is the locus where projection uniqueness fails. Geometrically, it plays the role of a medial-axis or cut-locus type singular set~\citep{blum1967medial,federer1959curvature}. Dynamically, it is the region where multiple nearest-point explanations compete, so that small perturbations of \(x\) may cause abrupt changes in the dominant descent direction of the free energy (See Figure~\ref{fig:projection_regular_vs_caustic}). In this sense, projection caustics provide a natural geometric mechanism for symmetry breaking and phase transitions in diffusion sampling.

When $K$ is a smooth embedded submanifold with positive reach, this framework is closely related to the tubular-neighbourhood picture of the previous work by \citet{yaguchi2025geometry}: the projection is single-valued in a neighbourhood of $K$, and the free energy landscape is locally governed by the normal geometry of the manifold.
The advantage of the present formulation is that it continues to make sense even when \(K\) is non-smooth. 
\begin{figure}[t]
    \centering
    \setlength{\abovecaptionskip}{4pt}
    \setlength{\belowcaptionskip}{0pt}

    \begin{subfigure}[t]{\linewidth}
        \centering
        \includegraphics[width=\linewidth,keepaspectratio]%
            {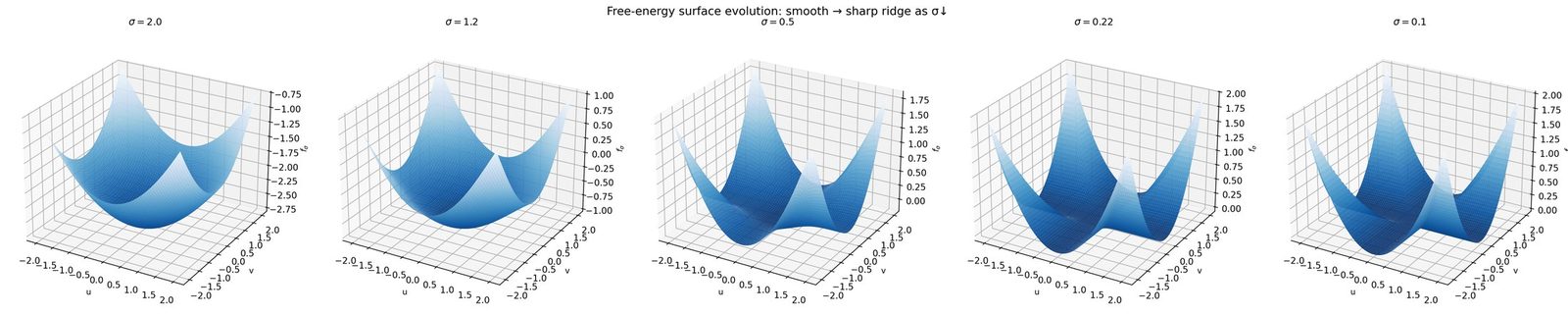}
        \caption{Free energy surfaces.}
        \label{fig:cross-fe-surfaces}
    \end{subfigure}

    \vspace{2ex}

    \begin{subfigure}[t]{\linewidth}
        \centering
        \includegraphics[width=0.95\linewidth,keepaspectratio]%
            {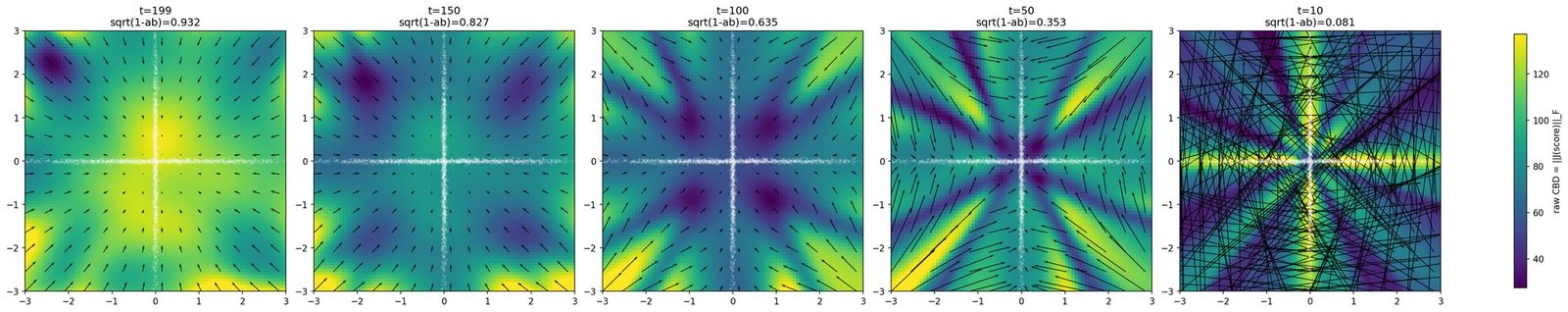}
        \caption{CBD and the score vector field over denoising time.}
        \label{fig:cross-cbd-raw}
    \end{subfigure}


    \caption{
    Transverse-cross reproduction for $K=\{(x_1,x_2)\in\mathbb R^2:x_1x_2=0\}$.
    As the noise level decreases, the free energy develops a ridge-like branch-competition geometry near the projection caustic.
    The raw CBD fields concentrate around the corresponding switching structure shown in the proxy.
    }
    \label{fig:cross-free-energy-part1}
    \vspace{-0.7em}
\end{figure}

\begin{figure}[t]
    \centering
    \setlength{\abovecaptionskip}{4pt}
    \setlength{\belowcaptionskip}{0pt}
    \begin{subfigure}[t]{0.64\linewidth}
        \centering
        \includegraphics[width=\linewidth,keepaspectratio]%
            {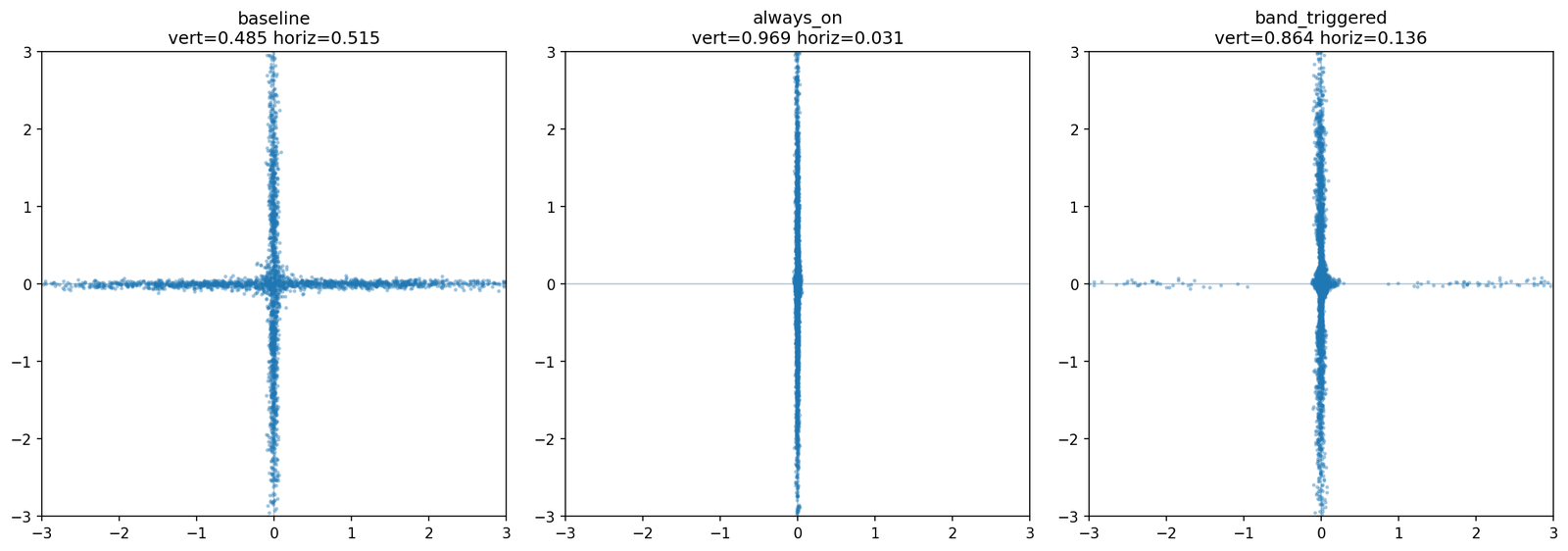}
        \caption{Branch-selection outcomes.}
        \label{fig:cross-control-comparison}
    \end{subfigure}\hfill
    \begin{subfigure}[t]{0.33\linewidth}
        \centering
        \includegraphics[width=0.8\linewidth,keepaspectratio]%
            {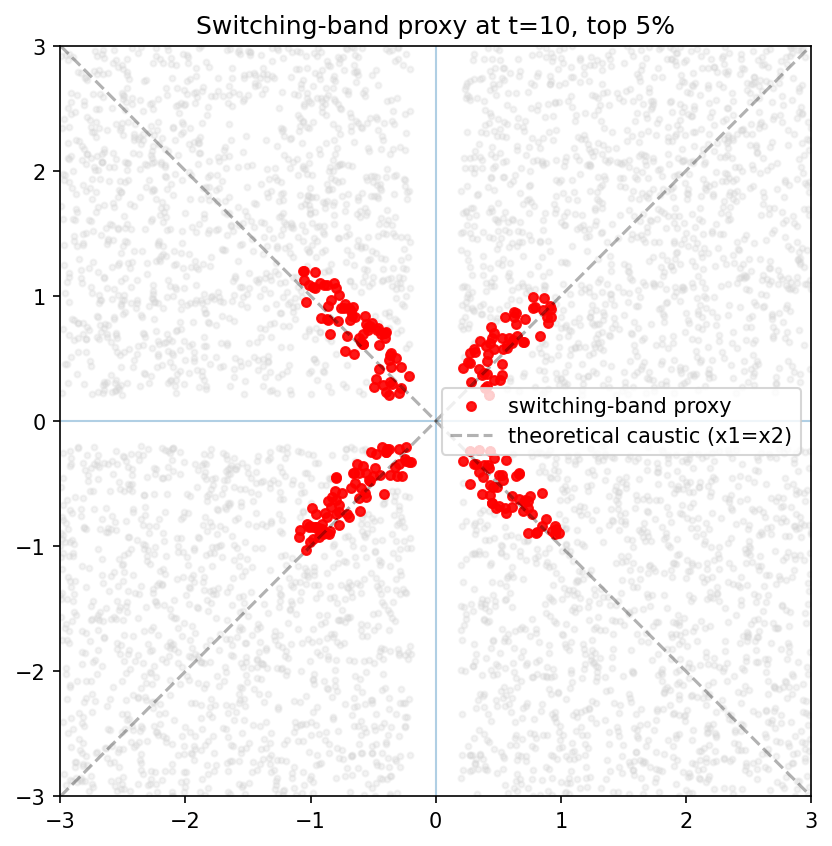}
        \caption{CBD switching-band proxy.}
        \label{fig:cross-switch-band}
    \end{subfigure}

    \vspace{2ex}

    \begin{subfigure}[t]{0.64\linewidth}
        \centering
        \includegraphics[width=\linewidth,keepaspectratio]%
            {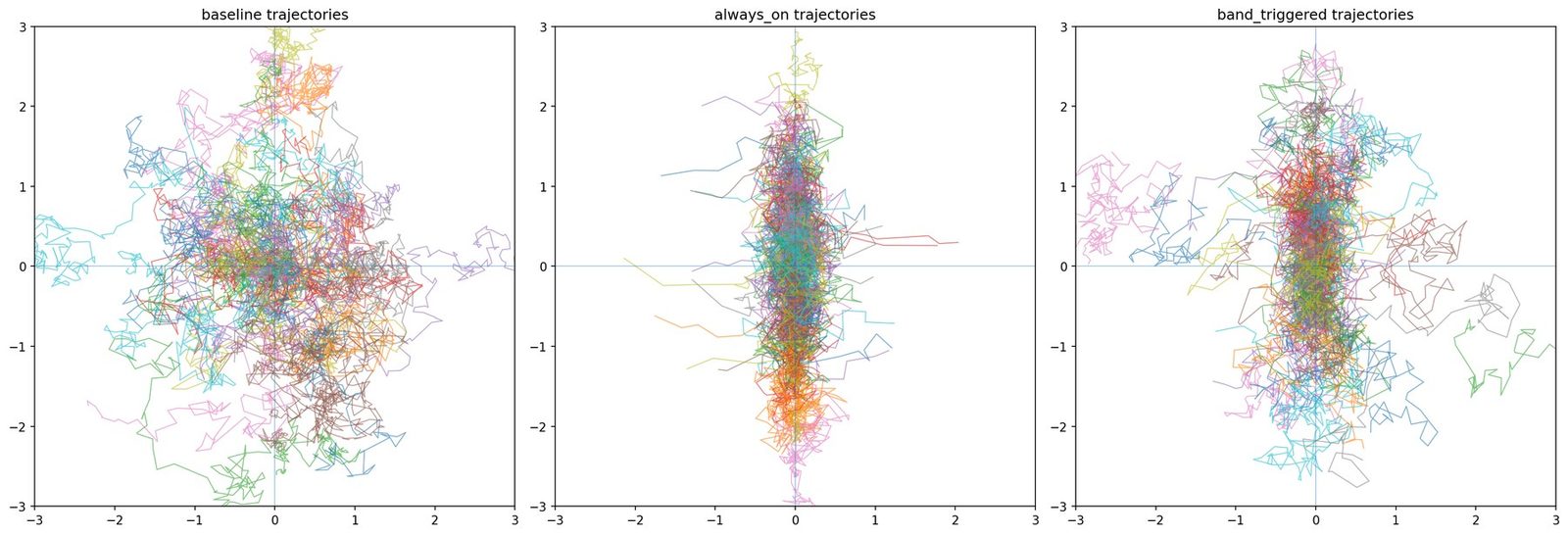}
        \caption{Reverse trajectories under baseline, always-on, and band-triggered control.}
        \label{fig:cross-trajectories}
    \end{subfigure}\hfill
    \begin{subfigure}[t]{0.34\linewidth}
        \centering
        \includegraphics[width=\linewidth,keepaspectratio]%
            {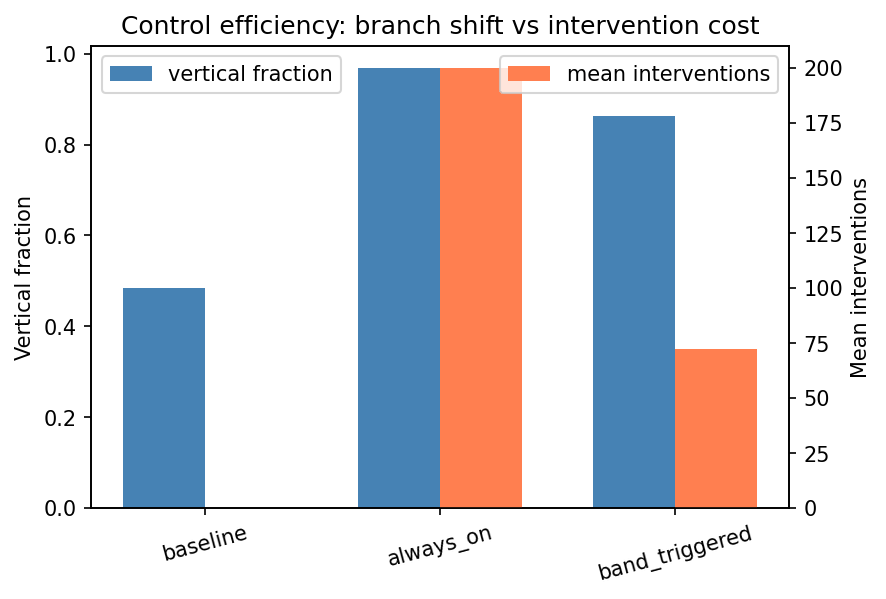}
        \caption{Control efficiency.}
        \label{fig:cross-control-efficiency}
    \end{subfigure}

    \caption{
    Transverse-cross reproduction.
    Using this high-CBD region as an intervention trigger recovers 86\% of the vertical-branch bias of always-on control while applying a vertical-branch-biased perturbation only near the CBD switching-band.
    }
    \label{fig:cross-free-energy-part2}
    \vspace{-0.7em}
\end{figure}



\subsection{The asymptotic expansion of free energy at generic point}
We first analyze the projection-regular regime, where the nearest-point map is single-valued.
 This serves as the baseline case: the free energy is controlled by a single minimizer, and no branch competition occurs.
 In particular, this regime clarifies which part of the score is stable and which effects only appear as lower-order corrections.
 In this regular regime, no sharp instability is expected, because the leading contribution comes from a unique nearest point.
 The effects of density variation and local geometry appear only in lower-order terms.

Assume that
$x$ lies in a projection-regular region, that is $x\in \mathcal{U}(K)$ where the nearest-point projection
$y_*=\Pi_K(x)$ is uniquely defined. Assume that $y^*$ contains in the following $k$-dimensional smooth submanifold $M$:
Let $M\subset\mathbb R^D$ be a $C^\infty$ embedded submanifold and let
the measure on $M$ be $d\mu_M=a\,d \vol_M$ with $a\in C^\infty(M)$, $a>0$.
Then we obtain the following asymptotic expansion of the free energy:
\begin{thm}\label{thm: regular}
The free energy admits the asymptotic expansion
\begin{equation}\label{generic_expansion}
F_\sigma(x)=
\frac12\|x-y^* \|^2
-\sigma^2\log a(y_*)
+\frac{\sigma^2}{2}\log\det(I-A_\nu)
+\frac{D-k}{2}\sigma^2\log(2\pi\sigma^2)
+O(\sigma^4),
\end{equation}
as $\sigma\to 0$.
Here, $A_\nu:T_{y_*}M\to T_{y_*}M$ denotes the shape operator in the normal direction
$\nu=x-y_*$ (see Appendix \ref{append: smth_sbmfld}).
In particular, $F_\sigma(x)=\frac12\operatorname{dist}(x,M)^2+O(\sigma^2),$
so at generic points the leading-order free energy landscape is governed solely by
the unique projection distance.
\end{thm}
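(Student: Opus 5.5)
This is a Laplace-method computation. We localise the Gaussian integral defining $p_\sigma(x)$ near the unique minimiser $y_*$ of $\phi(y):=\frac12\|x-y\|^2$ on $M$. Then we expand in normal coordinates on $M$ centred at $y_*$, and finally take $-\sigma^2\log$.

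\textbf{Step 1: localisation.} Since $y_*$ is the unique nearest point, assume $K$ agrees with $M$ near $y_*$, and use compactness or properness of $K$. Then for every neighbourhood $V$ of $y_*$ there is $\delta>0$ with $\phi(y)\ge \phi(y_*)+\delta$ for $y\in K\setminus V$. Hence the contribution of $K\setminus V$ to $p_\sigma(x)$ is bounded by
\[
(2\pi\sigma^2)^{-D/2}e^{-\phi(y_*)/\sigma^2}e^{-\delta/\sigma^2}.
\]
This is exponentially small relative to the main term, so it does not affect any power of $\sigma$.

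\textbf{Step 2: local Laplace integral.} Parametrise $M$ near $y_*$ by $u\in T_{y_*}M\cong\mathbb R^k$ via the exponential map or a graph chart $y(u)=y_*+u+\frac12 \mathrm{II}(u,u)+O(|u|^3)$. Write $d\mu_M=a(y(u))J(u)\,du$ with $J(0)=1$. Differentiating $\phi(y(u))=\frac12\|x-y_*\|^2-\langle \nu,y(u)-y_*\rangle+\frac12\|y(u)-y_*\|^2$ gives two facts:
\begin{itemize}
\item the first-order term vanishes because $\nu=x-y_*\perp T_{y_*}M$;
\item the Hessian at $u=0$ is $I-\langle\nu,\mathrm{II}(\cdot,\cdot)\rangle=I-A_\nu$.
\end{itemize}
By minimality of $y_*$ this Hessian is positive semidefinite. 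I would assume nondegeneracy, $\det(I-A_\nu)>0$, which holds for $x$ off the focal set. This is implicit in the statement, since otherwise the $\log\det$ term is infinite.

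The standard Laplace expansion then gives
\[
\int e^{-\phi(y(u))/\sigma^2}a J\,du=e^{-\phi(y_*)/\sigma^2}(2\pi\sigma^2)^{k/2}\det(I-A_\nu)^{-1/2}\big(a(y_*)+c_1\sigma^2+O(\sigma^4)\big).
\]
The $\sigma^2$ correction $c_1$ collects the Hessian of $a$, $J$, and the cubic and quartic Taylor terms of $\phi$. Odd-order terms vanish by Gaussian symmetry, so the error is in integer powers of $\sigma^2$.

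\textbf{Step 3: take logarithms.} Multiply by $(2\pi\sigma^2)^{-D/2}$ and apply $-\sigma^2\log$. This yields
\[
\tfrac12\|x-y_*\|^2-\sigma^2\log a(y_*)+\tfrac{\sigma^2}{2}\log\det(I-A_\nu)+\tfrac{D-k}{2}\sigma^2\log(2\pi\sigma^2)-\sigma^2\log\!\big(1+\tfrac{c_1}{a(y_*)}\sigma^2+O(\sigma^4)\big).
\]
The last term is $O(\sigma^4)$, which gives \eqref{generic_expansion}. The final claim, $F_\sigma=\frac12\dist(x,M)^2+O(\sigma^2)$ up to the displayed $\sigma^2\log\sigma$ term, follows because $\|x-y_*\|=\dist(x,M)$.

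\textbf{Main obstacle.} The delicate step is the Hessian identification together with the uniform control of the Laplace remainder. I would carry out the Hessian computation in a graph chart over $T_{y_*}M$, where $y(u)=(u,h(u))$ with $Dh(0)=0$. In that chart $\partial^2_u\phi(0)=I-\langle\nu,D^2h(0)\rangle$, and $D^2h(0)$ is exactly the second fundamental form defining $A_\nu$ in Appendix~\ref{append: smth_sbmfld}. The remainder bound is then a standard Taylor-with-remainder argument on a ball of radius $\sigma^{1-\epsilon}$, and Step 1 handles the complement.
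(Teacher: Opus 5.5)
Your proposal is correct and is essentially the paper's own argument: the proof of Theorem~\ref{thm:submanifold-laplace} also uses a graph chart over $T_{y_*}M$, identifies the tangential Hessian as $I-A_\nu$, evaluates the rescaled Gaussian integral and applies $-\sigma^2\log$. Your additions improve on it: you state the hypothesis $\det(I-A_\nu)>0$ explicitly (uniqueness of $y_*$ alone does not give it, and the paper gets it only by restricting to a small tubular neighbourhood), you justify the relative $O(\sigma^2)$ error by odd-order cancellation, and you note that the ``in particular'' claim holds only up to the $\frac{D-k}{2}\sigma^2\log(2\pi\sigma^2)$ term. One wording fix: the annulus $\sigma^{1-\epsilon}\le|u|$ inside the chart is controlled by the quadratic lower bound $\phi(y(u))-\phi(y_*)\ge c|u|^2$ from nondegeneracy, not by Step~1, which only handles the complement of a fixed neighbourhood.
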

\begin{proof}[Sketch of a proof]
By the Laplace method, the integral defining $p_\sigma(x)$ localises near the unique nearest point $y^*=\Pi_K(x)$. Expanding the phase around $y^*$ and evaluating the resulting Gaussian integral yields the leading term $\tfrac12\|x-y^*\|^2$, while all other contributions are either exponentially small or higher-order in $\sigma$. This gives the expansion.
\end{proof}
\begin{remark}
Differentiating the asymptotic expansion \eqref{generic_expansion}, we recover \cite[Theorem 4.1]{pmlr-v235-stanczuk24a}, which states that the score vector is asymptotically aligned with the normal direction to the data manifold. Theorem \ref{thm: regular} shows that, in the projection-regular regime, the free energy is controlled by a single branch and therefore has no intrinsic switching mechanism which will be discussed in Subsection \ref{subsec:near_caustic}.
The leading term of \eqref{generic_expansion} depends only on the squared distance to the support, while density and curvature enter only at order $\sigma^2$.
\end{remark}

\subsection{Asymptotic behaviour near the projection caustic: responsibility switching, and direction instability}\label{subsec:near_caustic}
The purpose of this subsection is to isolate the mechanism that is absent in the projection-regular regime: competition among several nearest-point branches.
The resulting log-sum-exp structure will explain both branch switching and the sharp change of score direction detected by CBD.

We study the asymptotic behaviour of the free energy near the projection caustic under some conditions. 
Our aim is to explain why score transitions become sharp, how density asymmetry shifts the switching location, and how competing branches generate direction instability. 
This also provides a theoretical interpretation of CBD as a geometric indicator of proximity to the switching set. 

Before stating the main theorem, we explain the settings (see \ref{app:settings} for details).
\paragraph{Settings: Local nondegenerate multi-branch setting near a projection caustic.}
Let $x_0\in\mathcal C(K)$ be a point of the projection caustic with $m$ nearest points $\Pi_K(x_0)=\{y_1,\ldots,y_m\}.$
We assume that, in a neighbourhood $U$ of $x_0$, these nearest-point contributions are represented by
$m$ smooth competing $k$-dimensional branches $M_j\subset K$.  More precisely, each branch admits a smooth local
parametrization
\[
\phi_j(\cdot;x):U_j\subset\mathbb R^k\to M_j,\qquad \phi_j(0;x)=y_j(x),
\]
with smooth positive amplitude $a_j(u,x)$, including the density and Jacobian factors.  We define
\[
\Phi_j(u;x)=\frac12\|x-\phi_j(u;x)\|^2,\qquad S_j(x)=\|x-y_j(x)\|^2 .
\]
We assume that $u=0$ is a uniformly nondegenerate minimum of
$\Phi_j(\cdot;x)$ for every branch, i.e. $H_j(x)=\nabla_u^2\Phi_j(0;x)$ is uniformly positive
definite.  We also assume a uniform separation gap: all points of $K$ outside the chosen branch
neighbourhoods have squared distance at least $\min_j S_j(x)+\delta$ for some $\delta>0$.
These assumptions isolate the ordinary multi-branch projection-caustic regime and exclude additional
focal degeneracies.
\begin{thm}[Local multi-branch log-sum-exp normal form near a projection caustic]\label{thm:log-sum}
Let $x$ be a point of $\mathcal{C}(K)$ that satisfies the above settings.
The free energy admits the asymptotic expansion
\[
F_\sigma(x)
=
-\sigma^2\log \left((2\pi \sigma^2)^{-(D-k)/2}
\sum_{j=1}^m B_j(x)\exp\!\left(-\frac{S_j(x)}{2\sigma^2}\right)
\right)
+ O(\sigma^4),
\]
as $\sigma\to 0$.  Where for each $j$,
\[B_j(x)=\frac{a_j(0, x)}{\sqrt{\det H_j(x)}}.\]
Here $a_j$ denotes the local amplitude factor coming from the density and the Jacobian of the branch parametrization.
In particular, near the projection caustic, the free energy is governed at leading order by a log-sum-exp competition among the local nearest-point branches.
\end{thm}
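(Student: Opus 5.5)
The plan is to split the defining integral of $p_\sigma(x)$ according to the branch neighbourhoods, apply the Laplace method on each branch, and absorb everything else into the error term.

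\textbf{Step 1: partition of unity.} Using the separation assumption, I would write
\[
p_\sigma(x)=\sum_{j=1}^m I_j(x)+R(x),\qquad I_j(x)=(2\pi\sigma^2)^{-D/2}\int_{U_j}e^{-\Phi_j(u;x)/\sigma^2}a_j(u,x)\chi_j(u)\,du .
\]
Here $\chi_j$ are cutoffs equal to $1$ near $u=0$. The term $R(x)$ collects the contributions of $K$ outside the branch neighbourhoods, together with the regions where $\chi_j\ne1$. On the first set the squared distance is at least $\min_j S_j+\delta$ by the uniform gap. On the second, nondegeneracy of the minimum makes $\Phi_j(u;x)-\tfrac12 S_j(x)$ bounded below by a positive constant. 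Since $p_0$ is a probability measure, this gives
\[
R=O\!\left(\sigma^{-D}e^{-(\min_j S_j+\delta')/2\sigma^2}\right)
\]
for some $\delta'>0$.

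\textbf{Step 2: Laplace method on each branch.} On branch $j$ the phase has a nondegenerate minimum $\Phi_j(0;x)=\tfrac12S_j(x)$ with Hessian $H_j(x)$. The $k$-dimensional Laplace expansion, uniform in $x\in U$ because the nondegeneracy is uniform, gives
\[
I_j=(2\pi\sigma^2)^{-D/2}(2\pi\sigma^2)^{k/2}\,\frac{a_j(0,x)}{\sqrt{\det H_j(x)}}\,e^{-S_j/2\sigma^2}\bigl(1+c_j(x)\sigma^2+O(\sigma^4)\bigr).
\]
This is exactly the Gaussian-integral computation behind Theorem~\ref{thm: regular}, carried out branch by branch. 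The prefactor is $(2\pi\sigma^2)^{-(D-k)/2}B_j(x)$.

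\textbf{Step 3: take the logarithm and factor out the sum.} Summing over $j$, I would write
\[
p_\sigma=(2\pi\sigma^2)^{-(D-k)/2}\Bigl(\sum_j B_je^{-S_j/2\sigma^2}\Bigr)(1+\varepsilon).
\]
The relative error is
\[
\varepsilon=\frac{\sum_j B_je^{-S_j/2\sigma^2}\,O(\sigma^2)}{\sum_j B_je^{-S_j/2\sigma^2}}+\frac{R}{\sum_j(\cdots)}.
\]
Because every $B_j$ and every term in the sum is positive, the first part is a convex combination of $O(\sigma^2)$ quantities, so it is $O(\sigma^2)$ uniformly. The key point is that positivity prevents cancellation, so no branch's error can be amplified by the others. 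The second part is exponentially small, since the denominator is at least $B_{j_*}e^{-\min S_j/2\sigma^2}$ with $B_{j_*}$ bounded below. Then
\[
-\sigma^2\log(1+\varepsilon)=O(\sigma^4),
\]
which gives the claimed expansion.

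\textbf{Main obstacle: uniformity.} The step I expect to be hardest is obtaining the Laplace remainder uniformly in $x$ near $x_0$, including the dependence of $\phi_j$ on $x$. For that I would use the uniform positive-definiteness of $H_j$ and the Morse lemma with parameters to reduce each $I_j$ to a Gaussian integral with smooth amplitude. Standard stationary-phase bounds then give constants depending only on finitely many derivatives of $a_j$ and $\phi_j$ on a compact neighbourhood.

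A subtler point is that the $O(\sigma^2)$ correction is relative to $I_j$ only. Branches with $S_j$ slightly larger than the minimum are kept explicitly in the sum rather than being discarded. This is what makes the statement a genuine normal form valid in the whole switching band, not only at $x_0$.
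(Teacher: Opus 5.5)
Your proposal is correct and follows essentially the same route as the paper's proof in Appendix~\ref{app:caustic_free_energy}. Both split $p_\sigma$ into branch integrals plus an off-branch remainder that condition (iii) makes exponentially small, apply a uniform Laplace expansion on each branch using the uniform positive-definiteness of $H_j$, and take $-\sigma^2\log$ of $A_\sigma(x)\bigl(1+O(\sigma^2)\bigr)$; your explicit positivity (convex-combination) argument for merging the branchwise $O(\sigma^2)$ errors and the remainder into one relative $O(\sigma^2)$ factor spells out a step the paper compresses into ``summing over $j$ and absorbing the exponentially small remainder.''
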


\begin{proof}[Sketch of a proof]
Split $p_\sigma(x)$ into the $m$ local branch integrals and the off-branch remainder. The latter is exponentially negligible by separation condition (iii), whereas each local term admits a uniform Laplace expansion; summing them and applying $-\sigma^2\log$ yields the claimed log-sum-exp formula for $F_\sigma(x)$.
A complete proof is given in Appendix \ref{app:caustic_free_energy}.
\end{proof}
By differentiating the above result we obtain:
\begin{cor}\label{cor:conv_sum}
Under the assumptions of Theorem~\ref{thm:log-sum}, define
\[
w_j(x,\sigma)
:=
\frac{B_j(x)\exp\!\left(-\frac{S_j(x)}{2\sigma^2}\right)}
{\sum_{\ell=1}^m B_\ell(x)\exp\!\left(-\frac{S_\ell(x)}{2\sigma^2}\right)}, \quad V_j(x,\sigma)
:=
-\frac{1}{2\sigma^2}\nabla_x S_j(x)+\nabla_x\log B_j(x).
\]
Then
\[
\sum_{j=1}^m w_j(x,\sigma)=1,
\qquad
0<w_j(x,\sigma)<1,
\]
and the score admits the expansion
\[
\nabla_x \log p_\sigma(x)
=
\sum_{j=1}^m
w_j(x,\sigma)
V_j(x,\sigma)
+O(1),
\]
uniformly on $U$ as $\sigma \to 0$. In other words, the score is a softmax-weighted convex combination of the branchwise directions, with the dominant branch switching sharply near the projection caustic.
\end{cor}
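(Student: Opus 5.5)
The plan is to differentiate the log-sum-exp normal form of Theorem~\ref{thm:log-sum}. The one subtlety is that an $O(\sigma^4)$ remainder for $F_\sigma$ does not automatically differentiate. I therefore work at the level of $p_\sigma$ itself, where the Laplace expansion holds uniformly together with its $x$-derivatives.

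The weight properties come first and are immediate. The hypotheses give $B_j(x)>0$, since $a_j>0$ and $H_j$ is positive definite, so every numerator in $w_j$ is strictly positive. The $w_j$ are therefore normalised positive numbers. With $m\ge 2$ each $w_j$ is strictly less than $1$; the case $m=1$ is the regular regime, where the bound $w_1<1$ degenerates to $w_1=1$, and I will note this.

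For the score expansion, I split
\[
p_\sigma(x)=\sum_{j=1}^m I_j(x,\sigma)+R(x,\sigma),
\]
exactly as in the proof of Theorem~\ref{thm:log-sum}. Here
\[
I_j=(2\pi\sigma^2)^{-D/2}\int_{U_j} e^{-\Phi_j(u;x)/\sigma^2}a_j(u,x)\,du .
\]
Since $\Phi_j(0;x)=S_j(x)/2$, the uniform Laplace method gives
\[
I_j=(2\pi\sigma^2)^{-(D-k)/2}e^{-S_j/(2\sigma^2)}B_j(x)\bigl(1+\sigma^2 c_j(x,\sigma)\bigr),
\]
with $c_j$ smooth and bounded uniformly on $U$. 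I then differentiate under the integral: $\nabla_x I_j$ is again a Laplace integral. The key identity is $\nabla_x\Phi_j(u;x)=x-\phi_j(u;x)$ up to terms from the $x$-dependence of $\phi_j$; those terms vanish at $u=0$ by the first-order condition, because $\phi_j(0;x)=y_j(x)$ is a critical point. This yields
\[
\nabla_x I_j = I_j\Bigl(-\tfrac{1}{2\sigma^2}\nabla S_j+\nabla\log B_j+O(1)\Bigr) = I_j\bigl(V_j+O(1)\bigr).
\]
Alternatively, one can apply the Laplace expansion directly to the integrand $(y-x)/\sigma^2$ weighted against the Gaussian. In that form the leading term is $(y_j-x)/\sigma^2=-\nabla S_j/(2\sigma^2)$, by the envelope identity $\nabla S_j=2(x-y_j)$. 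The $\nabla\log B_j$ and $O(1)$ corrections come from the next Laplace order, namely the $\sigma^2$ shift of the Gaussian mean times $1/\sigma^2$. The remainder $R$ and $\nabla R$ are bounded by $\sigma^{-D-2}e^{-(\min S_j+\delta)/(2\sigma^2)}$, using the separation gap (and compactness or a tail bound on $K$). Relative to $\sum_j I_j$ this is exponentially small, hence $o(1)$ after division.

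Assembling, I write
\[
\nabla\log p_\sigma=\frac{\sum_j I_j(V_j+O(1))+\nabla R}{\sum_j I_j+R}.
\]
The factors $(1+\sigma^2c_j)$ perturb the normalised weights $I_j/\sum_\ell I_\ell$ from $w_j$ by only $O(\sigma^2)$. Replacing the true weights by $w_j$ therefore costs $O(\sigma^2)\cdot|V_j|=O(\sigma^2)\cdot O(\sigma^{-2})=O(1)$, which is exactly the allowed error. Convexity of the weights keeps every bound uniform, with no dependence on which branch dominates. This uniformity, and the fact that no division by a single small $I_j$ ever occurs, is the point of the argument.

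The main obstacle is the uniformity of the derivative Laplace expansion in $x$. The remainder constants must be independent of $x\in U$ and must not blow up where branches exchange dominance. I would handle this by using the uniform nondegeneracy of $H_j$ and the smoothness of $(\phi_j,a_j)$ on a compact closure of $U$ to get $x$-uniform Laplace error bounds for integrands with polynomial prefactors. That reduces the whole argument to branchwise estimates that are insensitive to the softmax competition.
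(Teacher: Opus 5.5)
Your argument is correct, but it is organised differently from the paper's. The paper obtains the corollary in one line by differentiating the log-sum-exp normal form for $F_\sigma$. There, the term $\sum_j w_jV_j$ is exactly $\nabla_x\log\sum_j B_j e^{-S_j/(2\sigma^2)}$, and the $O(\sigma^4)$ remainder in $F_\sigma$ is tacitly assumed to contribute only $O(1)$ after applying $-\sigma^{-2}\nabla_x$.

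You never differentiate an asymptotic expansion. You Laplace-expand $p_\sigma$ and $\nabla_x p_\sigma$ separately, the latter as the Gaussian average of $(y-x)/\sigma^2$, whose $(y-y_j)$ part is $O(\sigma^2)$ relative to $I_j$ by odd-moment cancellation. You then form the quotient, using that $O(\sigma^2)$ relative errors in the weights cost only $O(\sigma^2)\cdot O(\sigma^{-2})=O(1)$.

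This supplies exactly the step the paper skips. Inside the switching band, the $x$-gradient of the free-energy remainder is in general only $O(\sigma^2)$, not $O(\sigma^4)$, because $\nabla_x w_j=w_j\bigl(V_j-\sum_\ell w_\ell V_\ell\bigr)$ has size $\sigma^{-2}$ there. Controlling it along the paper's route would also need some $x$-derivative bound on the next-order Laplace coefficients $c_j$. Your quotient argument only needs the $c_j$ to be bounded, and it makes uniformity across the band transparent. The paper's route, in turn, buys brevity and the structural reading of the score as the gradient of a log-sum-exp.

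Two small remarks. First, writing $\nabla_x I_j=\sigma^{-2}\int_{V_j}(y-x)(2\pi\sigma^2)^{-D/2}e^{-\|x-y\|^2/(2\sigma^2)}p_0(y)\,d\mu_K(y)$ over fixed neighbourhoods $V_j$ is cleaner than your chart computation. It avoids the terms from $\nabla_x a_j(u,x)$ and from the $x$-dependence of the charts. Second, since $\nabla_x\log B_j=O(1)$, both arguments really establish $\nabla_x\log p_\sigma=-\frac{1}{2\sigma^2}\sum_j w_j\nabla_x S_j+O(1)$. In the band, the $\nabla_x\log B_j$ terms therefore lie below the resolution of the error term.
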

\begin{ex}[Two competing branches, \(m=2\)]
In the two-branch case, Corollary~1 gives
\begin{align*}
\nabla_x \log p_\sigma(x)
=
w_1(x,\sigma)
V_1
+
w_2(x,\sigma)
V_2
+O(1),
\end{align*}
where
$
w_1(x,\sigma)
=
\frac{1}{1+\exp\!\left(\frac{S_1(x)-S_2(x)}{2\sigma^2}-\log\!\frac{B_1(x)}{B_2(x)}\right)},
\qquad
w_2(x,\sigma)=1-w_1(x,\sigma).
$
If \(S_1(x)<S_2(x)\), then \(w_1(x,\sigma)\to 1\) and \(w_2(x,\sigma)\to 0\) as \(\sigma\to 0\);
if \(S_2(x)<S_1(x)\), the opposite holds. Thus the dominant branch switches within an
\(O(\sigma^2)\)-thin layer, which explains the sharpness of the score transition. 

Hence, whenever \(V_1\) and \(V_2\) are not collinear, a rapid variation of \(w_1\) produces a rapid
reorientation of the score direction.
The switching set is shifted from the geometric balance locus (the projection caustic) $S_1=S_2$
to
\[
S_1-S_2=2\sigma^2\log\!\left(\frac{B_1}{B_2}\right),
\]
showing that the amplitude asymmetry encoded in \(B_1,B_2\) affects the transition only at lower order.

\end{ex}
\begin{remark}
The statements of Theorem \ref{thm:log-sum} and Corollary \ref{cor:conv_sum} show that the ridge-like structure
along projection caustics observed in Figure~\ref{fig:cross-free-energy-part1} is not a low-dimensional artifact,
but persists in high dimensions. Indeed, the multi-branch log-sum-exp structure
implies that the free energy landscape develops a sharp competition ridge along
the switching set, where multiple nearest-point branches coexist.

In the low-noise regime, these weights concentrate on the branches minimizing the local action $S_j(x)$. Hence, away from the switching set where the minimizer is unique, the score is asymptotically governed by a single nearest-point branch. By contrast, near the projection caustic, two or more actions become comparable, and the soft weights $w_j$ undergo a rapid transition on the scale $S_i(x)-S_j(x)=O(\sigma^2)$. As a consequence, the dominant branch of the score switches sharply, which leads to strong instability of the normalised score direction. 

For a fixed noise level \(\sigma>0\), the actual branch-switching band is the finite-noise switching set $\mathcal S_\sigma
:=\{x:\ w_i(x,\sigma)=w_j(x,\sigma)
\text{ for some competing branches } i\neq j\}.$
Thus \(\mathcal S_\sigma\) generally differs from \(C(K)\) by an \(O(\sigma^2)\) displacement caused by branch amplitudes, density, and local Jacobian factors. 

This switching mechanism provides the geometric basis for CBD. Indeed, it is designed to detect rapid variation of the normalised score direction, and such variation is amplified near the projection caustic.
Therefore, CBD peaks especially in low-noise regime should be interpreted as signatures of proximity to the switching set.
\end{remark}

\section{Numerical analysis: from detection to control}\label{sec:CBD}

\subsection{CBD (Critical Boundary Detector) and its trajectory-adapted directional estimator (TAD)}
The theory predicts a directly testable signature: near a projection caustic, the dominant branch in the log-sum-exp free energy changes rapidly, and therefore the \emph{normalised} denoising direction should be locally unstable.
Let $v_t(x)$ denote the score or the scheduler's effective update field and set $u_t(x)=\frac{v_t(x)}{\|v_t(x)\|}.$
We measure this instability by
\begin{equation}
    \mathrm{CBD}(x,t)=\|\nabla_x u_t(x)\|_F .
    \label{eq:cbd_def}
\end{equation}
For large pretrained models we use random-direction finite differences,
\begin{equation}
  \widehat{\mathrm{CBD}}_{\rm FD}(x,t)^2
  =
  \frac1N\sum_{k=1}^N
  \left\|
  \frac{u_t(x+\delta_t z_k)-u_t(x)}{\delta_t}
  \right\|^2,
  \qquad
  z_k\sim\mathcal N(0,I),\quad
  \delta_t=c\sigma_t,
  \label{eq:cbd_fd}
\end{equation}
with $c\in[10^{-3},10^{-1}]$.
When a full Jacobian estimate is too expensive, we use the trajectory-adapted directional (TAD) proxy
\begin{equation}
    \mathrm{CBD}_{\rm TAD}(x_i,t)
    =
    \frac{\|u_t(x_i+h z)-u_t(x_i-h z)\|}{2h},
    \qquad
    z\sim\mathcal N(0,I),
    \label{eq:cbd_tad}
\end{equation}
where $x_i$ is a saved state on a baseline reverse trajectory and $h$ is proportional to the noise scale.
We report the normalised trajectory profile as $\mathrm{CBD}_{\mathrm{TAD}}$; high values mark windows in which the update direction is most sensitive to small state perturbations

\subsection{Pseudo-Online CBD}
As seen in Appendix~\ref{app:cusp}, the key difficulty is that the eventual branch outcome is not determined exactly when the trajectory crosses the projection-caustic-induced switching band, but only shortly after it has passed through that band. Moreover, the CBD activation associated with proximity to the projection caustic is most pronounced in the low-noise regime. 
A state that appears locally stable at its current time may nevertheless become close to a sharp
low-noise switching barrier when probed at a later, lower-noise scale. 
To capture this effect without modifying the sampler online, we introduce a pseudo-online CBD
ratio (Algorithm \ref{alg:pseudo_online_cbd}). We first fix a baseline reverse trajectory and evaluate the trajectory-adapted CBD at each
state using the actual sampler time. We then re-evaluate the same state using a low-noise probe
time, which asks whether a future low-noise free-energy wall is already latent at the current
location. The resulting ratio, or its inverted normalised form, gives a trajectory-level ranking of
states that are upstream of future score-direction instability.
\begin{algorithm}[t]
\caption{Pseudo-online $\mathrm{CBD}_{\mathrm{TAD}}$ signal along a fixed trajectory}
\label{alg:pseudo_online_cbd}
\begin{enumerate}
\item \textbf{Input:} Baseline reverse trajectory $\{(x_i,t_i)\}_{i=0}^{N}$,
low-noise probe time $t_{\rm low}$, perturbation scale $h$, probe direction $z$,
smoothing operator $\mathcal{S}$, and $\varepsilon>0$.

\item For each $i=0,\ldots,N$, compute the actual-time directional instability
\[
A_i =
\mathrm{CBD}_{\mathrm{TAD}}(x_i,t_i).
\]

\item Re-evaluate the same state at the low-noise probe time:
\[
B_i =
\mathrm{CBD}_{\mathrm{TAD}}(x_i,t_{\rm low}).
\]

\item Form the actual-to-low-noise CBD ratio
\[
R_i = \frac{A_i}{B_i+\varepsilon}.
\]
A small value of $R_i$ indicates that $x_i$ is relatively stable at its actual time
but becomes unstable when probed at the low-noise time.

\item Smooth the ratio along the trajectory:
\[
\bar R_i = \mathcal{S}\bigl(\{R_j\}_{j=0}^{N}\bigr)_i .
\]

\item Convert it into an inverted normalised signal:
\[
\widetilde R_i
=
\frac{\bar R_i-\min_{0\leq j\leq N}\bar R_j}
{\max_{0\leq j\leq N}\bar R_j-\min_{0\leq j\leq N}\bar R_j+\varepsilon},
\qquad
S_i = 1-\widetilde R_i .
\]

\item \textbf{Output:} Pseudo-online CBD signal $\{S_i\}_{i=0}^{N}$.
Large values of $S_i$ mark states that are upstream of future low-noise score-direction instability.
\end{enumerate}
\end{algorithm}

\subsection{CBD predicts intervention sensitivity in a pretrained DDPM}
\label{subsec:Cifar_psd}

We first evaluate a pretrained CIFAR-10 DDPM with a pseudo-online protocol.
Along each baseline trajectory we compute the $\mathrm{CBD}_{\mathrm{TAD}}$ profile, perturb the trajectory inside each candidate window by a fixed re-noising intervention, and measure the deviation of the final image from the unperturbed baseline using $\ell_2$ and LPIPS.
If CBD is detecting switching-band geometry, its high-instability windows should be exactly the windows whose perturbations survive to the final sample.

In Figure~\ref{fig:cifar_cbd_main},
the $1-\mathrm{CBD}_{\mathrm{TAD}}$ profile is stable across probe displacements $\Delta\in\{1,3,5\}$, showing that the signal is not a probe-scale artefact.
More importantly, the per-trajectory overlay of normalised CBD and LPIPS shows strong trajectory-level agreement:
across five seeds, Pearson correlations are
$\{0.953,0.908,0.966,0.849,0.964\}$, with mean $\rho=0.928$ and all $p$-value satisfy \(p<10^{-70}\).
Window-aggregated rank statistics are also stable, with mean Spearman $0.909$ for $\ell_2$ and $0.887$ for LPIPS; see Appendix~\ref{app:cifar_seed_robustness}.
Thus CBD does not merely locate a population-level average transition; it ranks sensitive windows on individual reverse trajectories without access to the data geometry.

\begin{figure}[t]
    \centering

    \begin{subfigure}[t]{0.70\linewidth}
        \centering
        \includegraphics[width=\linewidth]%
            {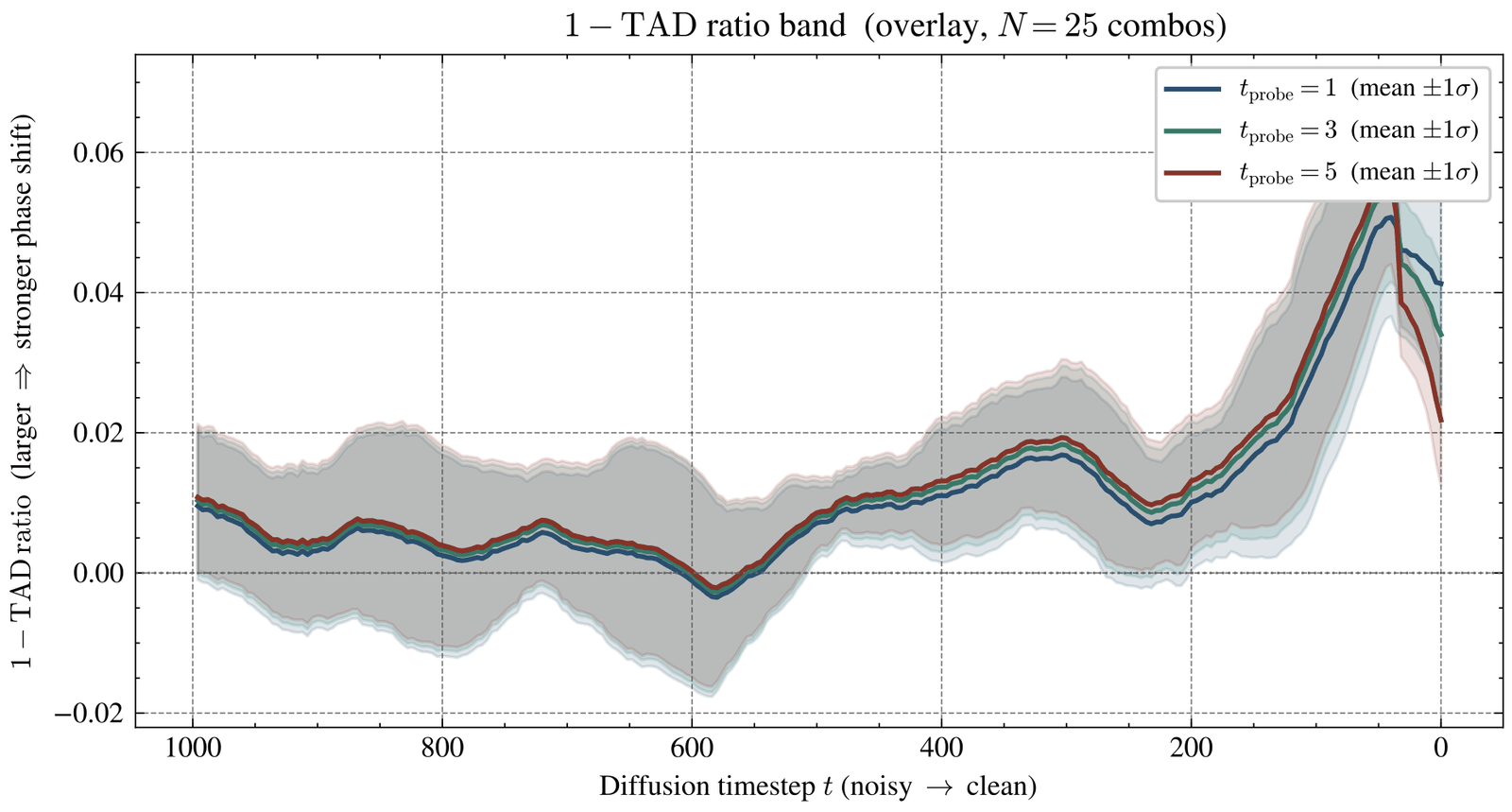}
        \caption{Probe-scale robustness of the $1-\mathrm{CBD}_{\mathrm{TAD}}$ profile.}
        \label{fig:cifar10_band_3probes}
    \end{subfigure}

    \vspace{3pt}

    \begin{subfigure}[t]{\linewidth}
        \centering
        \includegraphics[width=\linewidth]%
            {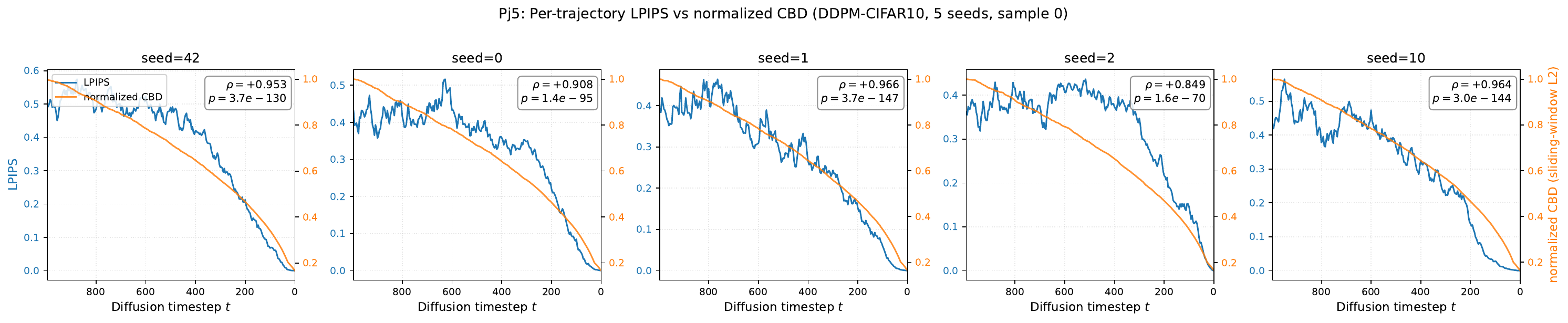}
        \caption{Per-trajectory CBD--LPIPS coupling.}
        \label{fig:cifar10_per_traj}
    \end{subfigure}

    \caption{
    CIFAR-10 DDPM: CBD predicts intervention-sensitive windows.
    (a) The $1-\mathrm{TAD}$ profile is stable across probe displacements
    $\Delta\in\{1,3,5\}$, showing that the detected instability band is not a probe-scale artefact.
    (b) Along five individual reverse trajectories, the normalised CBD proxy tracks downstream LPIPS sensitivity with mean Pearson correlation $\rho=0.928$.
    This is the main evidence that CBD ranks sensitive windows at the trajectory level, not only after population averaging.
    }
    \label{fig:cifar_cbd_main}
    \vspace{-0.8em}
\end{figure}
\subsection{$\mathrm{CBD_{TAD}}$-guided classifier guidance}

To further test whether the detected instability band is actionable, we applied
classifier guidance only inside the $\mathrm{CBD_{TAD}}$-high band on a pretrained
CIFAR-10 DDPM.  We used \texttt{google/ddpm-cifar10-32} with 250 inference
steps and target class 0 (airplane).  The $\mathrm{CBD_{TAD}}$ band was fixed to
$t\in[30,70]$, while a random comparison band of comparable length was placed
at $t\in[180,220]$.  Each condition was evaluated over 25 seed--sample
combinations and compared against both unguided sampling and full-step
classifier guidance.

Across guidance strengths $\gamma\in\{1,2,3,5\}$, full classifier guidance
achieved target accuracy $1.00$ using all 250 intervention steps.  In contrast,
$\mathrm{CBD_{TAD}}$-guided classifier guidance used only 10 intervention steps on average
and still achieved target accuracy $0.96$--$1.00$.  Thus, it recovers essentially
the same class-control effect with only $4\%$ of the intervention steps (Table \ref{tab:cbd_tad_classifier_guidance}).  The
matched random-band control, despite using 11 intervention steps, remained at
baseline accuracy $0.04$ for all $\gamma$.  This shows that the effect is not
explained by the number of guided steps alone, but by their placement in the
$\mathrm{CBD_{TAD}}$ instability band.

The LPIPS values of $\mathrm{CBD_{TAD}}$ and random-band interventions are comparable,
whereas their target accuracies differ dramatically.  This separation indicates
that perceptual displacement alone is not sufficient for semantic control:
intervening near the detected instability band changes the branch commitment,
whereas an equal-cost off-band intervention mainly induces non-semantic
perturbations.  In wall-clock time, $\mathrm{CBD_{TAD}}$ guidance remains close to baseline
sampling, while full classifier guidance is roughly two times slower.  These
results support the view that $\mathrm{CBD_{TAD}}$ identifies a narrow, causally effective
window for semantic control rather than merely a high-variance segment of the
trajectory.
\begin{table}[h]
\centering
\small
\caption{$\mathrm{CBD_{TAD}}$-guided classifier guidance on CIFAR-10 DDPM.
Classifier guidance is applied either on all reverse steps (full\_cg),
only inside the $\mathrm{CBD_{TAD}}$ instability band ($t\in[30,70]$),
or inside a random comparison band ($t\in[180,220]$).
$\mathrm{CBD_{TAD}}$ guidance achieves near-full classifier control using only
$\sim4\%$ of the intervention steps.}
\label{tab:cbd_tad_classifier_guidance}
\begin{tabular}{lcccc}
\toprule
Condition & NIS & Target Acc. & LPIPS & Sec/sample \\
\midrule
baseline         &   0 & 0.040 & 0.000 & 3.94 \\
full\_cg         & 250 & 1.000 & 0.637 & 8.80 \\
$\mathrm{CBD_{TAD}}$ guidance &  10 & 0.960 & 0.600 & 4.02 \\
random-band guidance & 11 & 0.040 & 0.592 & 4.05 \\
\bottomrule
\end{tabular}
\end{table}
\subsection{An instability signal across pretrained models}
\label{subsec:large_models}

The next test is portability.
We apply the same $\mathrm{CBD}_{\mathrm{TAD}}$ estimator, without architecture-specific tuning, to DiT-XL on ImageNet, EDM2-XS on CIFAR-10, and Stable Diffusion~3.5 Medium.
Figure~\ref{fig:3model_overlay} shows a common three-region structure:
a relatively stable high-noise regime, a concentrated intermediate instability band, and re-stabilisation near the clean endpoint.
The shared shape is the empirical signal of the projection-caustic mechanism; the differences are informative rather than nuisance variation.
DiT-XL exhibits the sharpest band, EDM2 a broader plateau, and SD~3.5 a compressed late trajectory consistent with latent flow matching.
Denoiser-derived free energy visualisations in Appendix~\ref{app:free_energy} show matching ridge/plateau structure, closing the loop between the trajectory-level CBD signal and the free energy geometry predicted by Theorem~\ref{thm:log-sum}.
Additional SD~3.5 prompts, the CelebaHQ DDPM, and the five-model diagnostic panels are collected in Appendices~\ref{app:sd35_lpips}, \ref{app:celebahq}, and~\ref{app:pj5_tad}.

\begin{figure}[t]
    \centering

    \begin{subfigure}[t]{0.45\linewidth}
        \centering
        \includegraphics[width=\linewidth]%
            {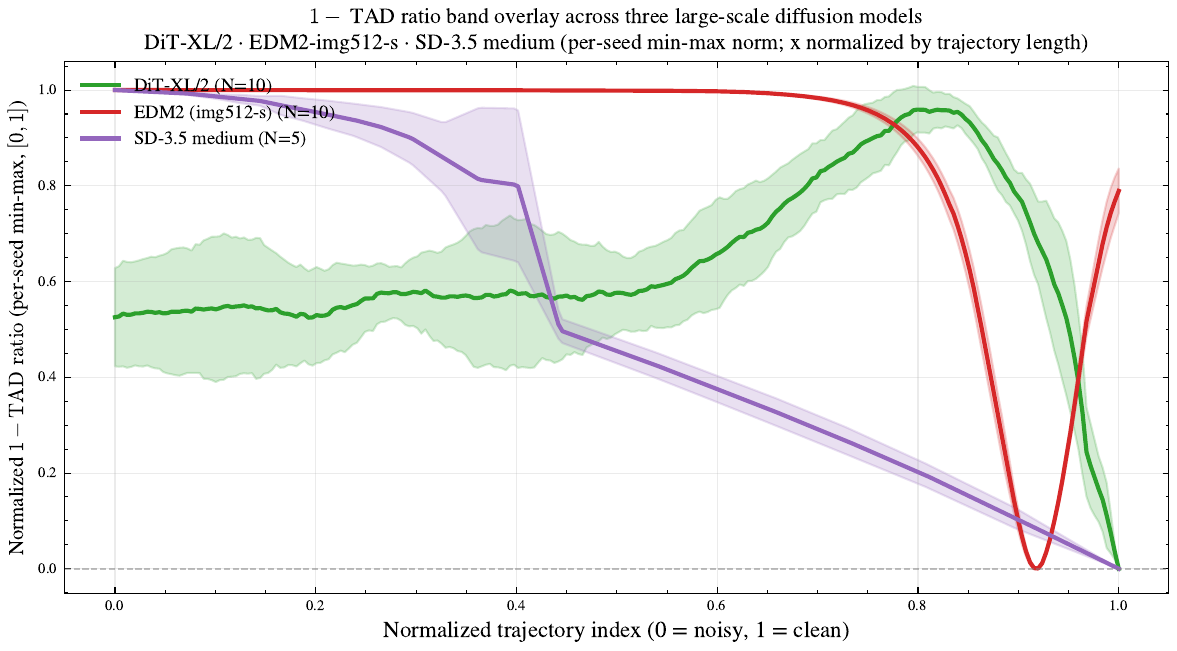}
        \caption{Shared normalised instability profile.}
        \label{fig:3model_overlay}
    \end{subfigure}\hfill
    \begin{subfigure}[t]{0.55\linewidth}
        \centering
        \includegraphics[width=\linewidth]%
            {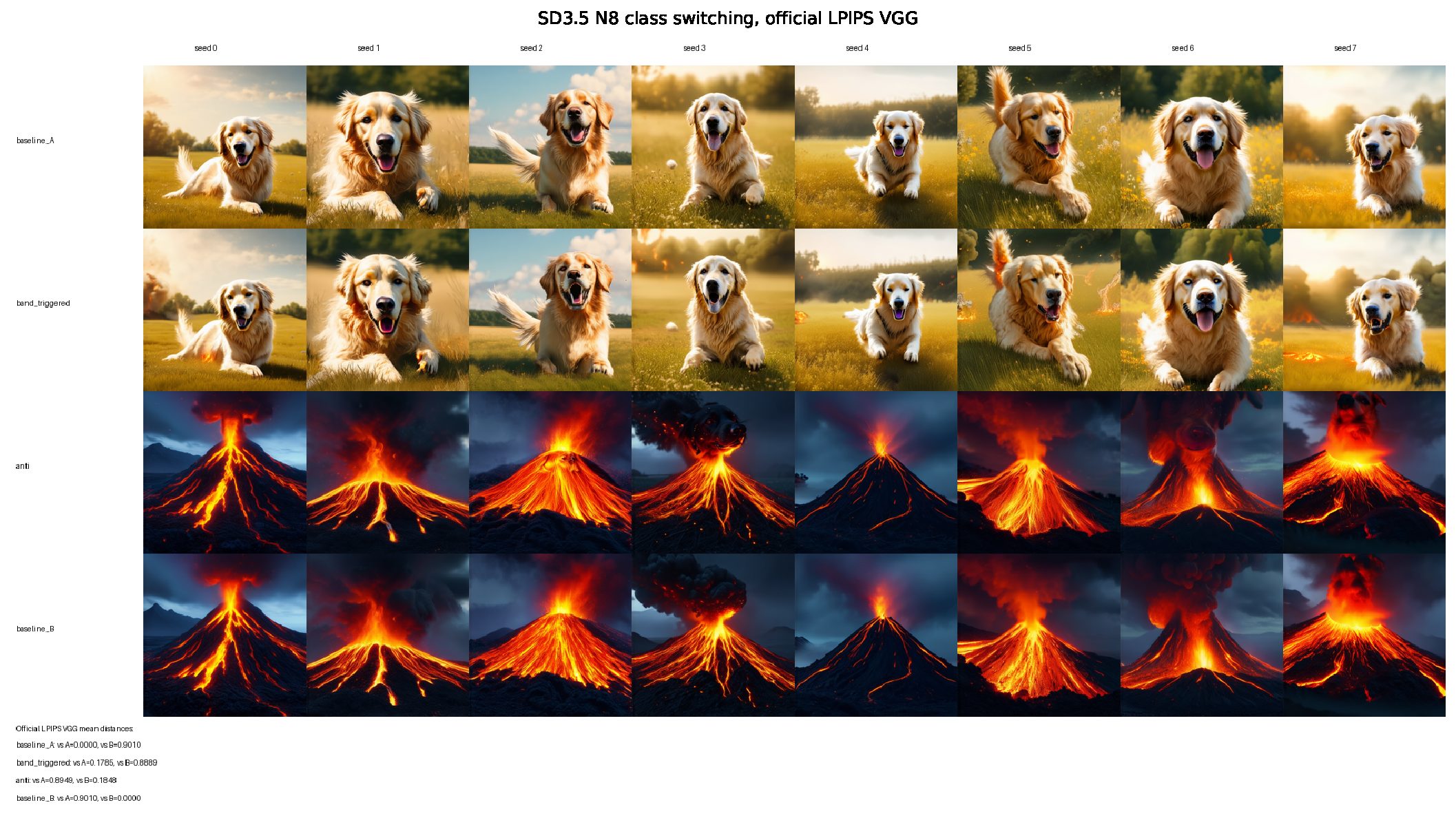}
        \caption{Concept insertion after semantic commitment.}
        \label{fig:sd35_control_compact}
    \end{subfigure}

    \caption{
    From shared diagnosis to phase-aware control.
    (a) After per-run min--max normalisation and trajectory-length normalisation, the same trajectory-adapted $1-\mathrm{CBD}_{\mathrm{TAD}}$ score reveals a shared three-regime pattern across DiT-XL, EDM2-XS, and SD~3.5 Medium: a relatively stable high-noise regime, a concentrated instability band, and re-stabilisation near the clean endpoint. We do not claim that the raw
    magnitudes or exact locations are universal; the band width and location remain model- and scheduler-dependent.
    (b) In SD~3.5 Medium, the $t^\star = \min \left\{ \arg\max_{t}\, (d/dt)[1 - \mathrm{CBD}_\mathrm{TAD}(t)]\right\}$ profile is used as a prompt-switching time, which we interpret as a semantic commitment boundary. The prompt active before this elbow determines the dominant semantic branch or class. A prompt introduced after commitment does not simply replace this class; rather, the new concept is incorporated as an element inside the already selected branch. Thus, when the trajectory is first conditioned on \(\mathcal P_A\) (``a golden retriever running in a meadow'') and then switched to \(\mathcal P_B\) (``a volcanic eruption with lava''), the output remains in the golden-retriever branch while acquiring volcanic or lava-like elements. Conversely, when the trajectory is first conditioned on \(\mathcal P_B\) and then switched to \(\mathcal P_A\), the volcano branch is preserved while dog-like elements are inserted. LPIPS distances to the source-only and target-only baselines quantify this branch preservation and late concept insertion.
}
    \label{fig:universal_signal_and_control}
    \vspace{-0.8em}
\end{figure}

\subsection{CBD predicts intervention sensitivity in SD~3.5 medium}\label{subsec:sd3.5_intervention_sensitivity}
\begin{table}[t]
\centering
\caption{Prompt-wise mean correlation statistics on SD~3.5 Medium, averaged over 5 seeds for each prompt. Here, $P$ and $S$ denote Pearson and Spearman correlation, respectively.}
\label{tab:sd35_promptwise_corr_maintext}
\small
\setlength{\tabcolsep}{4pt}
\begin{tabular}{clcccc}
\toprule
Idx & Prompt & P$(\mathrm{CBD}_{\mathrm{TAD}},\ell_2)$ & P$(T,\mathrm{LPIPS})$ & S$(\mathrm{CBD}_{\mathrm{TAD}},\ell_2)$ & S$(\mathrm{CBD}_{\mathrm{TAD}},\mathrm{LPIPS})$ \\
\midrule
0 & House $\leftrightarrow$ ship
  & $-0.446$ & $-0.247$ & $-0.864$ & $-0.671$ \\
1 & Shiba inu + volcano
  & $-0.396$ & $-0.299$ & $-0.921$ & $-0.714$ \\
2 & Lighthouse $\leftrightarrow$ seashell
  & $-0.489$ & $-0.391$ & $-0.943$ & $-0.786$ \\
3 & Mountain $\leftrightarrow$ lion
  & $-0.366$ & $\phantom{-}0.620$ & $-0.729$ & $\phantom{-}0.421$ \\
4 & Jellyfish $\leftrightarrow$ chandelier
  & $-0.623$ & $-0.587$ & $-0.993$ & $-0.993$ \\
5 & Red vintage car
  & $-0.499$ & $-0.477$ & $-0.871$ & $-0.721$ \\
\bottomrule
\end{tabular}
\end{table}
We conducted the corresponding SD~3.5 intervention sensitivity study as in Subsection \ref{subsec:Cifar_psd} over 30 prompt--seed runs: the window-averaged $\mathrm{CBD}_{\mathrm{TAD}}$ ratio has mean correlations
$\mathrm{Pearson}(\mathrm{CBD}_{\mathrm{TAD}},\ell_2)=-0.470$,
$\mathrm{Pearson}(\mathrm{CBD}_{\mathrm{TAD}},\mathrm{LPIPS})=-0.230$,
$\mathrm{Spearman}(\mathrm{CBD}_{\mathrm{TAD}},\ell_2)=-0.887$, and
$\mathrm{Spearman}(\mathrm{CBD}_{\mathrm{TAD}},\mathrm{LPIPS})=-0.577$.
Table \ref{tab:sd35_promptwise_corr_maintext} shows that we have consistent results if we exclude Mountain-Lion prompt case as an outlier.

We applied noise-injecting interventions over fixed-width time-index windows (0–1,1–2,…,25–26) and generated the corresponding output images (Figure \ref{fig:sliding_shiba_maintext}). Figure \ref{fig:sliding_shiba_cbd_26_maintext} shows the CBD curve along the baseline trajectory in the low-noise regime. It can be visually observed that the intervention effect becomes weaker after the CBD peak. See Appendix \ref{app:sd35_intervention_shiba_inu}.
\begin{figure}[t]
    \centering
    \includegraphics[width=\linewidth]{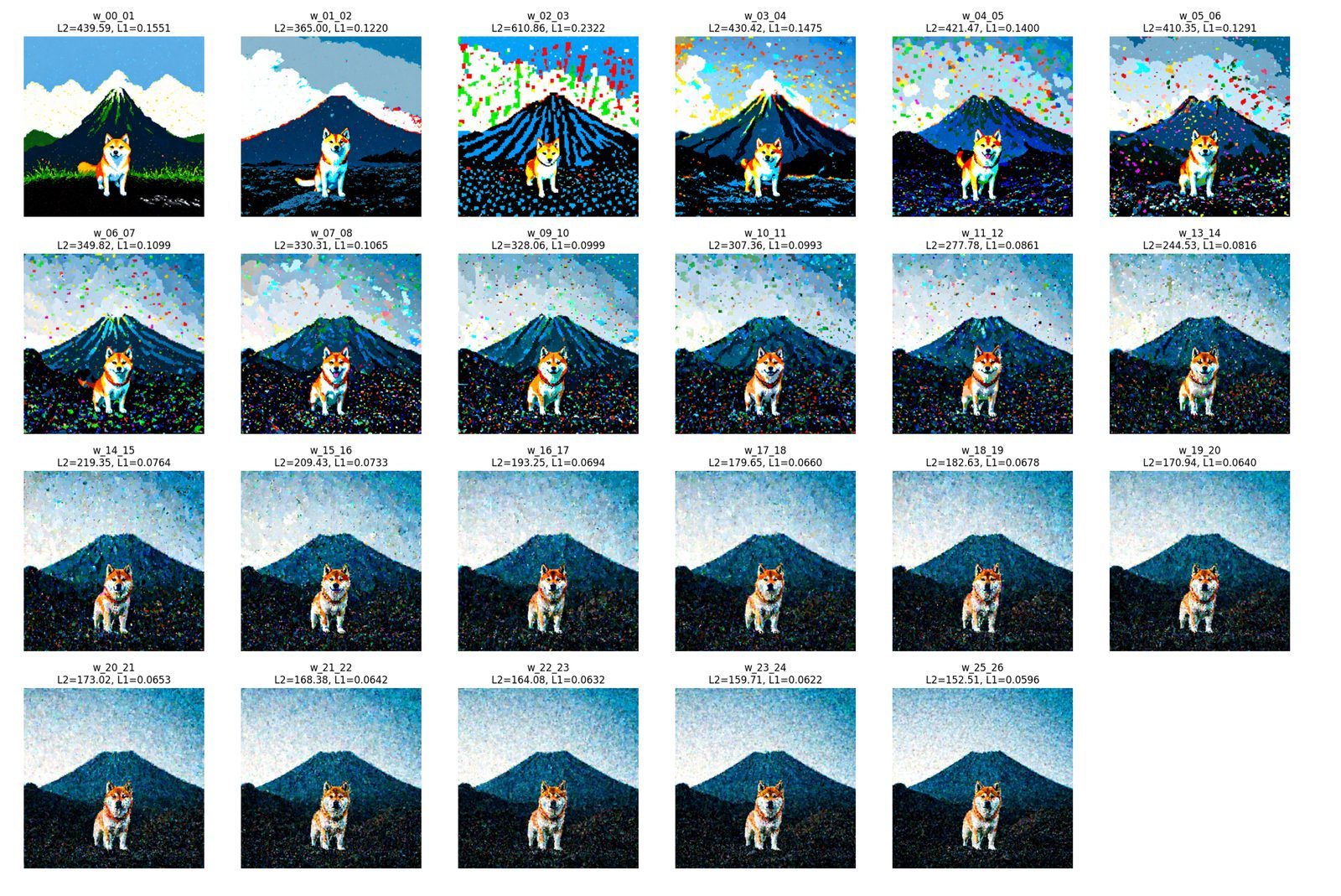}
   \caption{
   $w_{mn}$ denotes the case where intervention is performed over the time window (m,n). L2 and L1	indicate the L2 and L1 distances, respectively, between the baseline generation result and the intervention result. If we intervene within each time window in 0-11,  cartoon-like feature appears. The intervention result switches from cartoon-like features to cinematic features in the time window 12–15. This switching phenomenon can be described by the ridge window in the CBD plot (Figure \ref{fig:sliding_shiba_cbd_26}) within the time 12-15. This example suggests that the selection of whether the output becomes cinematic or not is made within the 12–15 time window. The interventions within late windows do not have effective consequences. Across all interventions, the overall composition of “Mount Fuji behind the dog” is preserved. The intervention results are highly unstable in the 0–5 time window, consistent with the ridge in the corresponding CBD plot in Figure \ref{fig:sliding_shiba_cbd_26_maintext}, which also appears during this interval. By contrast, the intervention outcomes around 7–11 appear relatively stable, which may be attributed to the corresponding CBD plot in Figure \ref{fig:sliding_shiba_cbd_26_maintext} being flat in that interval.
    }
    \label{fig:sliding_shiba_maintext}
\end{figure}
\begin{figure}[t]
    \centering
    \includegraphics[width=0.7\linewidth]{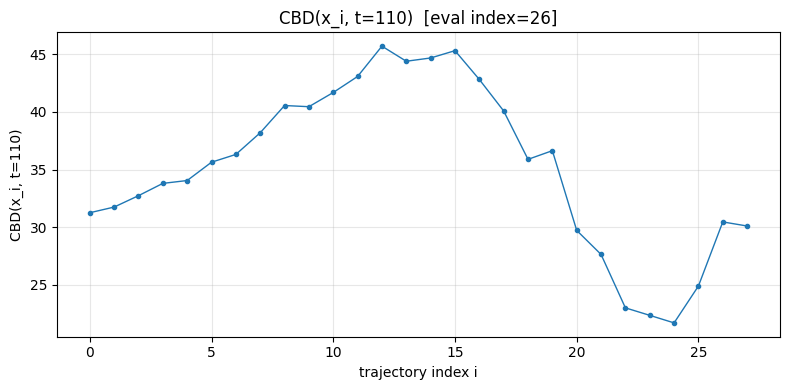}
   \caption{
CBD plots along the baseline trajectory at late time step $26$. Although the values remain very high for all indices, we observe a pronounced peak around 12–15.This is consistent with the intervention-sensitivity results in Figure~\ref{fig:sliding_shiba_maintext}, where the intervention outcome changes around the same window. Thus, the late-time CBD profile gives a possible explanation of the observed transition in terms of localised score-direction instability.
    }
    \label{fig:sliding_shiba_cbd_26_maintext}
\end{figure}
\subsection{Application: phase-aware generation control}
\label{subsec:phase_aware_control}

\paragraph{Concept insertion after semantic commitment.}
Finally, we use the SD~3.5 instability profile as a timing signal for prompt intervention.
Starting from the same initial noise, we run an $N=8$-step sampler and switch prompts at the elbow
$t^\star := \min \left\{ \arg\max_{t}\, (d/dt)[1 - \mathrm{CBD}_\mathrm{TAD}(t)]\right\}$ of the SD~3.5 $1-\mathrm{CBD}_{\mathrm{TAD}}$ profile in Figure~\ref{fig:3model_overlay}.
We interpret this elbow as a semantic commitment boundary. We test both prompt orders:
$\mathcal P_A\!\to\!\mathcal P_B$, where
$\mathcal P_A$ is ``a golden retriever running in a meadow'' and
$\mathcal P_B$ is ``a volcanic eruption with lava'', and the reverse order
$\mathcal P_B\!\to\!\mathcal P_A$. The all-$\mathcal P_A$ and all-$\mathcal P_B$ runs serve as baselines.

Figure~\ref{fig:sd35_control_compact} shows that the prompt active before $t^\star$ selects the
dominant semantic branch. The prompt introduced after $t^\star$ does not globally replace this
branch; instead, its concept is expressed within the committed branch. Thus,
$\mathcal P_A\!\to\!\mathcal P_B$ preserves the golden-retriever branch while adding volcanic or
lava-like elements, whereas $\mathcal P_B\!\to\!\mathcal P_A$ preserves the volcano branch while
inserting dog-like features. This supports a phase-aware control interpretation: the
$\mathrm{CBD}_{\mathrm{TAD}}$ elbow marks the transition from branch selection to within-branch
concept formation. 

\paragraph{Summary.}
Across the experiments, CBD and its trajectory-adapted proxies behave as operational counterparts
of the projection-caustic mechanism. They detect rapid score-direction reorientation in controlled
geometries, rank intervention-sensitive windows in pretrained DDPM trajectories, expose comparable
instability structure across several continuous-state image generators, and provide a practical signal
for phase-aware semantic control without retraining or modifying model weights.


\section{Discussion}

\paragraph{Mechanism and outlook.}
Transition-like behaviour does not require a discontinuous sampler: Gaussian smoothing can make finite-noise denoising multi-branch, so responsibilities switch near projection caustics and sharply reorient the normalised score, velocity, or effective update direction.
CBD measures this reorientation as a trajectory-level signal for branch-sensitive intervention windows, not as an exact zero-noise caustic estimator or universal order parameter; the observed band may shift with density, curvature, amplitudes, and scheduler effects.
This suggests online phase-aware control---adapting guidance, perturbation, solver steps, or compute during sampling---and future work on calibration across continuous samplers, hierarchical concept formation, and new geometry for discrete or absorbing-state models.

\paragraph{Limitations and degenerate regimes.}
Our asymptotic expansions rely on the classical Laplace method for nondegenerate critical points~\citep{wong2001asymptotic}. 
The multi-branch formula near a projection caustic is the finite-branch analogue of small-time heat-kernel asymptotics at a cut locus, where several minimizing geodesics contribute at the same exponential scale~\citep{barilari2012small}. 
If the branch Hessians degenerate, ordinary Laplace asymptotics must be replaced by uniform caustic asymptotics of Airy-- or Pearcey-type. 
Such singular asymptotic methods are uncommon in the diffusion-model literature but well developed in related settings~\citep{kamimoto-mizuno,kamimoto-nose2016,watanabe2009algebraic}. 
We leave this degenerate regime for future work.

\section{Conclusion}

We proposed a geometric account of phase-transition-like behaviour in generative dynamics based on projection caustics.
In projection-regular regions, denoising is controlled by a single nearest branch of the data support; near caustics, finite-noise branch competition produces rapid reorientation of the score, velocity, or effective update direction.
The Critical Boundary Detector operationalises this mechanism by probing direction instability along individual trajectories.
Together, the theory and experiments suggest that projection-caustic geometry is not only a post-hoc explanation of generative transitions, but also a useful organizing principle for phase-aware and intervention-efficient generation.

\newpage

\bibliographystyle{plainnat}
\bibliography{references}


\appendix
\section{Additional Theoretical Details}\label{app:settings}

\subsection{Settings in Subsection \ref{subsec:near_caustic}}
In this subsection, we explain the assumption for Theorem \ref{thm:log-sum}.
Let $x$ be a point of the projection caustic $\mathcal{C}(K)$.  Assume that $x$ has $m$ nearest points, $\Pi_K(x)=\{y_1, \dots, y_m\}$. Let \(U\subset \mathbb{R}^D\) be a neighbourhood of $x$ in which the set of nearest points
consists of \(m\) smooth competing branches
\[
y_1(x),\dots,y_m(x)\in K,
\]
depending smoothly on \(x\in U\).
After possibly shrinking \(U\), we assume that each branch is represented by a smooth
embedded submanifold \(M_j\subset K\) with a local parametrization
\[
\phi_j(\,\cdot\,;x):U_j\subset\mathbb{R}^k \to M_j,
\qquad \phi_j(0;x)=y_j(x),
\]
depending smoothly on \(x\in U\) and write
\[
d\mu_K|_{M_j}=a_j(u,x)\,du,
\]
where $a_j$ is a smooth positive function including the density and Jacobian factor.

Define the phase function
\[
\Phi_j(u;x):=\frac{1}{2}\|x-\phi_j(u;x)\|^2.
\]
We impose the following uniform nondegeneracy and separation conditions:

\begin{itemize}
\item[(i)] For each \(j\) and all \(x\in U\), \(u=0\) is a nondegenerate minimum of
\(\Phi_j(\cdot;x)\), i.e.
\[
\nabla_u \Phi_j(0;x)=0,
\qquad
H_j(x):=\nabla_u^2 \Phi_j(0;x) \text{ is positive definite}.
\]

\item[(ii)] The Hessians are uniformly positive definite: there exists \(c>0\) such that
\[
\langle H_j(x)\xi,\xi\rangle \ge c\|\xi\|^2
\qquad (x\in U,\ \xi\in\mathbb{R}^k,\ 1\le j\le m).
\]

\item[(iii)] There exists \(\delta>0\) such that for all \(x\in U\),
\[
\|x-y\|^2 \ge \min_{1\le j\le m} S_j(x) + \delta
\]
for all \(y\in K\) outside the chosen neighbourhoods of the branches,
where \(S_j(x):=\|x-y_j(x)\|^2\).
\end{itemize}
\section{Small-time score asymptotics near a smooth embedded submanifold}\label{append: smth_sbmfld}

In this subsection, we extend the curve-level local Laplace expansion to a general
smooth embedded submanifold. The main point is that, in the projection-regular region,
the leading term of the score is still governed by the normal distance, while the
$O(1)$ correction is given by the mean-curvature vector and the intrinsic density gradient.

\paragraph{Setting.}
Let $M^m \subset \mathbb R^D$ be a $C^\infty$ embedded submanifold of dimension $m$.
Let $\mu$ be a probability measure on $M$ of the form
\[
d\mu(y)=a(y)\,d \vol_M(y),
\]
where $a\in C^\infty(M)$ is strictly positive.
For $t>0$, define the Gaussian-smoothed density
\[
p_t(x)
:=
\int_M (2\pi t)^{-D/2}\exp\!\left(-\frac{\|x-y\|^2}{2t}\right)a(y)\,d\vol_M(y),
\qquad x\in\mathbb R^D.
\]
We are interested in the small-time asymptotics of
\[
\nabla_x \log p_t(x)
\qquad (t\downarrow 0).
\]

\paragraph{Tubular coordinates and notation.}
Fix $y_0\in M$.
For $\nu\in N_{y_0}M$ sufficiently small, write
\[
x=y_0+\nu.
\]
We use the nearest-point projection $\Pi_M$ in a tubular neighbourhood of $M$.
If $x$ lies in this tubular neighbourhood, then $x$ admits a unique decomposition
\[
x=y+\nu,
\qquad
y=\Pi_M(x)\in M,\quad \nu\in N_yM.
\]

Let $\mathrm{II}_y:T_yM\times T_yM\to N_yM$ be the second fundamental form,
and for $\nu\in N_yM$ define the symmetric endomorphism
\[
A_\nu:T_yM\to T_yM
\]
by
\[
\langle A_\nu v,w\rangle
=
\langle \mathrm{II}_y(v,w),\nu\rangle,
\qquad v,w\in T_yM.
\]
Equivalently, if $\{e_1,\dots,e_m\}$ is an orthonormal basis of $T_yM$, then
\[
(A_\nu)_{ij}
=
\langle \mathrm{II}_y(e_i,e_j),\nu\rangle.
\]
We also write
\[
\mathbf H_y:=\sum_{i=1}^m \mathrm{II}_y(e_i,e_i)\in N_yM
\]
for the mean-curvature vector.

\begin{thm}[Local Laplace asymptotics near a smooth stratum]
\label{thm:submanifold-laplace}
Let $M^m\subset\mathbb R^D$ be a $C^\infty$ embedded submanifold and let
$d\mu=a\,d\vol_M$ with $a\in C^\infty(M)$, $a>0$.
Fix a compact set $K$ contained in a sufficiently small tubular neighbourhood of $M$.
For each $x\in K$, write uniquely
\[
x=y+\nu,
\qquad
y=\Pi_M(x)\in M,\quad \nu\in N_yM.
\]
Then, as $t\downarrow 0$, uniformly for $x\in K$,
\[
p_t(x)
=
(2\pi t)^{-(D-m)/2}
\exp\!\left(-\frac{\|\nu\|^2}{2t}\right)
a(y)\,
\det(I-A_\nu)^{-1/2}
\bigl(1+O(t)\bigr).
\]
Consequently,
\begin{align}
\log p_t(x)
&=
-\frac{\|\nu\|^2}{2t}
-\frac{D-m}{2}\log(2\pi t)
+\log a(y)
-\frac12\log\det(I-A_\nu)
+O(t),
\label{eq:logpt-submanifold}
\\
F_t(x):=-t\log p_t(x)
&=
\frac12\|\nu\|^2
-t\log a(y)
+\frac t2\log\det(I-A_\nu)
+\frac{D-m}{2}t\log(2\pi t)
+O(t^2).
\label{eq:Ft-submanifold}
\end{align}
\end{thm}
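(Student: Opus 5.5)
The plan is to localise the integral defining $p_t(x)$ near $y=\Pi_M(x)$ and then apply the Laplace method in normal-adapted coordinates on $M$. For compactness we first restrict, as implicit in the statement, to the integral over a fixed neighbourhood $W\subset M$ of $y$. We then use that for $x$ in the tubular neighbourhood the function $z\mapsto\|x-z\|^2$ on $M$ has a unique global minimiser at $y$ with a uniform gap $\delta>0$ outside $W$. This gives a contribution $O(e^{-(\|\nu\|^2+\delta)/2t})$ (using that $\mu$ is finite), which is exponentially small relative to the main term and is absorbed into the $O(t)$.

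Near $y$ we parametrise $M$ by the exponential map, or by a graph over $T_yM$: $z(u)=y+u+\tfrac12\mathrm{II}_y(u,u)+O(|u|^3)$ for $u\in T_yM\cong\mathbb R^m$. Since $\nu\perp T_yM$,
\[
\tfrac12\|x-z(u)\|^2=\tfrac12\|\nu\|^2+\tfrac12|u|^2-\tfrac12\langle \mathrm{II}_y(u,u),\nu\rangle+O(|u|^3),
\]
so the phase $\Phi(u)=\tfrac12\|x-z(u)\|^2$ has a critical point at $u=0$. Its Hessian there is $I-A_\nu$, which is positive definite because $\|\nu\|$ is smaller than the reach, so the eigenvalues of $A_\nu$ are $<1$. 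The measure reads $a(z(u))J(u)\,du$ with $J(0)=1$ in normal coordinates.

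The standard Laplace theorem for a nondegenerate minimum then gives
\[
\int e^{-\Phi(u)/t}\,a J\,du=e^{-\|\nu\|^2/2t}(2\pi t)^{m/2}\det(I-A_\nu)^{-1/2}a(y)\bigl(1+O(t)\bigr).
\]
The $O(t^{1/2})$ term vanishes by parity: odd-order terms integrate to zero against the Gaussian, so the first correction is genuinely $O(t)$. Multiplying by $(2\pi t)^{-D/2}$ gives the claimed formula for $p_t$. Taking $\log$, with $\log(1+O(t))=O(t)$, gives \eqref{eq:logpt-submanifold}, and multiplying by $-t$ gives \eqref{eq:Ft-submanifold}.

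The main obstacle is uniformity in $x\in K$. All Laplace constants must depend continuously on $x$: the remainder in the Morse-lemma change of variables, the lower bound on the Hessian eigenvalues of $I-A_\nu$, the size of $W$, and the gap $\delta$. I would handle this by working with the smooth family of phases $\Phi(\cdot;x)$ and using compactness of $K$ together with a uniform reach bound. A parametric Morse lemma, or Taylor expansion with remainders bounded by finitely many derivatives that are uniform on the compact set, then gives an $O(t)$ error whose constant is independent of $x$.
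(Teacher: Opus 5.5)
Your proposal is correct and follows essentially the same route as the paper's proof. Both arguments localise to a neighbourhood of the unique nearest point and use graph coordinates over $T_yM$, which give the phase $\tfrac12\|\nu\|^2+\tfrac12\langle (I-A_\nu)u,u\rangle+O(|u|^3)$; both then apply the Laplace method, evaluate the Gaussian integral, and take logarithms. Your explicit parity argument for the vanishing $O(t^{1/2})$ term, and your discussion of uniformity over the compact set, fill in steps the paper passes over quickly.
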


\begin{proof}
Fix $x_0=y_0+\nu_0$ in the tubular neighbourhood, with $y_0=\Pi_M(x_0)$ and
$\nu_0\in N_{y_0}M$.
Since the statement is local and uniform on compact subsets, it is enough to prove
the formula in a neighbourhood of $x_0$ with constants depending smoothly on $x_0$.

Choose orthonormal coordinates in $\mathbb R^D$ centered at $y_0$ so that
\[
T_{y_0}M=\mathbb R^m\times\{0\},
\qquad
N_{y_0}M=\{0\}\times\mathbb R^{D-m},
\]
and write points as $(u,z)$ with $u\in\mathbb R^m$, $z\in\mathbb R^{D-m}$.
In these coordinates, after shrinking the neighbourhood if necessary, $M$ is represented
as a graph
\[
\Phi(u)=(u,h(u)),
\]
where
\[
h(0)=0,
\qquad
Dh(0)=0.
\]
The Taylor expansion of $h$ at $u=0$ is
\[
h(u)=\frac12\,\mathrm{II}_{y_0}(u,u)+O(|u|^3),
\]
where $\mathrm{II}_{y_0}(u,u)\in N_{y_0}M\simeq\mathbb R^{D-m}$.
Also, the induced volume form on $M$ satisfies
\[
d\vol_M(\Phi(u))
=
J(u)\,du,
\qquad
J(u)=1+O(|u|^2).
\]
Moreover,
\[
a(\Phi(u))=a(y_0)+O(|u|).
\]

Now write
\[
x_0=(0,\nu_0).
\]
Then
\[
p_t(x_0)
=
(2\pi t)^{-D/2}
\int_{\mathbb R^m}
\exp\!\left(-\frac{\|x_0-\Phi(u)\|^2}{2t}\right)
a(\Phi(u))J(u)\,du,
\]
up to restriction to a sufficiently small coordinate patch, since the contribution away
from the unique minimiser is exponentially small.

We expand the phase function.
Since
\[
x_0-\Phi(u)=(-u,\nu_0-h(u)),
\]
we have
\begin{align*}
\|x_0-\Phi(u)\|^2
&=
|u|^2+\|\nu_0-h(u)\|^2 \\
&=
|u|^2+\|\nu_0\|^2-2\langle \nu_0,h(u)\rangle+\|h(u)\|^2.
\end{align*}
Because $h(u)=O(|u|^2)$, we have $\|h(u)\|^2=O(|u|^4)$, hence
\[
\|x_0-\Phi(u)\|^2
=
\|\nu_0\|^2
+
|u|^2
-
\left\langle \nu_0,\mathrm{II}_{y_0}(u,u)\right\rangle
+
O(|u|^3).
\]
By definition of $A_{\nu_0}$,
\[
\left\langle \nu_0,\mathrm{II}_{y_0}(u,u)\right\rangle
=
\langle A_{\nu_0}u,u\rangle.
\]
Therefore,
\[
\|x_0-\Phi(u)\|^2
=
\|\nu_0\|^2
+
\langle (I-A_{\nu_0})u,u\rangle
+
O(|u|^3).
\]
Since $x_0$ lies in the tubular neighbourhood and $y_0=\Pi_M(x_0)$ is the unique
nearest point, the Hessian in the tangential variables is positive definite:
\[
I-A_{\nu_0}>0.
\]
Thus $u=0$ is a nondegenerate unique minimum of the phase.

We now apply the standard Laplace method.
Set
\[
u=\sqrt{t}\,w.
\]
Then
\begin{align*}
p_t(x_0)
&=
(2\pi t)^{-D/2}
e^{-\|\nu_0\|^2/(2t)}
\int_{\mathbb R^m}
\exp\!\left(
-\frac{1}{2}\langle (I-A_{\nu_0})w,w\rangle + O(\sqrt{t}|w|^3)
\right)
\\
&\qquad\qquad\qquad\qquad\times
\bigl(a(y_0)+O(\sqrt{t}|w|)\bigr)
\bigl(1+O(t|w|^2)\bigr)
t^{m/2}\,dw.
\end{align*}
Since $I-A_{\nu_0}$ is positive definite, the integrand is dominated by a Gaussian,
and therefore
\[
p_t(x_0)
=
(2\pi t)^{-D/2}
e^{-\|\nu_0\|^2/(2t)}
t^{m/2}
\left(
a(y_0)\int_{\mathbb R^m}
e^{-\frac12\langle (I-A_{\nu_0})w,w\rangle}\,dw
+O(t)
\right).
\]
Using the Gaussian integral
\[
\int_{\mathbb R^m}
e^{-\frac12\langle Bw,w\rangle}\,dw
=
(2\pi)^{m/2}(\det B)^{-1/2}
\qquad (B>0),
\]
with $B=I-A_{\nu_0}$, we obtain
\[
p_t(x_0)
=
(2\pi t)^{-(D-m)/2}
e^{-\|\nu_0\|^2/(2t)}
a(y_0)\det(I-A_{\nu_0})^{-1/2}
\bigl(1+O(t)\bigr).
\]
Since the argument is uniform on compact subsets of the tubular neighbourhood,
the stated asymptotic expansion follows. Taking logarithms gives
\eqref{eq:logpt-submanifold}, and multiplying by $-t$ yields
\eqref{eq:Ft-submanifold}.
\end{proof}

\begin{cor}[Score asymptotics in the projection-regular region]
\label{cor:score-submanifold}
Under the assumptions of Theorem~\ref{thm:submanifold-laplace}, one has
\[
\nabla_x\log p_t(x)
=
-\frac{\nu}{t}
+
\nabla_x\!\left(
\log a(y)-\frac12\log\det(I-A_\nu)
\right)
+O(t),
\]
uniformly on compact subsets of the tubular neighbourhood, where $x=y+\nu$,
$y=\Pi_M(x)$, $\nu\in N_yM$.
In particular, for $\|\nu\|$ small,
\begin{equation}
\label{eq:score-submanifold-expanded}
\nabla_x\log p_t(x)
=
-\frac{\nu}{t}
+
\nabla_M\log a(y)
+\frac12\,\mathbf H_y
+O(\|\nu\|)+O(t).
\end{equation}
\end{cor}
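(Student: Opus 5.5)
The plan is to differentiate the expansion \eqref{eq:logpt-submanifold} of Theorem~\ref{thm:submanifold-laplace} term by term in $x$. The remainder $O(t)$ there is only known in $C^0$, so the first step is to justify differentiation. I would redo the Laplace argument for $\nabla_x p_t(x)$ directly rather than differentiating the asymptotic. We have
\[
\nabla_x p_t(x)=-\frac{1}{t}\int_M (x-y')\,(2\pi t)^{-D/2}e^{-\|x-y'\|^2/(2t)}a(y')\,d\vol_M(y').
\]
Hence
\[
\nabla_x\log p_t(x)=-\frac{1}{t}\bigl(x-\mathbb E_t[y'\mid x]\bigr),
\]
where $\mathbb E_t[\cdot\mid x]$ is the posterior mean under the Gaussian-weighted measure. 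The same tubular-coordinate Laplace method used in the proof of Theorem~\ref{thm:submanifold-laplace}, now applied with the extra polynomial weight $y'-y$, shows that the posterior mean admits a full asymptotic expansion in powers of $t$. Its coefficients are smooth in $x$, uniformly on compact sets. Alternatively, note that $p_t$ is smooth in $(t,x)$ and that Laplace expansions of parameter-dependent integrals with smooth phase and amplitude can be differentiated in the parameter. This is the standard fact that the remainder $O(t)$ is a symbol, $t\,R(t,x)$ with $\nabla_x R$ bounded. I regard this justification as the main technical obstacle, since the leading term $-\|\nu\|^2/(2t)$ is singular in $t$ and one must check that no $t^{-1}$ factor multiplies the remainder after differentiation. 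Writing the integrand as $e^{-\|\nu\|^2/(2t)}$ times a function whose phase correction is $O(|u|^3)$ handles this: differentiating the correction in $x$ produces $t^{-1}$ times terms of order $|u|^3\sim t^{3/2}$, which are harmless after rescaling $u=\sqrt t w$.

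Granting term-by-term differentiation, the second step is the gradient of the leading term. We need $\nabla_x\tfrac12\operatorname{dist}(x,M)^2=\nu=x-\Pi_M(x)$, the classical identity valid in the tubular neighbourhood (Federer). It follows by differentiating $\tfrac12\|x-\Pi_M(x)\|^2$ and using that $x-\Pi_M(x)\perp T_{\Pi_M(x)}M$ while $D\Pi_M$ takes values in $T M$. This gives $-\nu/t$. The term $-\frac{D-m}{2}\log(2\pi t)$ is constant in $x$, and the remaining two terms give $\nabla_x(\log a(y)-\tfrac12\log\det(I-A_\nu))$, which is the first displayed formula.

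For \eqref{eq:score-submanifold-expanded}, expand these terms for small $\|\nu\|$. First, $\nabla_x\log a(\Pi_M(x))=D\Pi_M(x)^{\!\top}\nabla_M\log a(y)$. Since $D\Pi_M(x)=(I-A_\nu)^{-1}P_{T_yM}$ (the standard derivative of the projection in the tubular neighbourhood), this equals $\nabla_M\log a(y)+O(\|\nu\|)$. Second, $\log\det(I-A_\nu)=-\tr A_\nu+O(\|\nu\|^2)=-\langle \mathbf H_y,\nu\rangle+O(\|\nu\|^2)$, because $\tr A_\nu=\sum_i\langle \mathrm{II}_y(e_i,e_i),\nu\rangle$. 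Its $x$-gradient is $-\nabla_x\langle\mathbf H_{\Pi_M(x)},x-\Pi_M(x)\rangle+O(\|\nu\|)$. Expanding by the product rule, the term where the derivative falls on $x-\Pi_M(x)$ gives $(I-D\Pi_M)^{\top}\mathbf H_y=\mathbf H_y+O(\|\nu\|)$, because $\mathbf H_y$ is normal and $D\Pi_M$ maps into the tangent space. The term where the derivative falls on $\mathbf H_{\Pi_M(x)}$ is $O(\|\nu\|)$, since it is paired with $\nu$. Hence $-\tfrac12\nabla_x\log\det(I-A_\nu)=\tfrac12\mathbf H_y+O(\|\nu\|)$. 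Collecting the pieces gives \eqref{eq:score-submanifold-expanded}, with uniformity inherited from the compact-set uniformity of the first step.
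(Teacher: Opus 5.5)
Your proposal is correct and follows essentially the same route as the paper: differentiate \eqref{eq:logpt-submanifold} in $x$, obtain $-\nu/t$ from the squared-distance term, and then expand $\nabla_x\log a(\Pi_M(x))$ and $-\tfrac12\nabla_x\log\det(I-A_\nu)$ for small $\|\nu\|$ to get $\nabla_M\log a(y)+\tfrac12\mathbf H_y+O(\|\nu\|)$. Your explicit $D\Pi_M=(I-A_\nu)^{-1}P_{T_yM}$ and product-rule computation simply make precise the paper's ``tangential versus normal derivative'' remark.

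The genuine addition is your justification that the $O(t)$ remainder may be differentiated, via $\nabla_x\log p_t=-\tfrac1t\bigl(x-\mathbb E_t[y'\mid x]\bigr)$ and a Laplace expansion of $\mathbb E_t[y'-y\mid x]$; the paper takes this for granted. To close that step you still need the routine coefficient-matching argument: integrate the gradient expansion along paths and compare with \eqref{eq:logpt-submanifold}. This identifies the resulting $O(1)$ coefficient with $\nabla_x\bigl(\log a(y)-\tfrac12\log\det(I-A_\nu)\bigr)$.
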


\begin{proof}
Differentiate \eqref{eq:logpt-submanifold} with respect to $x$.
Since
\[
\nabla_x\!\left(-\frac{\|\nu\|^2}{2t}\right)=-\frac{\nu}{t},
\]
we get
\[
\nabla_x\log p_t(x)
=
-\frac{\nu}{t}
+
\nabla_x\!\left(
\log a(y)-\frac12\log\det(I-A_\nu)
\right)
+O(t).
\]
This proves the first formula.

It remains to expand the $O(1)$ term for small $\nu$.
First, since $y=\Pi_M(x)$ and $a$ is defined intrinsically on $M$, we have
\[
\nabla_x(\log a(y))
=
\nabla_M\log a(y)+O(\|\nu\|),
\]
because the nearest-point projection has differential equal to the tangential projection
at $\nu=0$, and varies smoothly for small $\nu$.

Next, expand the determinant term at $\nu=0$.
Since $A_\nu$ depends linearly on $\nu$,
\[
\log\det(I-A_\nu)
=
-\tr(A_\nu)+O(\|\nu\|^2).
\]
Therefore
\[
-\frac12\log\det(I-A_\nu)
=
\frac12\tr(A_\nu)+O(\|\nu\|^2).
\]
Now let $\eta\in N_yM$.
Using an orthonormal basis $\{e_i\}_{i=1}^m$ of $T_yM$,
\[
\tr(A_\eta)
=
\sum_{i=1}^m \langle A_\eta e_i,e_i\rangle
=
\sum_{i=1}^m \langle \mathrm{II}_y(e_i,e_i),\eta\rangle
=
\langle \mathbf H_y,\eta\rangle.
\]
Hence the normal gradient of $\frac12\tr(A_\nu)$ at $\nu=0$ is
\[
\frac12\,\mathbf H_y.
\]
Since the tangential derivative of $\tr(A_\nu)$ is $O(\|\nu\|)$, we conclude that
\[
\nabla_x\!\left(-\frac12\log\det(I-A_\nu)\right)
=
\frac12\,\mathbf H_y+O(\|\nu\|).
\]
Combining the two expansions gives
\[
\nabla_x\log p_t(x)
=
-\frac{\nu}{t}
+
\nabla_M\log a(y)
+\frac12\,\mathbf H_y
+O(\|\nu\|)+O(t),
\]
as claimed.
\end{proof}

\begin{cor}[On-manifold score]
\label{cor:on-manifold-score}
For $x=y\in M$ (that is, $\nu=0$), one has
\[
\nabla_x\log p_t(y)
=
\nabla_M\log a(y)
+\frac12\,\mathbf H_y
+O(t).
\]
In particular, if $a\equiv \mathrm{const}$ on $M$, then
\[
\nabla_x\log p_t(y)
=
\frac12\,\mathbf H_y+O(t).
\]
\end{cor}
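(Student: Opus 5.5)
The plan is to obtain Corollary~\ref{cor:on-manifold-score} by specialising the uniform expansion of Corollary~\ref{cor:score-submanifold} to $\nu=0$. Since $y\in M$ lies in every tubular neighbourhood of $M$, we may take the compact set in Theorem~\ref{thm:submanifold-laplace} to be a small closed tubular ball around $y$. On that ball the decomposition $x=y+\nu$ holds with $\nu=0$ at $x=y$ itself.

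\textbf{Step 1.} Evaluate the first formula of Corollary~\ref{cor:score-submanifold} at $x=y$. The singular term $-\nu/t$ vanishes, so
\[
\nabla_x\log p_t(y)=\nabla_x\!\left(\log a(\Pi_M(x))-\tfrac12\log\det(I-A_\nu)\right)\Big|_{x=y}+O(t).
\]
This is an exact identity up to $O(t)$, because the remainder in Corollary~\ref{cor:score-submanifold} is uniform on the compact set and hence holds pointwise at $x=y$.

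\textbf{Step 2.} Compute the $O(1)$ gradient at $\nu=0$ exactly, not just up to $O(\|\nu\|)$.

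For the density term, the differential of $\Pi_M$ at a point of $M$ is exactly the orthogonal projection onto $T_yM$. Hence $\nabla_x\log a(\Pi_M(x))|_{x=y}=\nabla_M\log a(y)$.

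For the curvature term, expand $\log\det(I-A_\nu)=-\tr(A_\nu)+O(\|\nu\|^2)$. The quadratic remainder has zero gradient at $\nu=0$. The tangential derivative of $\tr(A_\nu)$ along $M$ is proportional to $\nu$, so it also vanishes at $\nu=0$. The normal derivative of $\tr(A_\nu)=\langle\mathbf H_y,\nu\rangle$ is $\mathbf H_y$. Together these give $\tfrac12\mathbf H_y$.

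Combining the two contributions yields the claimed formula. The constant-density case then follows because $\nabla_M\log a\equiv0$.

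\textbf{Main obstacle.} The one delicate point is justifying that the $O(t)$ remainder in the expansion of $\log p_t$ can be differentiated, that is, that its $x$-gradient is still $O(t)$. Corollary~\ref{cor:score-submanifold} already asserts this, so we may simply invoke it. If a self-contained check is wanted, I would argue directly from the Laplace method: the score equals $-\frac1t\,(x-\mathbb E_t[Y\mid x])$, where $\mathbb E_t[Y\mid x]$ denotes the mean of $Y$ under the posterior $\propto e^{-\|x-Y\|^2/2t}a\,d\vol_M$. I would then expand this posterior mean to order $t^2$ with the same rescaling $u=\sqrt t\,w$. The odd moments of the Gaussian vanish, so no $\sqrt t$ terms survive. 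This gives $x-\mathbb E_t[Y\mid x]=O(t)\cdot(\ldots)+O(t^2)$ smoothly in $x$, which justifies the differentiated remainder.
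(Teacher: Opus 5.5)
Your proposal is correct and follows the paper's route. The paper's proof just sets $\nu=0$ in \eqref{eq:score-submanifold-expanded}, so the $O(\|\nu\|)$ term drops out; you instead specialise the first formula of Corollary~\ref{cor:score-submanifold} and redo the $\nu=0$ gradient computation exactly, which is the same calculation the paper performs in proving that corollary, and your posterior-mean remark usefully justifies the differentiation of the $O(t)$ remainder that the paper takes for granted.
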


\begin{proof}
Set $\nu=0$ in \eqref{eq:score-submanifold-expanded}.
\end{proof}

\begin{remark}[Reduction to the curve case]
When $m=1$, write the unit tangent and unit normal by $\tau,n$, and let
$\kappa$ be the scalar curvature with
\[
\mathrm{II}(\tau,\tau)=\kappa n.
\]
Then
\[
\mathbf H=\kappa n,
\]
so Corollary~\ref{cor:on-manifold-score} becomes
\[
\nabla_x\log p_t|_M
=
(\partial_\tau\log a)\,\tau+\frac{\kappa}{2}n+O(t),
\]
which is exactly the curve-level formula.
\end{remark}

\begin{remark}[free energy interpretation]
Theorem~\ref{thm:submanifold-laplace} implies
\[
F_t(x)
=
\frac12\dist(x,M)^2
-t\log a(\Pi_Mx)
+\frac t2\log\det(I-A_\nu)
+\frac{D-m}{2}t\log(2\pi t)
+O(t^2).
\]
Thus, in the projection-regular region, the zero-temperature leading term is purely
the squared distance to the submanifold, while geometry enters at order $t$ through
the Jacobian factor $\det(I-A_\nu)$.
The singular behaviour arises precisely when $I-A_\nu$ degenerates, namely when one
approaches a focal/cut-locus type boundary of projection regularity.
\end{remark}

\section{Free energy asymptotics near projection caustics}
\label{app:caustic_free_energy}

In this appendix, we derive the small-noise asymptotics of the Gaussian-smoothed density
and the associated free energy near a projection caustic. The main point is that, when several
nearest-point branches coexist, the single Laplace contribution from the projection-regular
regime is replaced by a sum of branchwise contributions. As a result, the free energy acquires
a local log-sum-exp normal form, and the score field becomes a softmax-weighted combination
of branchwise gradients. This mechanism explains the rapid directional instability of the score
near the switching set. 

\subsection{Setting and objective}

Let $K \subset \mathbb{R}^D$ be a data support set. We assume that, locally, $K$ is given by
a finite union of smooth embedded submanifolds, each equipped with a smooth positive density.
We consider the Gaussian-smoothed density
\begin{equation}
p_\sigma(x)
:=
\int_K
\rho(y)
\exp\left(
-\frac{\|x-y\|^2}{2\sigma^2}
\right)
d\mu_K(y),
\label{eq:app-psigma}
\end{equation}
and the associated free energy
\begin{equation}
F_\sigma(x):=-\sigma^2 \log p_\sigma(x).
\label{eq:app-free-energy}
\end{equation}
Here $\rho$ is the density of the data distribution on $K$ with respect to
$\mu_K$; it corresponds to $p_0$ in the main text.  We omit the Gaussian
normalisation factor $(2\pi\sigma^2)^{-D/2}$ to simplify the
notation.  Restoring it only adds
\[
\frac{D}{2}\sigma^2\log(2\pi\sigma^2)
\]
to $F_\sigma(x)$, and therefore has no effect on the multi-branch weights,
the switching scale $S_i(x)-S_j(x)=O(\sigma^2)$, or the CBD interpretation.

In the projection-regular region, the nearest-point map $\Pi_K(x)$ is single-valued, and the
usual Laplace method yields an expansion governed by a single minimizer. The phenomenon of
interest here is what happens when $\Pi_K(x)$ becomes multi-valued, namely on the projection
caustic
\begin{equation}
\mathrm{Cau}(K):=\{x\in\mathbb{R}^D:\Pi_K(x)\ \text{is not single-valued}\}.
\end{equation}
We focus on the case where a finite number of nearest-point branches compete.

\subsection{Nondegenerate local multi-branch model with the same dimension}

Fix a point $x\in\mathbb{R}^D$ near the projection caustic. Assume that the relevant competing
nearest points are
\[
y_1,\dots,y_m \in K,
\]
and that each $y_j$ lies on a smooth $k$-dimensional branch $M_j \subset K$, with
$M_i\cap M_j=\varnothing$ for $i\neq j$ in the chosen local model. Let
\[
\phi_j:U_j\subset\mathbb{R}^k \to M_j,\qquad \phi_j(0)=y_j,
\]
be a local coordinate chart, and write
\[
d\mu_K|_{M_j}=a_j(u)\,du,
\]
where $a_j$ is a smooth positive function including the density and Jacobian factor.

For each branch, define the phase function
\begin{equation}
\Phi_j(u;x):=\frac12\|x-\phi_j(u)\|^2.
\label{eq:app-phase}
\end{equation}
By the nearest-point condition, $u=0$ is a critical point of $\Phi_j(\cdot;x)$. We say that
$y_j$ is a nondegenerate nearest point on $M_j$ if the Hessian
\[
H_j(x):=\nabla_u^2 \Phi_j(0;x)
\]
is positive definite. Geometrically, this means that $x-y_j$ is orthogonal to $T_{y_j}M_j$
and the second variation along tangent directions is strictly positive. In terms of the second
fundamental form, one may write
\[
H_j(x)=g_j-\langle x-y_j,\mathrm{II}_{y_j}\rangle,
\]
so degeneration of $\det H_j(x)$ corresponds to a focal-type phenomenon. In the present
appendix we exclude such focal degeneration and assume $H_j(x)>0$ for all competing branches. 

\subsection{Single-branch Laplace expansion}

We first recall the standard local Laplace expansion in a form suited to our application.

\begin{lem}
Let $f\in C_c^\infty(U)$ and $\Phi\in C^\infty(U)$, where $U\subset\mathbb{R}^k$ is a
neighborhood of the origin. Assume that
\[
\Phi(0)=\frac{d^2}{2},\qquad \nabla \Phi(0)=0,\qquad H:=\nabla^2\Phi(0)\text{ is positive definite},
\]
for some $d>0$.
Then
\[
I_\sigma:=\int_U f(u)e^{-\Phi(u)/\sigma^2}\,du
\]
admits the asymptotic expansion
\begin{equation}
I_\sigma
=
(2\pi\sigma^2)^{k/2}
e^{-d^2/(2\sigma^2)}
\frac{f(0)}{\sqrt{\det H}}
\bigl(1+O(\sigma^2)\bigr)
\qquad (\sigma\to0).
\label{eq:app-single-laplace}
\end{equation}
\end{lem}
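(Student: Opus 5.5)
The proof is the standard Laplace method, in four steps: factor out the exponential, localise near the origin, Taylor-expand and rescale, then integrate the Gaussian moments.

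\textbf{Step 1 (factor out and localise).} Write $\Psi(u):=\Phi(u)-d^2/2$, so that $I_\sigma=e^{-d^2/(2\sigma^2)}\int_U f(u)e^{-\Psi(u)/\sigma^2}\,du$. Here $\Psi(0)=0$, $\nabla\Psi(0)=0$ and $\nabla^2\Psi(0)=H>0$.

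By Taylor's theorem, $\Psi(u)\ge \tfrac{c}{2}|u|^2$ on a small ball $B_r(0)$, with $c$ the smallest eigenvalue of $H$. I would also assume, as is implicit in the statement, that $0$ is the unique global minimiser of $\Psi$ on $\operatorname{supp} f$. This is exactly the role played by the separation condition (iii) in the application. Under it, $\Psi\ge\eta>0$ on $\operatorname{supp} f\setminus B_r$. That region therefore contributes $O(e^{-\eta/\sigma^2})$, which is negligible against any power of $\sigma$.

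\textbf{Step 2 (expand and rescale).} On $B_r$, substitute $u=\sigma w$, giving the prefactor $\sigma^k$. Then
\[
\Psi(\sigma w)/\sigma^2=\tfrac12\langle Hw,w\rangle+\sigma\,T_3(w)+\sigma^2 T_4(w)+O(\sigma^3|w|^5),
\]
where $T_3$ and $T_4$ are the cubic and quartic Taylor polynomials. Likewise
\[
f(\sigma w)=f(0)+\sigma\langle\nabla f(0),w\rangle+\tfrac{\sigma^2}{2}\langle\nabla^2 f(0)w,w\rangle+O(\sigma^3|w|^3).
\]
Expanding $e^{-\sigma T_3-\sigma^2T_4-\cdots}$ to second order in $\sigma$, the integrand becomes $e^{-\frac12\langle Hw,w\rangle}$ times a polynomial in $w$ with coefficients in powers of $\sigma$, plus a remainder.

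\textbf{Step 3 (justify the remainder).} This is the main technical point: a uniform Gaussian domination controls the remainder. On $|w|\le r/\sigma$ the lower bound $\Psi(\sigma w)/\sigma^2\ge \tfrac{c}{2}|w|^2$ holds. Using the integral form of the Taylor remainder, the error term is bounded by $C\sigma^3(1+|w|)^N e^{-c|w|^2/4}$, which is integrable. The tail $|w|>r/\sigma$ of the full Gaussian is exponentially small.

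\textbf{Step 4 (integrate).} Integrate against the Gaussian $e^{-\frac12\langle Hw,w\rangle}$. The order-$\sigma$ terms are odd polynomials in $w$, namely $\langle\nabla f(0),w\rangle$ and $f(0)T_3(w)$, so they vanish by symmetry. This is why the correction is $O(\sigma^2)$ and not $O(\sigma)$. The zeroth-order term gives $f(0)(2\pi)^{k/2}(\det H)^{-1/2}$.

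Collecting factors yields
\[
I_\sigma=(2\pi\sigma^2)^{k/2}e^{-d^2/(2\sigma^2)}\frac{f(0)}{\sqrt{\det H}}\bigl(1+O(\sigma^2)\bigr),
\]
provided $f(0)\neq 0$, which holds for the positive amplitudes $a_j$ in the application. If $f(0)=0$, I would read the result additively as $f(0)/\sqrt{\det H}+O(\sigma^2)$ inside the bracket.
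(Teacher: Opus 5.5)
Your proposal is correct and follows the same route as the paper. You factor out $e^{-d^2/(2\sigma^2)}$, localise using a quadratic lower bound near the origin, rescale $u=\sigma v$, and note that the order-$\sigma$ terms $\nabla f(0)\cdot v$ and $f(0)T_3(v)$ are odd and integrate to zero, which leaves an $O(\sigma^2)$ relative error. Your explicit hypotheses that $0$ is the unique minimiser of $\Phi$ on $\operatorname{supp} f$ and that $f(0)\neq 0$ are what the paper leaves implicit in its ``after shrinking $U$ if necessary'' step, and your Gaussian-dominated integral-form remainder bound is a more careful version of the paper's remainder estimate.
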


\begin{proof}
By Taylor expansion at the origin,
\[
\Phi(u)=\frac{d^{2}}{2}+\frac12 \langle Hu,u\rangle + R_3(u),
\qquad
R_3(u)=O(\|u\|^3).
\]
Since \(\nabla \Phi(0)=0\) and \(H>0\), the point \(u=0\) is a nondegenerate local minimum.
After shrinking \(U\) if necessary, there exist \(c,C>0\) such that
\[
\Phi(u)\ge \frac{d^2}{2}+c\|u\|^2
\qquad (u\in U),
\]
and
\[
|R_3(u)|\le C\|u\|^3 .
\]

Set \(u=\sigma v\). Then
\[
I_\sigma
=
e^{-d^2/(2\sigma^2)}\sigma^k
\int_{U/\sigma}
f(\sigma v)
\exp\!\left(
-\frac12\langle Hv,v\rangle
-\frac{R_3(\sigma v)}{\sigma^2}
\right)\,dv.
\]
Since
\[
\frac{R_3(\sigma v)}{\sigma^2}=O(\sigma \|v\|^3),
\]
the integrand is dominated by an integrable Gaussian on each fixed bounded \(v\)-region, and
the contribution from \(|v|\) is exponentially small because \(H>0\). Thus the leading term is
\[
e^{-d^2/(2\sigma^2)}\sigma^k
f(0)\int_{\mathbb R^k} e^{-\langle Hv,v\rangle/2}\,dv.
\]

To obtain the sharper relative error \(O(\sigma^2)\), we expand one order further.
Write
\[
f(\sigma v)=f(0)+\sigma \nabla f(0)\cdot v + O(\sigma^2 \|v\|^2),
\]
and
\[
R_3(u)=T_3(u)+O(\|u\|^4),
\]
where \(T_3\) is the cubic Taylor polynomial of \(\Phi\) at the origin. Hence
\[
\frac{R_3(\sigma v)}{\sigma^2}
=
\sigma T_3(v)+O(\sigma^2\|v\|^4),
\]
so
\[
\exp\!\left(-\frac{R_3(\sigma v)}{\sigma^2}\right)
=
1-\sigma T_3(v)+O\!\bigl(\sigma^2(1+\|v\|^6)\bigr).
\]
Therefore,
\[
f(\sigma v)\exp\!\left(-\frac{R_3(\sigma v)}{\sigma^2}\right)
=
f(0)
+\sigma\bigl(\nabla f(0)\cdot v - f(0)T_3(v)\bigr)
+O\!\bigl(\sigma^2(1+\|v\|^6)\bigr).
\]

Multiplying by the Gaussian factor \(e^{-\langle Hv,v\rangle/2}\), the coefficient of \(\sigma\) is an odd
function of \(v\), since \(\nabla f(0)\cdot v\) is linear and \(T_3(v)\) is cubic. Hence its integral over
\(\mathbb R^k\) vanishes:
\[
\int_{\mathbb R^k}
\bigl(\nabla f(0)\cdot v - f(0)T_3(v)\bigr)
e^{-\langle Hv,v\rangle/2}\,dv
=0.
\]
It follows that
\[
I_\sigma
=
e^{-d^2/(2\sigma^2)}\sigma^k
\left(
f(0)\int_{\mathbb R^k} e^{-\langle Hv,v\rangle/2}\,dv
+O(\sigma^2)
\right).
\]
Finally, using
\[
\int_{\mathbb R^k} e^{-\langle Hv,v\rangle/2}\,dv
=
(2\pi)^{k/2}(\det H)^{-1/2},
\]
we obtain
\[
I_\sigma
=
(2\pi\sigma^2)^{k/2}e^{-d^2/(2\sigma^2)}
\frac{f(0)}{\sqrt{\det H}}
\bigl(1+O(\sigma^2)\bigr).
\]
This proves the claim.
\end{proof}

\subsection{Multi-branch Laplace expansion on the caustic}

We now fix $x\in\mathrm{Cau}(K)$ and assume that
\[
\Pi_K(x)=\{y_1,\dots,y_m\}
\]
is a finite set of nondegenerate nearest points. Let
\[
d:=d(x,K).
\]
Assume moreover that there exists $\delta>0$ such that all points outside small neighborhoods
$V_j$ of the $y_j$ satisfy
\[
\|x-y\|^2 \ge d(x,K)^2+\delta
\qquad
\left(y\in K\setminus \bigcup_{j=1}^m V_j\right).
\]

\begin{thm}[Multi-branch Laplace expansion on the caustic]
Under the assumptions above, one has
\begin{equation}
p_\sigma(x)
=
(2\pi\sigma^2)^{k/2}
e^{-d^2/(2\sigma^2)}
\left(
\sum_{j=1}^m B_j(x)+O(\sigma^2)
\right)
+
O\!\left(e^{-(d^2+\delta)/(2\sigma^2)}\right),
\label{eq:app-caustic-expansion}
\end{equation}
where
\begin{equation}
B_j(x):=\frac{\rho(y_j)a_j(0)}{\sqrt{\det H_j(x)}}>0.
\label{eq:app-Bj}
\end{equation}
Consequently,
\begin{equation}
F_\sigma(x)
=
\frac12 d^2
-\frac{k}{2}\sigma^2\log(2\pi\sigma^2)
-\sigma^2 \log\!\left(\sum_{j=1}^m B_j(x)\right)
+O(\sigma^4).
\label{eq:app-caustic-free-energy}
\end{equation}
\end{thm}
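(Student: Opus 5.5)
The plan is to partition the integral \eqref{eq:app-psigma} with a smooth partition of unity adapted to the neighbourhoods $V_j$. I will choose cutoff functions $\chi_j\in C_c^\infty(V_j)$ with $\chi_j\equiv1$ near $y_j$, and set $\chi_0:=1-\sum_j\chi_j$. Since $M_i\cap M_j=\varnothing$, I can take the $V_j$ pairwise disjoint, so this is legitimate. Then
\[
p_\sigma(x)=\sum_{j=1}^m \int_{U_j}\chi_j(\phi_j(u))\,\rho(\phi_j(u))\,a_j(u)\,e^{-\Phi_j(u;x)/\sigma^2}\,du \;+\; R_\sigma(x),
\]
where $R_\sigma$ collects the part of the integral weighted by $\chi_0$. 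On the support of $\chi_0$ the separation hypothesis gives $\|x-y\|^2\ge d^2+\delta$. Hence
\[
|R_\sigma(x)|\le e^{-(d^2+\delta)/(2\sigma^2)}\int_K\rho\,d\mu_K=O\bigl(e^{-(d^2+\delta)/(2\sigma^2)}\bigr),
\]
using only that $\rho\,d\mu_K$ is a finite measure.

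For each branch I will apply the single-branch Laplace lemma \eqref{eq:app-single-laplace} with $f=(\chi_j\circ\phi_j)(\rho\circ\phi_j)a_j\in C_c^\infty(U_j)$ and $\Phi=\Phi_j(\cdot;x)$. The hypotheses of that lemma are met for three reasons.
\begin{itemize}
\item[(a)] Since every $y_j$ is a nearest point, $\Phi_j(0;x)=d^2/2$ for all $j$; this common value is what makes all $m$ terms share the single exponential factor.
\item[(b)] We have $\nabla_u\Phi_j(0;x)=0$ because $x-y_j\perp T_{y_j}M_j$.
\item[(c)] $H_j(x)>0$ by the nondegeneracy assumption.
\end{itemize}
The lemma then gives each branch integral as $(2\pi\sigma^2)^{k/2}e^{-d^2/(2\sigma^2)}\bigl(B_j(x)+O(\sigma^2)\bigr)$, with $B_j$ as in \eqref{eq:app-Bj}, since $f(0)=\rho(y_j)a_j(0)$. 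Summing over $j$ yields \eqref{eq:app-caustic-expansion}.

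For the free energy, I will factor out the leading term and write
\[
p_\sigma=(2\pi\sigma^2)^{k/2}e^{-d^2/(2\sigma^2)}\Bigl(\textstyle\sum_j B_j\Bigr)\bigl(1+O(\sigma^2)+O(e^{-\delta/(2\sigma^2)}\sigma^{-k})\bigr).
\]
The factor $\sum_jB_j>0$ may be divided out. The exponential remainder is $O(\sigma^2)$, indeed $O(\sigma^N)$ for every $N$. Taking $-\sigma^2\log$ and using $\log(1+O(\sigma^2))=O(\sigma^2)$ gives \eqref{eq:app-caustic-free-energy}, with total error $\sigma^2\cdot O(\sigma^2)=O(\sigma^4)$.

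The main point needing care is the use of the Laplace lemma with the relative error $O(\sigma^2)$ rather than $O(\sigma)$. This is already supplied by the lemma's parity argument, so here the task reduces to checking its hypotheses: the compact support of $f$, and shrinking $V_j$ so that $\Phi_j(u)\ge d^2/2+c\|u\|^2$ holds on $\operatorname{supp}f$. A secondary issue is consistency of the cutoffs. The region where $0<\chi_j<1$ must lie outside a smaller neighbourhood of $y_j$. On that transition region, either the quadratic lower bound makes the integrand exponentially small, or the separation bound applies once the $V_j$ are chosen compatibly with the hypothesis. Either way such contributions are absorbed into the remainders. For a fixed $x$ no uniformity in $x$ is required.
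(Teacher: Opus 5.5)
Your proposal is correct and follows the paper's proof: you split $p_\sigma(x)$ into the $m$ branch integrals over disjoint neighbourhoods $V_j$ plus a remainder controlled by the separation gap, apply the single-branch Laplace lemma to each branch (all sharing the phase value $d^2/2$), sum, and take $-\sigma^2\log$. Your smooth cutoffs $\chi_j$, which make the lemma's hypothesis $f\in C_c^\infty$ actually hold, and your explicit $O(\sigma^{-k}e^{-\delta/(2\sigma^2)})$ bound on the relative size of the remainder inside the logarithm carefully fill in steps the paper passes over quickly, rather than giving a different route.
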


\begin{proof}
We first decompose the integral defining \(p_\sigma(x)\) into the contributions from small neighbourhoods of the competing nearest points and the remainder away from them.
Choose pairwise disjoint neighbourhoods \(V_j\subset K\) of \(y_j\) \((j=1,\dots,m)\) so small that
\[
K=\Bigl(\bigcup_{j=1}^m V_j\Bigr)\cup \Bigl(K\setminus \bigcup_{j=1}^m V_j\Bigr),
\]
and, by the isolation assumption on the nearest points, there exists \(\delta>0\) such that
\[
\|x-y\|^2\ge d(x,K)^2+\delta
\qquad
\text{for all }y\in K\setminus \bigcup_{j=1}^m V_j.
\]
Hence
\[
p_\sigma(x)=\sum_{j=1}^m I_{j,\sigma}(x)+R_\sigma(x),
\]
where
\[
I_{j,\sigma}(x):=\int_{V_j}\rho(y)\exp\!\left(-\frac{\|x-y\|^2}{2\sigma^2}\right)\,d\mu_K(y),
\]
and
\[
R_\sigma(x):=\int_{K\setminus \cup_j V_j}\rho(y)\exp\!\left(-\frac{\|x-y\|^2}{2\sigma^2}\right)\,d\mu_K(y).
\]
The remainder is exponentially small by the distance gap assumption:
\[
|R_\sigma(x)|\le C e^{-(d^2+\delta)/(2\sigma^2)}.
\]
On the other hand, in local coordinates,
\[
I_{j,\sigma}(x)
=
\int_{U_j} f_j(u)e^{-\Phi_j(u;x)/\sigma^2}\,du,
\qquad
f_j(u):=\rho(\phi_j(u))a_j(u),
\]
and the single-branch Laplace lemma applies since
\[
\Phi_j(0;x)=\frac{d^2}{2},\qquad \nabla_u\Phi_j(0;x)=0,\qquad H_j(x)>0.
\]
Thus,
\[
I_{j,\sigma}(x)
=
(2\pi\sigma^2)^{k/2}
e^{-d^2/(2\sigma^2)}
\bigl(B_j(x)+O(\sigma^2)\bigr).
\]
Summing over $j$ yields \eqref{eq:app-caustic-expansion}. Taking $-\sigma^2\log$ of the result
gives \eqref{eq:app-caustic-free-energy}.
\end{proof}

\begin{remark}
At a projection-regular point, there is only one minimizer, so the sum $\sum_j B_j(x)$ collapses
to a single term. On the caustic, several minimizers coexist at the same exponential scale, and
their amplitudes add. This is the origin of the switching behaviour that appears after differentiating
the free energy.
\end{remark}
\subsection{Uniform multi-branch Laplace expansion near the caustic}

We now extend the fixed-point Laplace expansion to a neighbourhood of the caustic,
making the dependence on \(x\) uniform.

\medskip

Let \(U\subset \mathbb{R}^d\) be a neighbourhood in which the set of nearest points
consists of \(m\) smooth competing branches
\[
y_1(x),\dots,y_m(x)\in K,
\]
depending smoothly on \(x\in U\).

After possibly shrinking \(U\), we assume that each branch is represented by a smooth
embedded submanifold \(M_j\subset K\) with a local parametrization
\[
\phi_j(\,\cdot\,;x):U_j\subset\mathbb{R}^k \to M_j,
\qquad \phi_j(0;x)=y_j(x),
\]
depending smoothly on \(x\in U\).
Define the phase function
\[
\Phi_j(u;x):=\frac{1}{2}\|x-\phi_j(u;x)\|^2.
\]
We impose the following uniform nondegeneracy and separation conditions:

\begin{itemize}
\item[(i)] For each \(j\) and all \(x\in U\), \(u=0\) is a nondegenerate minimum of
\(\Phi_j(\cdot;x)\), i.e.
\[
\nabla_u \Phi_j(0;x)=0,
\qquad
H_j(x):=\nabla_u^2 \Phi_j(0;x) \text{ is positive definite}.
\]

\item[(ii)] The Hessians are uniformly positive definite: there exists \(c>0\) such that
\[
\langle H_j(x)\xi,\xi\rangle \ge c\|\xi\|^2
\qquad (x\in U,\ \xi\in\mathbb{R}^k,\ 1\le j\le m).
\]

\item[(iii)] There exists \(\delta>0\) such that for all \(x\in U\),
\[
\|x-y\|^2 \ge \min_{1\le j\le m} S_j(x) + \delta
\]
for all \(y\in K\) outside the chosen neighbourhoods of the branches,
where \(S_j(x):=\|x-y_j(x)\|^2\).
\end{itemize}

\medskip

\begin{thm}[Uniform multi-branch Laplace expansion]
Under the above assumptions, there exist smooth positive functions
\[B_1,\dots,B_j=\frac{\rho(y_j(x)) a(0; x))}{\sqrt{\det H_j(x)}},\dots,B_m,\] 
on \(U\) such that,
\begin{equation}
p_\sigma(x)
=
(2\pi\sigma^2)^{k/2}
\sum_{j=1}^m
B_j(x)\,e^{-S_j(x)/(2\sigma^2)}
\bigl(1+O(\sigma^2)\bigr),
\qquad \sigma\to 0, 
\tag{13}
\end{equation}
uniformly for \(x\in U\),
Consequently,
\begin{equation}\label{eq:app-logsumexp-free-energy}
F_\sigma(x)
=
-\sigma^2 \log\!\left(
(2\pi\sigma^2)^{k/2}
\sum_{j=1}^m
B_j(x)\,e^{-S_j(x)/(2\sigma^2)}
\right)
+O(\sigma^4),
\tag{14}
\end{equation}
uniformly for \(x\in U\).
\end{thm}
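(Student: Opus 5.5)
The plan is to repeat the fixed-point argument of the multi-branch Laplace expansion on the caustic, while keeping every constant uniform in $x\in U$. The one genuinely new feature is that the branch actions $S_j(x)$ are no longer equal off the caustic, so each branch integral must carry its own exponential factor $e^{-S_j(x)/(2\sigma^2)}$.

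\textbf{Decomposition.} First, after shrinking $U$ to a relatively compact neighbourhood, choose pairwise disjoint neighbourhoods $V_j\subset K$ of the branch points. These should be fixed, independent of $x$, and contained in the charts $\phi_j(U_j;x)$, which is possible by smooth dependence on $x$ and continuity. Then split
\[
p_\sigma(x)=\sum_{j=1}^m I_{j,\sigma}(x)+R_\sigma(x).
\]
By condition (iii), $|R_\sigma(x)|\le C e^{-(\min_j S_j(x)+\delta)/(2\sigma^2)}$ uniformly on $U$.

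\textbf{Uniform single-branch expansion.} In coordinates,
\[
I_{j,\sigma}(x)=\int_{U_j} f_j(u;x)\,e^{-\Phi_j(u;x)/\sigma^2}\,du,
\qquad
\Phi_j(0;x)=\tfrac12 S_j(x).
\]
I would rerun the proof of the single-branch Laplace lemma with $x$ as a parameter.
\begin{itemize}
\item By (ii) and smoothness on the compact closure of $U$, there are constants $c,C$ independent of $x$ with $\Phi_j(u;x)\ge \tfrac12 S_j(x)+c\|u\|^2$ on a fixed ball, and with the cubic remainder, its quartic refinement, and the derivatives of $f_j$ all bounded by $C$.
\item Substitute $u=\sigma v$. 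The odd-order term (linear and cubic in $v$) integrates to zero against the Gaussian, exactly as before.
\item The $O(\sigma^2(1+\|v\|^6))$ remainder, integrated against $e^{-c\|v\|^2}$, is bounded uniformly in $x$. The tail $\|u\|\ge r$ contributes $O(e^{-(S_j(x)+2cr^2)/(2\sigma^2)})$, which is also uniform.
\end{itemize}
Since $\det H_j(x)$ is bounded below, this gives
\[
I_{j,\sigma}(x)=(2\pi\sigma^2)^{k/2}B_j(x)\,e^{-S_j(x)/(2\sigma^2)}\bigl(1+O(\sigma^2)\bigr)
\]
uniformly on $U$, with $B_j$ smooth and positive.

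\textbf{Summing the branches.} The relative errors $O(\sigma^2)$ of the individual branches combine into a single relative error, because all terms are positive:
\[
\sum_j B_j e^{-S_j/(2\sigma^2)}\bigl(1+\epsilon_j\bigr)=\Bigl(\sum_j B_j e^{-S_j/(2\sigma^2)}\Bigr)\bigl(1+O(\max_j|\epsilon_j|)\bigr).
\]
The remainder $R_\sigma$ must then be absorbed. Writing $j_*$ for the index attaining $\min_j S_j(x)$, the dominant sum is at least $\min_j B_j\, e^{-S_{j_*}(x)/(2\sigma^2)}$. Hence
\[
\frac{R_\sigma(x)}{(2\pi\sigma^2)^{k/2}\sum_j B_j(x)\,e^{-S_j(x)/(2\sigma^2)}}=O\!\left(\sigma^{-k}e^{-\delta/(2\sigma^2)}\right)=O(\sigma^2),
\]
uniformly in $x$. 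This proves (13). Taking $-\sigma^2\log$ and using $\log(1+O(\sigma^2))=O(\sigma^2)$ then gives (14) with a uniform $O(\sigma^4)$ error.

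\textbf{Main obstacle.} I expect the hardest step to be the uniformity in the construction of the decomposition: choosing $V_j$ and the charts independently of $x$ while keeping $0$ the unique minimiser of $\Phi_j(\cdot;x)$ inside each chart. This matters most for branches with $S_j(x)$ much larger than $\min_\ell S_\ell(x)$. Those branches are harmless for the sum, being exponentially subdominant, but their local expansion must still hold uniformly.

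I would handle this by applying the implicit function theorem to $\nabla_u\Phi_j=0$ at the base caustic point, then shrinking $U$ so that the critical point stays at $u=0$ (as assumed in (i)) and the coercivity radius $r$ stays fixed. If needed, I would enlarge the remainder region and use (iii) only relative to $\min_j S_j$, since the estimate above needs nothing stronger.
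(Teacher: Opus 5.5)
Your proposal is correct and follows the paper's proof essentially step for step. Both split $p_\sigma$ into branch integrals plus a remainder controlled by (iii), run the single-branch Laplace expansion uniformly in $x$ after rescaling $u=\sigma v$, sum over branches, absorb $R_\sigma$, and take $-\sigma^2\log$ to obtain the uniform $O(\sigma^4)$ error in (14). Your explicit odd-term cancellation, your positivity argument for merging relative errors, your bound $O(\sigma^{-k}e^{-\delta/(2\sigma^2)})$ on the remainder ratio, and your $x$-independent choice of the $V_j$ and coercivity radius are welcome refinements of steps the paper only asserts. In particular, the paper's uniform proof cites only dominated convergence, which by itself gives $o(1)$ rather than $O(\sigma^2)$.
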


\begin{proof}
We decompose the integral defining \(p_\sigma(x)\) into contributions from the
branch neighbourhoods and a remainder:
\[
p_\sigma(x)
=
\sum_{j=1}^m I_{j,\sigma}(x)
+
R_\sigma(x).
\]

By the uniform separation assumption (iii), there exist constants \(C,\delta>0\)
such that
\[
|R_\sigma(x)|
\le
C\exp\!\left(
-\frac{\min_{1\le \ell\le m} S_\ell(x)+\delta}{2\sigma^2}
\right),
\qquad x\in U.
\]
Hence the remainder is uniformly exponentially smaller than the leading terms.

For each branch \(j\), introduce local coordinates:
\[
I_{j,\sigma}(x)
=
\int_{U_j}
f_j(u;x)\,e^{-\Phi_j(u;x)/\sigma^2}\,du,
\]
where \(f_j(u;x)\) is a smooth positive amplitude incorporating the density
and Jacobian factor.

Since \(y_j(x)\) is a nearest point,
\[
\Phi_j(0;x)=\frac{1}{2}S_j(x),
\qquad
\nabla_u \Phi_j(0;x)=0.
\]
By the uniform nondegeneracy assumption (ii),
we have the Taylor expansion
\[
\Phi_j(u;x)
=
\frac{1}{2}S_j(x)
+
\frac{1}{2}\langle H_j(x)u,u\rangle
+
R_{j,3}(u;x),
\]
with
\[
R_{j,3}(u;x)=O(\|u\|^3),
\]
uniformly in \(x\in U\).

Rescaling \(u=\sigma v\), we obtain
\[
I_{j,\sigma}(x)
=
e^{-S_j(x)/(2\sigma^2)}\sigma^k
\int_{U_j/\sigma}
f_j(\sigma v;x)
\exp\!\left(
-\frac{1}{2}\langle H_j(x)v,v\rangle
-
\frac{R_{j,3}(\sigma v;x)}{\sigma^2}
\right)\,dv.
\]
Since
\[
\frac{R_{j,3}(\sigma v;x)}{\sigma^2}
=
O(\sigma\|v\|^3)
\]
uniformly in \(x\), and the condition (ii) \(H_j(x)\) is uniformly positively definite for each $j$, the integrand is dominated by a
Gaussian independent of \(x\).
Hence the dominated convergence theorem applies uniformly, yielding
\[
I_{j,\sigma}(x)
=
(2\pi\sigma^2)^{k/2}
e^{-S_j(x)/(2\sigma^2)}
\left(
B_j(x)+O(\sigma^2)
\right),
\]
uniformly for \(x\in U\), where
\[
B_j(x)
=
\frac{f_j(0;x)}{\sqrt{\det H_j(x)}}.
\]

Summing over \(j\) and absorbing the exponentially small remainder into the error,
we obtain \((13)\).

Finally, writing
\[
p_\sigma(x)
=
A_\sigma(x)\bigl(1+O(\sigma^2)\bigr),
\quad
A_\sigma(x)
:=
(2\pi\sigma^2)^{k/2}
\sum_{j=1}^m
B_j(x)e^{-S_j(x)/(2\sigma^2)},
\]
we have
\[
F_\sigma(x)
=
-\sigma^2 \log A_\sigma(x)
-
\sigma^2 \log\bigl(1+O(\sigma^2)\bigr).
\]
Since \(\log(1+O(\sigma^2))=O(\sigma^2)\) uniformly, the second term is \(O(\sigma^4)\),
which proves \((14)\).
\end{proof}


\subsection{Score as a softmax mixture and multi-branch switching}

Differentiating \eqref{eq:app-logsumexp-free-energy}, we obtain
\begin{equation}
\nabla \log p_\sigma(x)
=
\sum_{j=1}^m
w_j(x,\sigma)
\left(
-\frac{1}{2\sigma^2}\nabla S_j(x)+\nabla \log B_j(x)
\right)
+O(1),
\label{eq:app-score-switching}
\end{equation}
where the weights are
\begin{equation}
w_j(x,\sigma)
:=
\frac{B_j(x)e^{-S_j(x)/(2\sigma^2)}}
{\sum_{\ell=1}^m B_\ell(x)e^{-S_\ell(x)/(2\sigma^2)}}.
\label{eq:app-softmax-weight}
\end{equation}
These weights satisfy
\[
0<w_j(x,\sigma)<1,
\qquad
\sum_{j=1}^m w_j(x,\sigma)=1.
\]
Hence the score is a convex combination of the branchwise gradients, with coefficients given by
a softmax over the local actions $S_j(x)$.

To describe the switching mechanism, define
\[
S_*(x):=\min_{1\le j\le m} S_j(x),
\qquad
I_*(x):=\{j: S_j(x)=S_*(x)\}.
\]
Then, as $\sigma\to0$, the softmax concentrates on the minimizing branches:
\[
w_j(x,\sigma)\to 0
\qquad\text{for } j\notin I_*(x).
\]
If the minimizer is unique, say $I_*(x)=\{j_*\}$, then
\[
w_{j_*}(x,\sigma)\to 1,
\qquad
w_j(x,\sigma)\to 0 \quad (j\neq j_*).
\]
If several branches tie, then only those branches survive at leading order, and the limiting
weights are proportional to their amplitudes:
\[
w_j(x,\sigma)
=
\frac{B_j(x)}{\sum_{\ell\in I_*(x)} B_\ell(x)}
+o(1),
\qquad j\in I_*(x).
\]
The transition between dominant branches occurs when several actions become comparable,
namely in the region where
\[
S_i(x)-S_j(x)=O(\sigma^2)
\]
for competing branches $i,j$. Thus, away from the switching set, the score is asymptotically
governed by a single branch, whereas near the projection caustic it rapidly changes between
competing minimizers. This is the multi-branch replacement of the two-branch logistic switching
formula in the original note. 

\begin{remark}
This switching mechanism is precisely what produces strong instability of the normalised score
direction near the projection caustic. From the perspective of the main text, this provides the
theoretical basis for CBD: peaks of CBD should be interpreted as signatures of rapid variation
in the softmax-selected score direction caused by competition among nearest-point branches.
\end{remark}

\subsection{Explicit example: the transverse cross}

Consider the transverse cross
\[
K=\{(s,0):s\in\mathbb{R}\}\cup\{(0,t):t\in\mathbb{R}\}\subset\mathbb{R}^2,
\]
with branchwise densities $\rho_x$ and $\rho_y$ on the two coordinate axes. For a point
$(u,v)\in\mathbb{R}^2$, one has
\begin{align}
p_\sigma(u,v)
&=
\int_{\mathbb{R}}
\rho_x(s)\exp\!\left(
-\frac{(u-s)^2+v^2}{2\sigma^2}
\right)\,ds
\nonumber\\
&\quad
+
\int_{\mathbb{R}}
\rho_y(t)\exp\!\left(
-\frac{u^2+(v-t)^2}{2\sigma^2}
\right)\,dt.
\end{align}
Factoring out the branch-independent exponentials gives
\begin{equation}
p_\sigma(u,v)
=
a_x(u,\sigma)e^{-v^2/(2\sigma^2)}
+
a_y(v,\sigma)e^{-u^2/(2\sigma^2)},
\label{eq:app-cross-density}
\end{equation}
where
\[
a_x(u,\sigma):=\int_{\mathbb{R}}\rho_x(s)e^{-(u-s)^2/(2\sigma^2)}\,ds,
\qquad
a_y(v,\sigma):=\int_{\mathbb{R}}\rho_y(t)e^{-(v-t)^2/(2\sigma^2)}\,dt.
\]
Hence the free energy is exactly
\begin{equation}
F_\sigma(u,v)
=
-\sigma^2
\log\!\left(
a_x(u,\sigma)e^{-v^2/(2\sigma^2)}
+
a_y(v,\sigma)e^{-u^2/(2\sigma^2)}
\right).
\label{eq:app-cross-free-energy}
\end{equation}

The projection caustic is given by $|u|=|v|$, since the distance to the $x$-axis is $|v|$ and the
distance to the $y$-axis is $|u|$. On the caustic, the two branch contributions survive at the
same exponential order, so \eqref{eq:app-cross-free-energy} reduces to the concrete two-branch
instance of the general theorem above.

\subsection{Generalisations and caveats}

\paragraph{Branches of different dimensions.}
If the branch dimensions are $k_j$ rather than a common $k$, then
\[
I_{j,\sigma}(x)\sim (2\pi\sigma^2)^{k_j/2} B_j(x)e^{-d^2/(2\sigma^2)}.
\]
Thus the competition involves not only the exponential terms but also powers of $\sigma$.
At the amplitude level, lower-dimensional branches may become dominant. Therefore, if one
wants a completely symmetric competition, it is natural to compare branches of the same
dimension.

\paragraph{Focal degeneration.}
If $H_j(x)$ degenerates, the standard Laplace method breaks down and one expects uniform
asymptotics of Airy or Pearcey type. Such points are more singular than the ordinary
projection-caustic regime treated here.

\paragraph{Projection caustic versus singular support.}
The singularity discussed here concerns the multi-valuedness of the projection on the $x$-side.
The set $K$ itself need not be globally smooth. Even a union of smooth branches may produce
a projection caustic when viewed from the ambient space.

\begin{remark}[Relation to classical and singular asymptotics]
The expansions above use the classical Laplace method for integrals with
nondegenerate critical points; see, e.g., \cite{wong2001asymptotic}.
The multi-branch formula should be viewed as the finite-branch analogue
of small-time heat-kernel asymptotics at a cut locus, where several minimizing
geodesics contribute at the same exponential scale; see \cite{barilari2012small}.
In the present work we exclude focal degeneracies by assuming that the branch Hessians
are uniformly positive definite. When this nondegeneracy fails, the ordinary Laplace
expansion is no longer sufficient, and one expects singular asymptotic methods based on
Newton polyhedra, toric resolution, or uniform caustic normal forms to become relevant;
see, for example, \cite{cho-kamimoto-nose2013}
and \cite{kamimoto-nose2016, kamimoto-mizuno}.
\end{remark}

\section{Numerical Analysis}

In this section, we provide empirical evidence that phase transitions occur within projection-caustic switching bands.

\subsection{Cusp experiment: commitment and switching-band geometry}\label{app:cusp}

In this subsection, we study a two-dimensional toy diffusion model whose data support is concentrated near a cusp. The purpose of this experiment is to examine how the branch-selection dynamics of reverse diffusion relate to the projection-induced switching geometry discussed in the main text.

\paragraph{Dataset.}
We consider samples generated from the planar cusp
\[
\gamma(u) = (u^2, u^3), \qquad u \in [-u_{\max},u_{\max}],
\]
with a small additional observational Gaussian noise. In the implementation used here, the parameter $u$ is sampled uniformly, the resulting points are perturbed by small noise, and the dataset is normalised before training. The corresponding code constructs a dataset of $30{,}000$ samples on the cusp manifold with small noise.

\begin{figure}[t]
    \centering
    \includegraphics[scale=0.5]{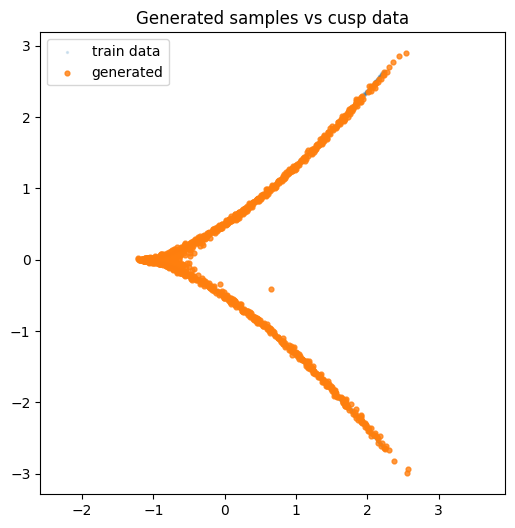}
    \caption{}
    \label{fig:cusp}
\end{figure}

\paragraph{Diffusion model and training.}
We train a standard two-dimensional DDPM with a linear variance schedule over $T=300$ diffusion steps. The denoiser is a multilayer perceptron with sinusoidal time embedding, trained by the usual noise-prediction objective using Adam. In the reported run, the training loss decreases from approximately $0.586$ at the first epoch to approximately $0.317$ after $100$ epochs, indicating that the model learns a stable approximation of the reverse denoising dynamics on this toy distribution.

\paragraph{Generated samples and branch structure.}
After training, we generate reverse trajectories from Gaussian initial conditions and store the full sampling paths. In the reported experiment, the stored trajectory tensor has shape
\[
(301,1500,2),
\]
corresponding to $1500$ sampled trajectories over $301$ saved reverse-time states. The final generated samples clearly concentrate near the two visible branches of the cusp, namely the upper branch $(y>0)$ and the lower branch $(y<0)$.

To analyse branch selection, we assign to each final sample a binary label:
\[
\mathrm{label} =
\begin{cases}
1, & y_{\mathrm{final}} > 0,\\
0, & y_{\mathrm{final}} \le 0.
\end{cases}
\]
In the reported run, this yields $770$ trajectories ending on the upper branch and $730$ trajectories ending on the lower branch.

\paragraph{Intermediate decision field.}
To visualise how branch information emerges during reverse diffusion, we estimate the conditional probability
\[
P(\mathrm{upper}\mid x_k \approx x)
\]
at intermediate reverse-time states using a local $k$-nearest-neighbour estimator on the trajectory cloud. This produces a probabilistic decision field over the ambient plane, together with an empirical decision boundary defined by the level set
\[
P(\mathrm{upper}\mid x_k \approx x)=\tfrac12.
\]
The resulting plots show that the branch decision is initially diffuse, but progressively sharpens into a structured separation between the upper and lower branches as the reverse process approaches the data support (Figure \ref{fig:evolution}).
\begin{figure}[htbp]
    \centering

    \begin{subfigure}{0.45\textwidth}
        \centering
        \includegraphics[width=\linewidth]{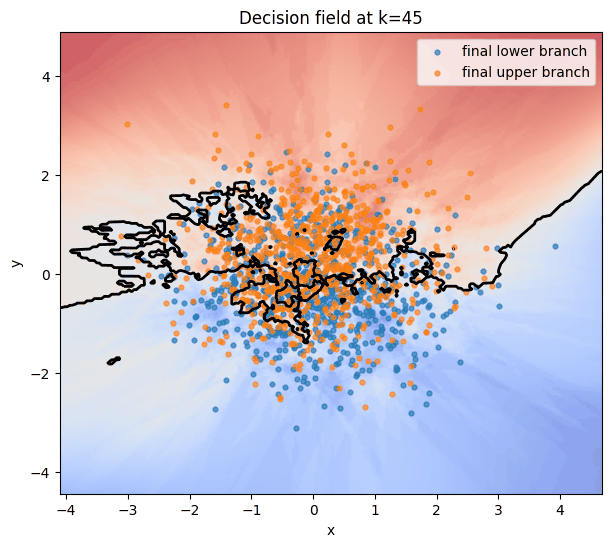}
        \caption{$k=45$}
    \end{subfigure}
    \hfill
    \begin{subfigure}{0.45\textwidth}
        \centering
        \includegraphics[width=\linewidth]{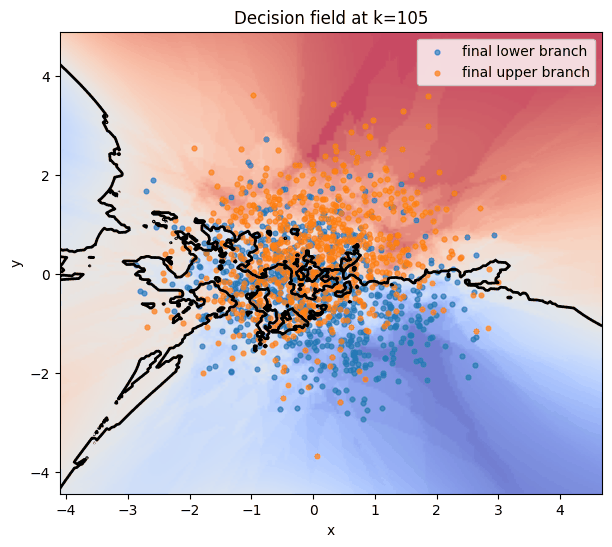}
        \caption{$k=105$}
    \end{subfigure}

    \vspace{0.5em}

    \begin{subfigure}{0.45\textwidth}
        \centering
        \includegraphics[width=\linewidth]{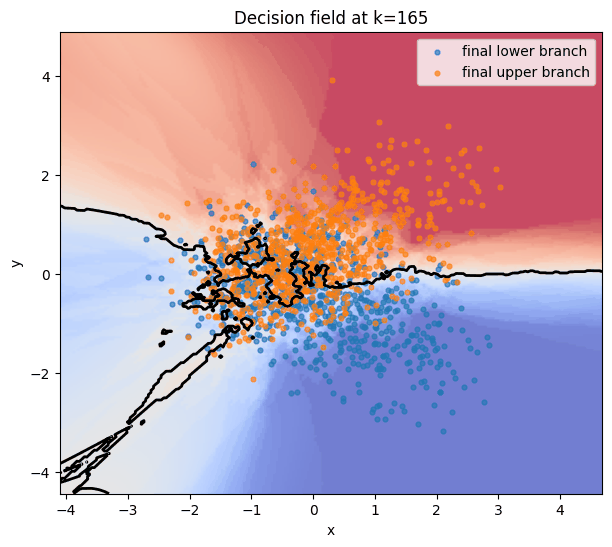}
        \caption{$k=165$}
    \end{subfigure}
    \hfill
    \begin{subfigure}{0.45\textwidth}
        \centering
        \includegraphics[width=\linewidth]{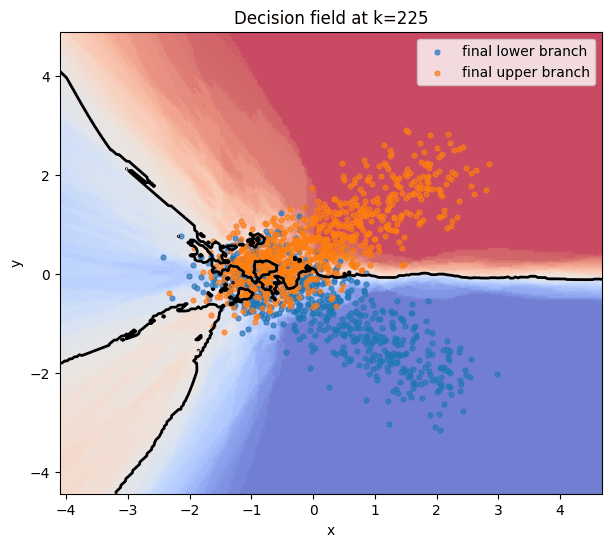}
        \caption{$k=225$}
    \end{subfigure}

    \vspace{0.5em}

    \begin{subfigure}{0.45\textwidth}
        \centering
        \includegraphics[width=\linewidth]{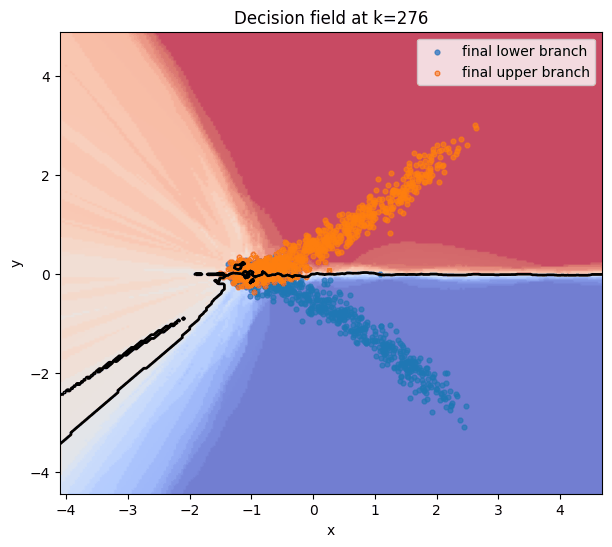}
        \caption{$k=276$}
    \end{subfigure}

    \caption{Evolution of the empirical branch-decision field during reverse diffusion on the cusp dataset.
    At each saved reverse-time index $k$, the background color shows the estimated conditional probability
    $P(\mathrm{upper}\mid x_k \approx x)$, while the black contour denotes the empirical decision boundary
    $P=1/2$. Points are colored by their final branch labels.
    The figure shows that the upper/lower branch decision is initially diffuse and gradually sharpens
    into a structured separation as reverse diffusion approaches the data support.}
    \label{fig:evolution}
\end{figure}
\paragraph{Commitment time.}
To quantify when a trajectory commits to one branch, we define the branch confidence at saved reverse step $k$ by
\[
\mathrm{conf}_i(k) = \left| P_i(k) - \tfrac12 \right|,
\]
where $P_i(k)$ denotes the estimated probability that trajectory $i$ will end on the upper branch, conditioned on its state at time $k$.

Let $x_k^{(i)} \in \mathbb{R}^2$ denote the state of trajectory $i$ at saved reverse-time index $k$, and let
\[
y_j =
\begin{cases}
1, & \text{if trajectory } j \text{ ends on the upper branch},\\
0, & \text{if trajectory } j \text{ ends on the lower branch}.
\end{cases}
\]
To estimate how strongly the intermediate state $x_k^{(i)}$ predicts the final branch outcome, we define
\[
P_i(k) := \widehat{P}(\mathrm{upper}\mid x_k \approx x_k^{(i)}),
\]
where the conditional probability is estimated by a weighted $k$-nearest-neighbor rule on the cloud of points
\[
\{x_k^{(1)},\dots,x_k^{(N)}\}.
\]
More precisely, let $\mathcal{N}_k(x_k^{(i)})$ be the set of the $K$ nearest neighbors of $x_k^{(i)}$ among the time-$k$ trajectory points, and let
\[
w_{ij}^{(k)} = \exp\!\left(-\frac{\|x_k^{(i)}-x_k^{(j)}\|^2}{\tau_k}\right),
\]
where $\tau_k$ is an adaptive temperature parameter taken from the local neighbor-distance scale. In the experiment, we use $K=25$. 

Then we define
\[
P_i(k)
=
\frac{\sum_{j \in \mathcal{N}_k(x_k^{(i)})} w_{ij}^{(k)}\, y_j}
{\sum_{j \in \mathcal{N}_k(x_k^{(i)})} w_{ij}^{(k)}}.
\]

We then define the commitment index of trajectory $i$ as the first saved step at which this confidence exceeds a fixed threshold:
\[
k_i^{\ast} = \min \left\{ k : \left| P_i(k)-\tfrac12 \right| > \tau \right\},
\]
with threshold $\tau = 0.40$ in the present experiment. Thus, commitment means that the future branch outcome has become highly predictable from the current state.


\paragraph{Geometric switching band from the cusp projection problem.}
We next compare the empirical commitment points with the projection-induced switching geometry of the cusp. For a point $(x,y)$ in the ambient plane, we consider the squared distance to the cusp parameterized by $u$:
\[
S(u;x,y) = (x-u^2)^2 + (y-u^3)^2.
\]
For each grid point in the plane, we numerically identify the two smallest local minima of $S(u;x,y)$, denoted by $S_1(x,y)$ and $S_2(x,y)$. The quantity
\[
\Delta S(x,y)=S_2(x,y)-S_1(x,y)
\]
measures the degree of competition between the two leading projection branches.

Motivated by the multi-branch log-sum-exp asymptotics derived in the theoretical part of the paper, the relevant switching regime is expected to occur where competing branch actions differ by $O(\sigma^2)$. In the main text, this appears through the softmax weights and the statement that switching occurs on the scale
\[
S_i(x)-S_j(x)=O(\sigma^2).
\]

Accordingly, we define an approximate switching band by the criterion
\[
\frac{S_2(x,y)-S_1(x,y)}{\sigma_{\mathrm{ref}}^2} \le c,
\]
where $\sigma_{\mathrm{ref}}$ is the effective noise scale associated with a representative reverse-time slice and $c$ is a fixed threshold. In the experiment reported here, the representative slice is chosen at saved-step index $k_{\mathrm{ref}}=165$, corresponding to $t_{\mathrm{ref}}=135$ and
\[
\sigma_{\mathrm{ref}} \approx 0.6823.
\]
Using this scale, the approximate switching-band mask occupies about $14.6\%$ of the plotting grid.

\paragraph{Distance from commitment points to the switching band.}
We then measure the Euclidean distance from each empirical commitment point to the approximate switching band. The resulting histogram (Figure \ref{fig:histo}) shows a strong concentration near small distances. Quantitatively, the mean distance from a commitment point to the switching band is approximately $0.501$, while the median distance is approximately $0.322$. Thus, commitment points tend to lie near the switching ridge generated by competing projections, although the alignment is not exact.

\begin{figure}[h]
    \centering
    \includegraphics[scale=0.5]{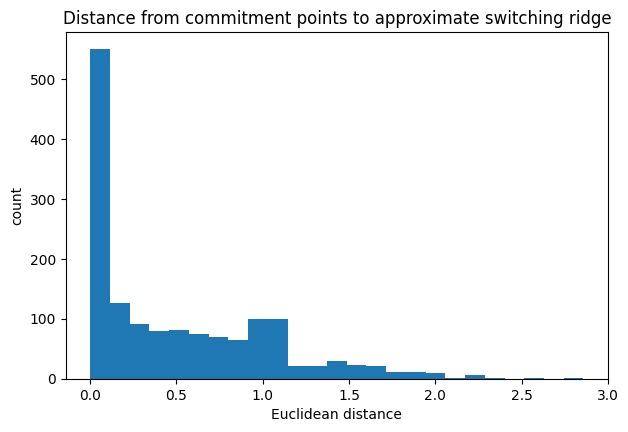}
    \caption{
    }
    \label{fig:histo}
\end{figure}

\paragraph{Residence intervals inside the switching band.}
To examine the temporal relation more directly, we compute for each trajectory the time intervals during which the trajectory remains inside the switching band. Using a geometric tolerance based on the grid spacing, the experiment finds that approximately $56.0\%$ of all trajectory-time points lie inside the band. At the trajectory level, approximately $98.3\%$ of trajectories intersect the switching band at least once. More importantly, approximately $47.1\%$ of trajectories have their commitment index inside some switching-band residence interval, while approximately $12.9\%$ commit inside the first such interval. Conversely, approximately $52.9\%$ commit outside all detected switching-band intervals. 
For those trajectories, the mean temporal gap from commitment to the nearest switching-band segment is about $15.75$ in saved-step index, with median $1.0$. Here, a saved step refers to one stored reverse-time state along the sampled trajectory. The small median indicates that, for many trajectories, commitment occurs very close in time to a switching-band residence interval, while the larger mean suggests the presence of a smaller number of trajectories with substantially larger gaps.
\paragraph{Interpretation.}
These results support the geometric picture developed in the main text. The switching band extracted from the projection competition of the cusp is strongly correlated with the region where branch selection becomes unstable and where commitment tends to emerge. At the same time, commitment is not identical to the geometric switching set itself. 

Overall, this experiment provides empirical evidence that phase-transition-like branch commitment occurs preferentially near projection-caustic switching bands, thereby supporting the interpretation of reverse diffusion transitions as a manifestation of multi-branch geometric competition.

\subsection{CBD-guided branch control on the cross dataset}\label{app:cross}

We further test whether the projection-caustic switching band can be used as an intervention target.
For this purpose, we train a two-dimensional DDPM on the transverse cross
\(
K=\{(x_1,x_2)\in\mathbb{R}^2:x_1x_2=0\}
\)
and compute the normalised CBD field along the ambient plane. The resulting high-CBD ridge is
concentrated near the branch-switching set, as predicted by the two-branch competition picture.

\begin{figure}[h]
    \centering
    \includegraphics[scale=0.5]{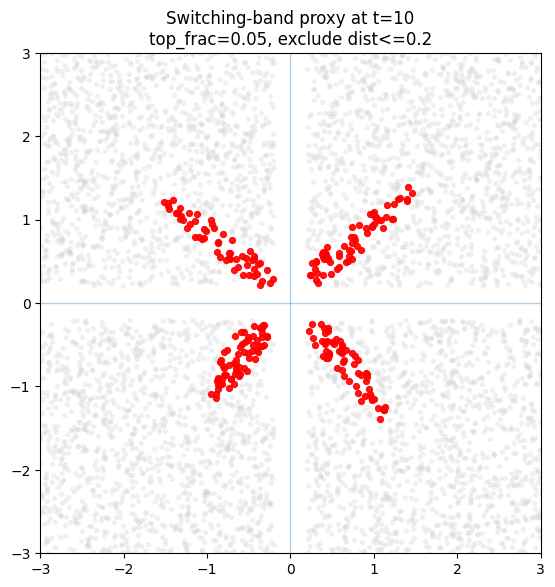}
    \caption{Estimated switching-band proxy for the transverse cross at $t=10$. 
Red points denote the top $5\%$ of the normalised CBD values among points satisfying $\mathrm{dist}>0.2$ from the support. 
The resulting four-lobe structure reflects the symmetry of the two-branch competition and localises the region where the normalised score direction is most unstable.
    }
    \label{fig:switch_band_cross}
\end{figure}

\paragraph{Intervention rule.}
Let $x_t$ denote the current reverse-time state, and let $x_{t-1}^{\mathrm{base}}$ be the standard DDPM update obtained from the pretrained denoiser without any modification.
In the control experiment, we add a small deterministic guidance term after this baseline reverse step:
\[
x_{t-1}^{\mathrm{ctrl}}
=
x_{t-1}^{\mathrm{base}}+\eta\, m_t(x_t)\, g(x_t),
\]
where $\eta>0$ is the control strength, $m_t(x_t)\in\{0,1\}$ is an activation mask, and
\[
g(x_t)=(-x_{t,1},\,0).
\]
Thus, the intervention shrinks the first coordinate toward zero while leaving the second coordinate unchanged.
Geometrically, this biases the reverse trajectory toward the vertical branch of the cross.

In the \emph{always-on} condition, we set $m_t(x_t)\equiv 1$ throughout the whole reverse process.
In the \emph{band-triggered} condition, we activate the same control only when the current state lies sufficiently close to the estimated switching-band proxy:
\[
m_t(x_t)=
\mathbf{1}\bigl\{\mathrm{dist}(x_t,\mathcal B)\le r_{\mathrm{band}}\bigr\},
\]
where $\mathcal B$ is the set of high normalised-CBD points used as the switching-band proxy and $r_{\mathrm{band}}>0$ is a fixed radius threshold.
Outside this region, no intervention is applied and the sampler follows the baseline DDPM dynamics.

We then compare three sampling strategies: baseline (no intervention), always-on intervention at all
reverse steps, and band-triggered intervention applied only when the trajectory enters the switching
band. Using $2000$ generated samples for each condition, the baseline sampler remains nearly balanced
between the two branches (vertical fraction $0.5225$, horizontal fraction $0.4775$). Always-on
intervention strongly biases generation toward the vertical branch (vertical fraction $0.9640$),
while band-triggered intervention also yields a substantial bias (vertical fraction $0.8635$).

The main difference lies in intervention frequency. Always-on control perturbs all $200$ reverse
steps, whereas the band-triggered rule intervenes only $65.5$ times on average (median $71$).
Hence, most of the branch-selection effect can be recovered by acting only near the switching band.
This supports the interpretation that the CBD ridge identifies the geometrically sensitive region
where small perturbations are most effective at altering the eventual branch choice.

\begin{figure}[h]
    \centering
    \includegraphics[width=\linewidth]{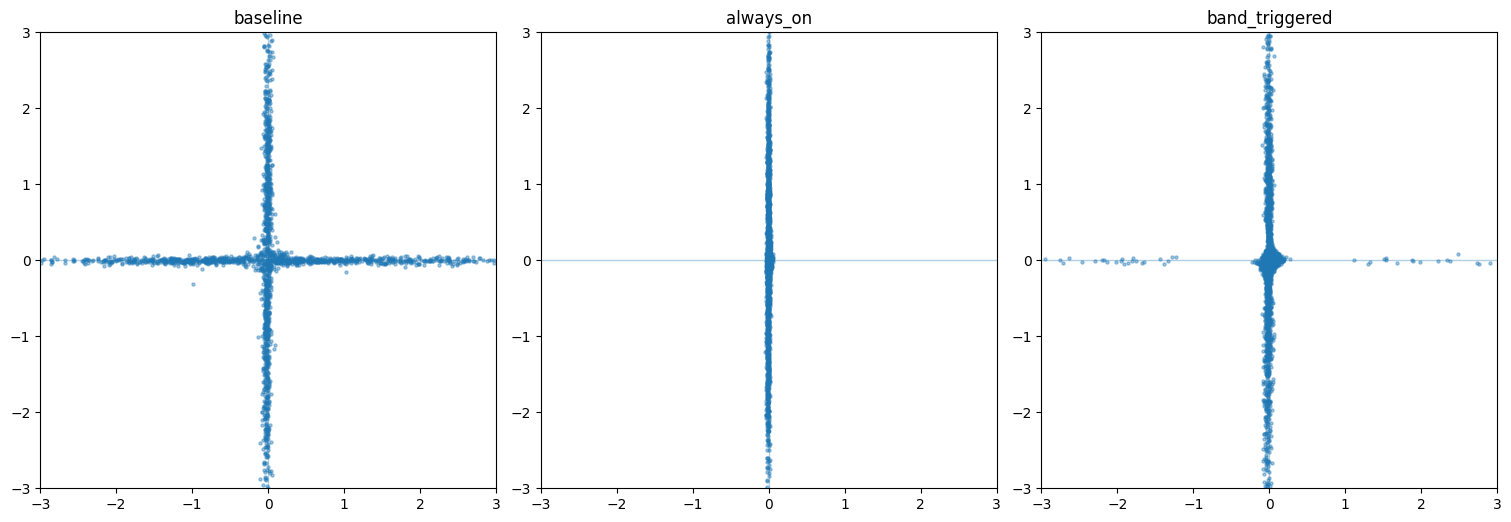}
   \caption{Branch-selection control on the transverse cross. 
From left to right: baseline sampling, always-on intervention, and intervention triggered only inside the estimated switching band. 
The intervention is implemented by adding, after each baseline DDPM reverse step, a small extra displacement in the direction $(-x_1,0)$, which pushes samples toward the vertical branch $x_1=0$. 
The baseline distribution is nearly symmetric between the horizontal and vertical branches, whereas always-on control drives samples almost entirely to the vertical branch. 
The switching-band-triggered rule also induces a clear vertical-branch bias, showing that a substantial part of the control effect is concentrated near the geometrically sensitive switching region.}
    \label{fig:band_triggered_control}
\end{figure}

\subsection{Future-count-based trigger windows and pseudo-online intervention}
\label{app:future-count-trigger-window}

To translate the geometric CBD signal into an actionable intervention rule, we introduce a
future-count statistic along a single baseline reverse trajectory.
The goal is not merely to detect a local CBD peak at the current time, but to identify states
that are upstream of future score-direction instability and are therefore intervention-relevant.

\paragraph{Baseline trajectory.}
We first generate a single baseline reverse trajectory
\(
\{x_t\}_{t=T}^{0}
\)
for a fixed class-conditional MNIST sample using a pretrained DDPM.
At each saved state \(x_t\), we evaluate a finite-difference approximation of CBD based on the
normalised predicted noise direction.
To suppress points that are either too close to or too far from the data manifold proxy,
we apply a band-pass mask determined from nearest-neighbor distances to a reference subset of
MNIST training images.
In the implementation, this band is chosen automatically by quantiles of the trajectory-wise
nearest-neighbor distance distribution.

\paragraph{Future count.}
For each state \(x_{t_i}\) on the baseline trajectory, we do not only inspect the CBD value at the
current time, but also ask whether the same state would exhibit large instability when evaluated at
future reverse times.
This yields a future-count statistic of the form
\[
\mathrm{FC}(i)
:=
\sum_{j>i}
\mathbf{1}\!\left\{
\widehat{\mathrm{CBD}}(x_{t_i}; t_j)
\ge \tau
\right\},
\]
possibly after residualisation and terminal-step suppression.
Intuitively, \(\mathrm{FC}(i)\) measures how strongly the current state is connected to future
instability events in the reverse process.

The future-count statistic was computed using an adaptive residual threshold
\[
\tau_{\mathrm{res}}=\mathrm{median}(r)+3.0 \times 1.4826 \,\mathrm{MAD}(r),
\]
where \(r\) denotes the detrended residual CBD values over non-terminal steps (\(t>10\)).
For each trajectory index \(i\), we counted how many of the next \(w=12\) future steps satisfied
\(r_j \ge \tau_{\mathrm{res}}\).
In the final trigger extraction stage, we further required \(\mathrm{FC}(i)\ge 2\) and
\(r_i\) to lie above the 80th percentile of positive residual values.

\paragraph{Prototype construction and intervention rule.}
Given a selected trigger window \(W\subset\{0,\dots,T-1\}\), we extract from the baseline reverse
trajectory the predicted clean images
\[
\mathcal P_W := \{\hat x_{0,t}^{\mathrm{base}} : t\in W\},
\]
where \(\hat x_{0,t}^{\mathrm{base}}\) denotes the DDPM estimate of the clean image at time \(t\)
along the baseline trajectory.
These baseline predictions are used as trigger prototypes.

During a new reverse sampling run, at each time \(t\) we first compute the current clean-image
prediction \(\hat x_{0,t}\).
Intervention is considered only when \(t\in W\).
We then compare \(\hat x_{0,t}\) with the prototype set \(\mathcal P_W\) in flattened image space
(after rescaling to \([0,1]\)).
More precisely, if
\[
d_t := \min_{p\in \mathcal P_W} \|\hat x_{0,t}-p\|_2
\]
is smaller than a prescribed radius \(r\), then we regard the current state as lying inside the
prototype region and activate intervention.
In the experiments reported here, we used \(r=2.0\).

When intervention is activated, we choose the nearest prototype
\[
p_t^\star := \arg\min_{p\in \mathcal P_W} \|\hat x_{0,t}-p\|_2
\]
and push the current prediction away from that prototype:
\[
\tilde x_{0,t}
=
\hat x_{0,t}
+
\lambda
\frac{\hat x_{0,t}-p_t^\star}{\|\hat x_{0,t}-p_t^\star\|_2+\varepsilon},
\]
where \(\lambda>0\) is the intervention strength and \(\varepsilon\) is a small numerical constant.
Thus, the intervention is not a global perturbation but a localised repulsive update in
\(\hat x_0\)-space relative to the baseline trigger prototypes.
After this modification, the DDPM noise prediction is recomputed consistently from
\(\tilde x_{0,t}\), and the reverse step is then taken using this modified denoising target.
In our default MNIST experiment, we used \(\lambda=0.5\).

This construction makes the intervention both temporally localised and geometry-aware.
Temporal localisation comes from restricting intervention to the trigger window \(W\), while
geometric localisation comes from requiring the current prediction to lie near the baseline
prototype set.
Therefore, the sampler is perturbed only when it revisits a region that is empirically associated
with downstream branch instability, rather than being modified uniformly over the entire reverse
process.

\paragraph{Intervention experiments within various time windows}
First, from the future-count plot (Figure \ref{fig:future-count-profile}), one can visually estimate that the singular time windows are roughly distributed over 
400–300, 300–250, 200–110, and 70–40. We then performed intervention experiments separately within each of these time windows. 
Compared with the baseline sample (Figure \ref{fig:baseline-mnist}), the intervention applied in the 200–110 time window led to a particularly dramatic change in the generated sample (Figure \ref{fig:110-200-intervention}). By contrast, the interventions in the 400–300 and 300–250 windows appear much less effective, judging from the resulting outputs (Figure \ref{fig:250-350-intervention}, \ref{fig:300-400-intervention}). We further conducted intervention experiments using the narrower time windows 50–75 and 85–110. As expected, the intervention in the non-singular window 85–110 (Figure \ref{fig:85-110_final}) appears less effective than that in 50–75 (Figure \ref{fig:50-75_final}). We also observe that, in this model, fine-grained details are generated during the final stage of the reverse process.

\paragraph{Interpretation.}
The role of future count is operational rather than purely descriptive.
CBD itself is a local instability signal, whereas future count aggregates whether a currently visited
state is likely to participate in downstream instability at later reverse times.
From this perspective, the trigger window should be viewed as an empirical proxy for an
intervention-relevant critical interval.
The empirical purpose of the appendix experiment is therefore to show that
window-restricted intervention is substantially more effective than intervention outside the
detected window or over broad non-selective time ranges.
This supports the view that phase commitment in diffusion sampling is concentrated in a narrow
temporal region rather than being uniformly distributed across the entire reverse process.
\begin{figure}[h]
    \centering
    \includegraphics[width=.35\linewidth]{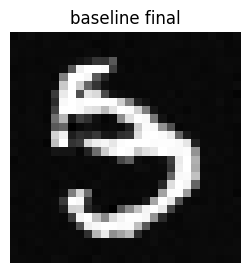}
    \caption{
    baseline
    }
    \label{fig:baseline-mnist}
\end{figure}

\begin{figure}[h]
    \centering
    \includegraphics[width=\linewidth]{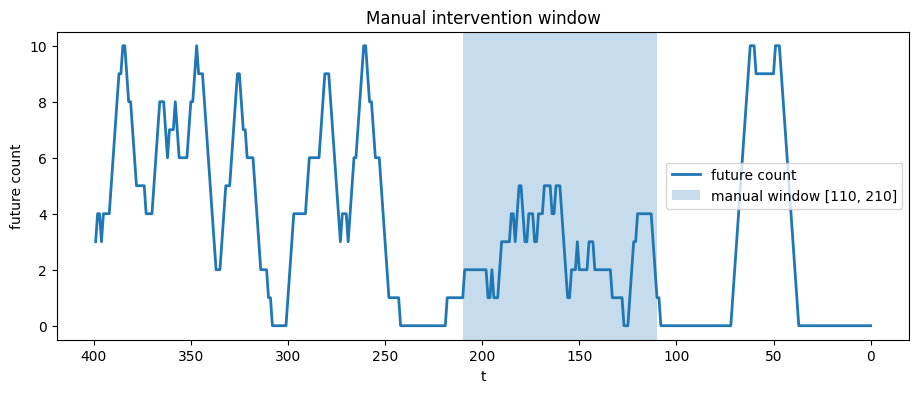}
    \caption{
    Future-count profile along a baseline reverse trajectory.
    High-value contiguous segments are interpreted as candidate trigger windows.
    These windows indicate states that are not only locally unstable, but are also upstream of
    future score-direction switching events.
    }
    \label{fig:future-count-profile}
\end{figure}

\begin{figure}[h]
    \centering
    \includegraphics[width=.55\linewidth]{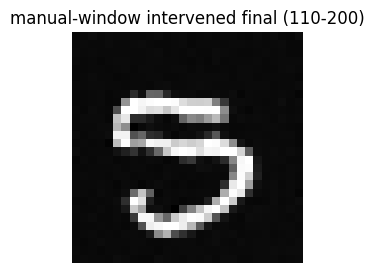}
    \caption{
    We intervene within the time window 110-200
    }
    \label{fig:110-200-intervention}
\end{figure}
\begin{figure}[h]
    \centering
    \includegraphics[width=.55\linewidth]{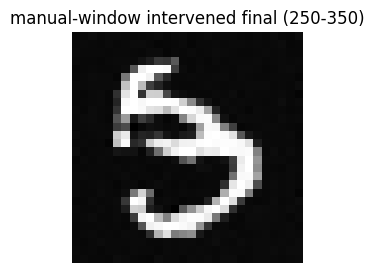}
    \caption{
    We intervene within the time window 250-350.
    }
    \label{fig:250-350-intervention}
\end{figure}
\begin{figure}[h]
    \centering
    \includegraphics[width=.55\linewidth]{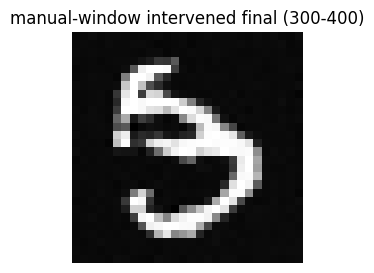}
    \caption{
    We intervene within the time window 300-400.
    }
    \label{fig:300-400-intervention}
\end{figure}

\begin{figure}[h]
    \centering
    \includegraphics[width=\linewidth]{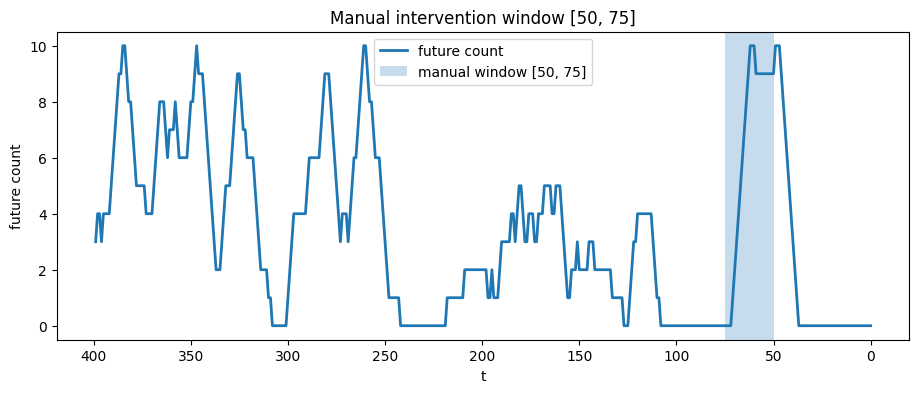}
    \caption{
    we intervene within the time window 50-75
    }
    \label{fig:50-75_CW}
\end{figure}

\begin{figure}[h]
    \centering
    \includegraphics[width=\linewidth]{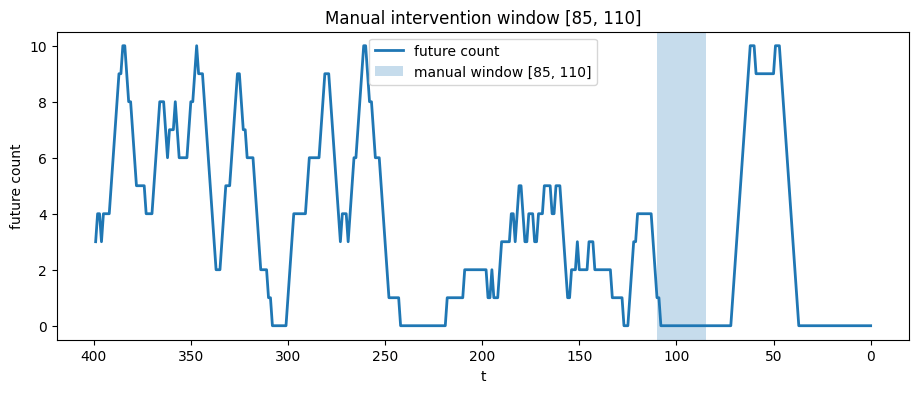}
    \caption{
    we intervene within the time window 85-110
    }
    \label{fig:85-110_CW}
\end{figure}

\begin{figure}[h]
\centering
\begin{minipage}[b]{0.49\columnwidth}
    \centering
    \includegraphics[width=0.9\columnwidth]{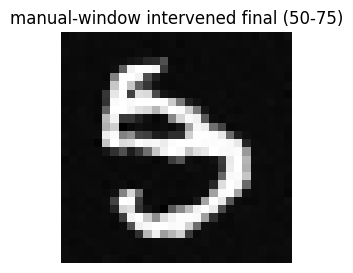}
    \caption{}
    \label{fig:50-75_final}
\end{minipage}
\begin{minipage}[b]{0.49\columnwidth}
    \centering
    \includegraphics[width=0.9\columnwidth]{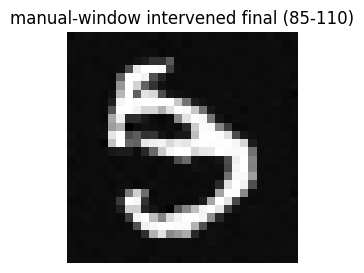}
    \caption{}
    \label{fig:85-110_final}
\end{minipage}
\end{figure}

\FloatBarrier
\subsection{Pseudo-online sensitivity diagnostic and intervention on CIFAR10}\label{appendix:CIFAR10}
\label{app:cifar10_pseudo_online}

To examine whether the proposed direction-instability perspective remains informative beyond toy models, we conducted a pseudo-online intervention experiment on a pretrained CIFAR10 DDPM.
We used the public checkpoint \texttt{google/ddpm-cifar10-32} and sampled a single baseline reverse trajectory with \(250\) inference steps.
Along this fixed trajectory, we evaluated a score directional instability probe (CBD-like estimator) at each saved state and then used selected trajectory windows as candidate intervention regions.

More precisely, let \(x_i\) denote the \(i\)-th saved state on the baseline reverse trajectory, and let \(t_i\) be its corresponding actual diffusion time step.
For each \(x_i\), we computed a trajectory-adapted directional instability quantity
\[
\mathrm{TAD}(x_i,t)
\approx
\frac{\|u_t(x_i+h z)-u_t(x_i-h z)\|}{2h},
\qquad
u_t(x)=\frac{v_t(x)}{\|v_t(x)\|},
\]
where \(v_t(x)\) is the predicted noise direction, \(z\) is a shared random perturbation direction, and \(h\) is chosen proportionally to the noise scale.
To mimic an online diagnostic, we compared the actual-time instability \(\mathrm{TAD}(x_i,t_i)\) with final-stage probe instabilities \(\mathrm{TAD}(x_i,t_{\mathrm{probe}})\) for \(t_{\mathrm{probe}}\in\{1,3,5\}\), and formed the ratio
\[
R_i(t_{\mathrm{probe}})
=
\frac{\mathrm{TAD}(x_i,t_i)}{\mathrm{TAD}(x_i,t_{\mathrm{probe}})}.
\]
We then smoothed this ratio along the trajectory and used it only as a heuristic guide for selecting candidate windows.

Intervention was performed by re-noising a saved state \(x_t\) inside a chosen window:
\[
x_t^{\mathrm{int}} = x_t + \gamma \sigma_t \xi,
\qquad
\xi \sim \mathcal N(0,I),
\]
with intervention strength \(\gamma = 0.35\).
From \(x_t^{\mathrm{int}}\), we continued the reverse process with the same scheduler to obtain a final sample \(x_0^{\mathrm{int}}\).
To quantify the effect of intervention, we measured the deviation from the baseline final sample \(x_0^{\mathrm{base}}\) using
\[
\|x_0^{\mathrm{int}}-x_0^{\mathrm{base}}\|_2
\quad\text{and}\quad
\frac{1}{d}\|x_0^{\mathrm{int}}-x_0^{\mathrm{base}}\|_1.
\]
For each candidate window, we repeated this procedure \(24\) times by sampling intervention indices within the window with replacement.

\paragraph{CIFAR10 pseudo-online diagnostic: localisation of an intervention-sensitive window.}
Figure~\ref{fig:cifar10_tad_ratio} shows the normalised TAD-ratio diagnostic along a single baseline reverse trajectory of the pretrained CIFAR10 DDPM (the seed $= 1000$).
Although the ratio does not exhibit a perfectly isolated sharp spike, it displays a noticeable early-time depression over approximately the first \(75\) saved trajectory steps.
This qualitative signal is consistent with the intervention experiment: among the candidate windows we tested, the early window \([0,75]\) is precisely the one that produces by far the largest final-time deviation from the baseline sample.
More specifically, intervention in the early window yields mean final \(\ell_2\) deviation \(43.32\) and median \(40.00\), with mean per-pixel \(\ell_1\) deviation \(0.657\).
By contrast, a late window \([180,249]\) produces a much smaller effect, with mean final \(\ell_2\) deviation \(6.64\), median \(6.36\), and mean per-pixel \(\ell_1\) deviation \(0.0756\).
An intermediate manually chosen window \([175,225]\) remains somewhat sensitive but is still clearly weaker than the early one, yielding mean final \(\ell_2\) deviation \(9.18\), median \(9.02\), mean per-pixel \(\ell_1\) deviation \(0.101\), and median \(0.101\).
Therefore, even though the normalised TAD ratio is not yet sharp enough to define a unique critical time step in a fully automatic manner, it successfully identifies the most intervention-sensitive \emph{temporal window} in this CIFAR10 experiment.
In this sense, the early dip should be interpreted not as a precise pointwise detector, but as a practically useful ranking signal for selecting candidate critical windows.

\begin{figure}[h]
    \centering
    \includegraphics[width=\linewidth]{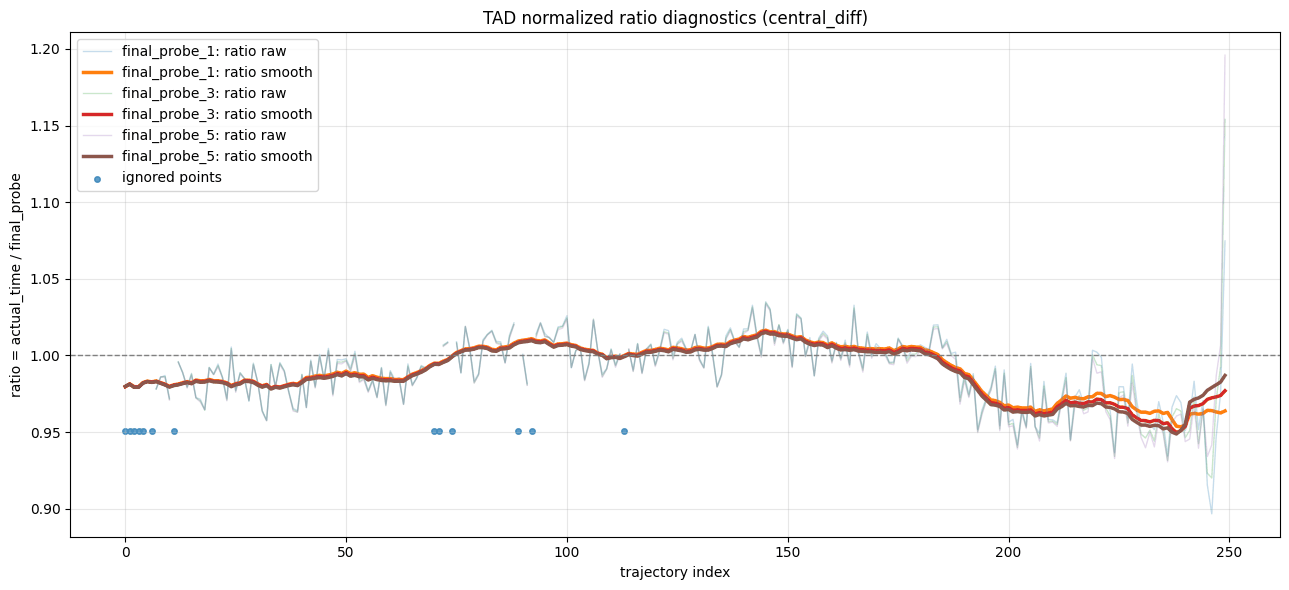}
    \caption{
    Normalised TAD-ratio diagnostic along a baseline reverse trajectory on CIFAR10.
    The ratio does not form a perfectly sharp isolated spike, but it exhibits a clear early-time dip over roughly the first \(75\) saved trajectory steps.
    This dip is consistent with the intervention study: the corresponding early window is the most sensitive one, producing substantially larger final-sample deviation than the late and manual comparison windows.
    }
    \label{fig:cifar10_tad_ratio}
\end{figure}

\paragraph{Additional window-wise sensitivity analysis on the CIFAR10 trajectory.}
To further examine whether the pseudo-online TAD signal tracks intervention sensitivity beyond the three windows reported above, we performed an additional sliding-window analysis on another baseline reverse trajectory. We partitioned the saved trajectory into non-overlapping windows of width $15$ in trajectory index, and for each window repeated the same re-noising intervention experiment $24$ times, sampling intervention indices within the window with replacement. For each trial, we measured both the final $\ell_2$ deviation from the baseline sample and the perceptual LPIPS distance.

The resulting window-wise profile shows a clear global trend (Figure \ref{fig:l2lpips_42}): intervention sensitivity is strongest in the early part of the reverse process and gradually decreases toward the final stage. Concretely, the earliest window $[0,14]$ yields mean final $\ell_2$ deviation $35.68$ and mean LPIPS $0.165$, while the last window $[240,249]$ yields much smaller values, mean final $\ell_2$ deviation $3.33$ and mean LPIPS $0.0053$. Intermediate windows interpolate between these two regimes, with the effect size decreasing overall as the reverse trajectory approaches the terminal denoising stage. This supports the interpretation that the reverse process contains an extended early-to-mid interval in which perturbations are much more consequential for the eventual sample outcome.

\begin{figure}[h]
    \centering
    \includegraphics[width=\linewidth]{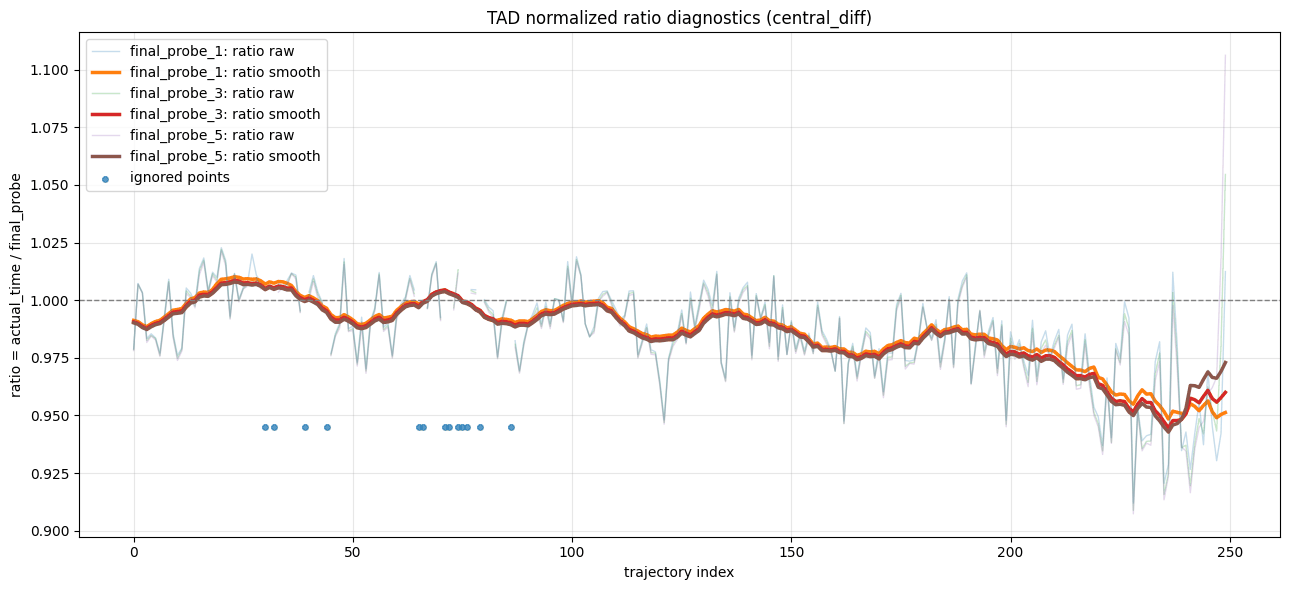}
    \caption{
   TAD for another trajectory (seed = 42)
    }
    \label{fig:cifar10_tad_ratio_42}
\end{figure}

\begin{figure}[h]
    \centering
    \includegraphics[width=\linewidth]{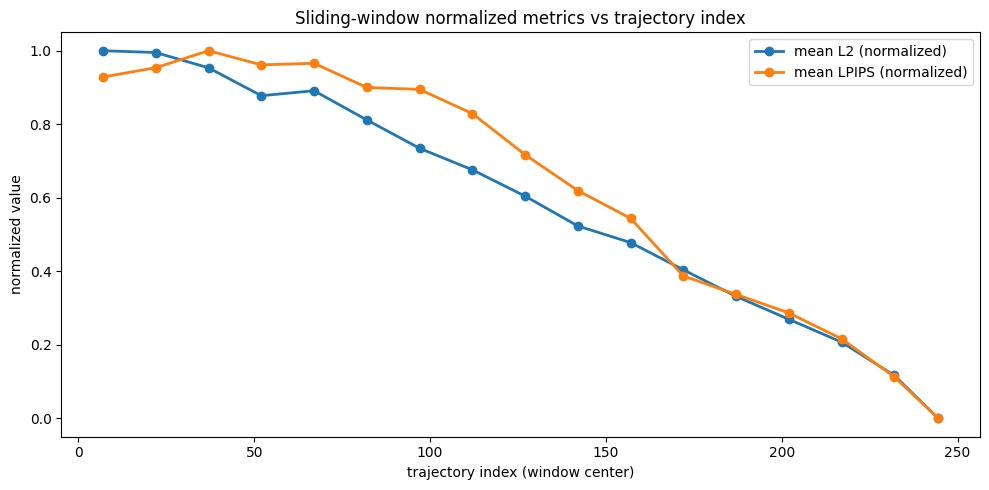}
    \caption{
   Normalised mean $\ell_2$ distance and mean LPIPS distance over trajectory indices.
    }
    \label{fig:l2lpips_42}
\end{figure}

\paragraph{Relation between the TAD-ratio profile and downstream effect size.}
We then compared the smoothed TAD-ratio curve (Figure \ref{fig:cifar10_tad_ratio_42}) with the window-averaged intervention metrics. More precisely, we averaged the smoothed TAD-ratio over each width-$15$ window and compared it against the corresponding window means of final $\ell_2$ deviation and LPIPS. At this coarse scale, the agreement is strong: the Pearson correlation between the window-averaged TAD-ratio and normalised mean final $\ell_2$ is $0.913$, and the correlation with normalised mean LPIPS is $0.919$. Thus, even though the normalised TAD-ratio in Figure~\ref{fig:cifar10_tad_ratio_42} does not yet isolate a unique sharply peaked critical time step, its smoothed trajectory-level profile remains strongly aligned with the actual intervention sensitivity measured by both pixel-space and perceptual final-sample deviation.

Taken together, these additional results sharpen the interpretation of the CIFAR10 experiment. The pseudo-online TAD diagnostic should not be viewed as a pointwise exact detector, but rather as a coarse sensitivity profile that ranks trajectory regions by how strongly perturbations there propagate to the final sample. In this sense, the diagnostic is already useful operationally: it identifies temporally extended intervention-relevant regions, and its agreement with both $\ell_2$ and LPIPS suggests that the detected sensitivity is not merely a pixel-space artifact.

\subsection{Robustness across random seeds on CIFAR-10}
\label{app:cifar_seed_robustness}

To assess whether the pseudo-online signal is stable beyond a single baseline trajectory, we repeated the CIFAR-10 window-wise correlation analysis over five random seeds. For each seed, we generated one baseline reverse trajectory with 250 inference steps, computed the normalised TAD-ratio diagnostic along the trajectory, and evaluated window-wise intervention sensitivity using the same re-noising protocol as in Appendix~\ref{app:cifar10_pseudo_online}. As in the main CIFAR-10 experiment, each candidate window had width 15 in trajectory index, and intervention sensitivity was measured by the mean final deviation from the baseline sample over repeated interventions within the window, using both $\ell_2$ distance and LPIPS.

Across all five seeds, we observed the same qualitative pattern: the \emph{raw} normalised TAD-ratio was consistently \emph{negatively} correlated with the window-wise intervention effect. In other words, the intervention-sensitive region is expressed not as a peak of the raw ratio, but rather as a \emph{depression}. Accordingly, after inverting the normalised ratio, the resulting diagnostic showed stable positive correlation with both mean final $\ell_2$ deviation and mean LPIPS.

Quantitatively, averaged over five seeds, the inverted normalised ratio achieved Pearson correlation $0.6939 \pm 0.0212$ with mean final $\ell_2$ and $0.7005 \pm 0.0561$ with mean LPIPS. The corresponding Spearman correlations were $0.9088 \pm 0.0631$ for mean final $\ell_2$ and $0.8873 \pm 0.0787$ for mean LPIPS (Table \ref{tab:cifar_seed_robustness}, Table \ref{tab:cifar_seed_corr_inv}). Thus, the signal is especially strong at the level of \emph{rank correlation}, suggesting that it is reliable as a \emph{window-ranking diagnostic} for intervention-sensitive regions.

Overall, these multi-seed results support the interpretation that, on CIFAR-10, the practically relevant critical interval is robustly associated with a dip of the normalised TAD-ratio. This strengthens the claim that the pseudo-online diagnostic captures a meaningful intervention-sensitive temporal structure rather than an artifact of a single trajectory realisation.

\begin{table}[h]
\centering
\caption{
Multi-seed robustness of the CIFAR-10 pseudo-online diagnostic over five random seeds.
Here $R$ denotes the normalised TAD-ratio and $1-R$ its inverted version.
Across all seeds, the raw ratio is consistently negatively correlated with window-wise intervention effect, while the inverted ratio is positively correlated with both mean final $\ell_2$ deviation and mean LPIPS.
The particularly high Spearman correlations indicate that the signal is especially effective as a coarse ranking criterion for intervention-sensitive windows.
}
\label{tab:cifar_seed_robustness}
\begin{tabular}{lccc}
\toprule
Metric & Mean & Std. & Median \\
\midrule
Pearson$(R,\ \mathrm{L2})$             & $-0.6939$ & $0.0212$ & $-0.6925$ \\
Pearson$(R,\ \mathrm{LPIPS})$          & $-0.7005$ & $0.0561$ & $-0.6714$ \\
Pearson$(1-R,\ \mathrm{L2})$           & $ 0.6939$ & $0.0212$ & $ 0.6925$ \\
Pearson$(1-R,\ \mathrm{LPIPS})$        & $ 0.7005$ & $0.0561$ & $ 0.6714$ \\
\midrule
Spearman$(R,\ \mathrm{L2})$            & $-0.9088$ & $0.0631$ & $-0.8995$ \\
Spearman$(R,\ \mathrm{LPIPS})$         & $-0.8873$ & $0.0787$ & $-0.9314$ \\
Spearman$(1-R,\ \mathrm{L2})$          & $ 0.9088$ & $0.0631$ & $ 0.8995$ \\
Spearman$(1-R,\ \mathrm{LPIPS})$       & $ 0.8873$ & $0.0787$ & $ 0.9314$ \\
\bottomrule
\end{tabular}
\end{table}

\begin{table}[t]
\centering
\caption{Seed-wise correlation between the inverted TAD-ratio diagnostic and downstream intervention effect size on CIFAR-10.}
\label{tab:cifar_seed_corr_inv}
\begin{tabular}{ccccc}
\toprule
Seed & Pearson ($R$, $\ell_2$) & Pearson ($R$, LPIPS) & Spearman ($\rho$, $\ell_2$) & Spearman ($\rho$, LPIPS) \\
\midrule
0 & 0.661 & 0.651 & 0.833 & 0.777 \\
1 & 0.716 & 0.743 & 0.900 & 0.831 \\
2 & 0.692 & 0.777 & 0.973 & 0.931 \\
3 & 0.691 & 0.671 & 0.973 & 0.949 \\
4 & 0.708 & 0.661 & 0.865 & 0.949 \\
\midrule
Mean   & 0.694 & 0.700 & 0.909 & 0.887 \\
Std.   & 0.021 & 0.056 & 0.062 & 0.079 \\
Median & 0.692 & 0.671 & 0.900 & 0.931 \\
\bottomrule
\end{tabular}
\end{table}

\subsection{Pseudo-online TAD diagnosis and windowed intervention on SD3.5 Medium}
\label{app:sd35_lpips_shiba_psd}

We additionally tested the pseudo-online TAD diagnostic on the foundation text-to-image model \emph{Stable Diffusion 3.5 Medium} under a fixed prompt and seed.
As in the CIFAR-10 study, we first recorded a baseline reverse trajectory in latent space and computed a finite-difference proxy of CBD along the trajectory.
We then defined the pseudo-online TAD ratio by normalising the finite-difference CBD at each trajectory index by its value at the final finite step.
This diagnostic exhibited a clear late rise: the smoothed TAD ratio remained low over most of the trajectory and increased sharply only near the terminal denoising phase.
A sliding-window selection based on this signal identified the late band \((23\text{--}26)\) as the highest-TAD window, while \((15\text{--}18)\) and \((0\text{--}3)\) were used as mid- and low-TAD controls, respectively.

We then performed latent-space interventions restricted to these windows.
A naive comparison using pixel-space distances showed a qualitatively different behaviour from the CIFAR-10 case: the earliest window produced the largest final-image deviation in $\ell_2$ and $\ell_1$ distance, whereas the late high-TAD window did not maximize these raw pixel metrics.
This indicates that, in SD3.5 Medium, the earliest denoising steps have very strong \emph{compositional sensitivity}: perturbations in this regime strongly affect global layout, object placement, and coarse scene geometry, which dominate pixel-space deviations.

To better separate compositional sensitivity from perceptual sensitivity, we additionally evaluated the same interventions with LPIPS.
Here the ordering reversed in a meaningful way.
For the three aggregated windows, the LPIPS scores were
\[
\mathrm{critical}\ (23\text{--}26): 0.708,\qquad
\mathrm{mid}\ (15\text{--}18): 0.698,\qquad
\mathrm{low}\ (0\text{--}3): 0.669,
\]
so that the LPIPS ranking matched the pseudo-online TAD ranking:
\[
\mathrm{critical} > \mathrm{mid} > \mathrm{low}.
\]
Thus, while the earliest window maximizes raw pixel displacement, the late high-TAD band is the one that produces the strongest \emph{perceptual} change.

To further localise this effect, we ran a quick scan over several two-step windows.
In pixel-space $\ell_2$, the strongest effect again appeared at the very beginning of sampling, e.g.\ \((0,1)\), and then decayed monotonically over early windows.
By contrast, in LPIPS the late window \((25,26)\) achieved the highest score among the tested two-step windows, exceeding even the earliest window \((0,1)\).
This supports the view that, in large text-to-image models, there are at least two distinct notions of intervention sensitivity:
an \emph{early compositional sensitivity}, which dominates pixel-space deviations, and a \emph{late directional/perceptual sensitivity}, which is more faithfully tracked by the pseudo-online TAD signal.

Overall, these SD3.5 Medium experiments suggest that the pseudo-online TAD diagnostic should not be interpreted as a detector of the window that maximizes final pixel-space displacement.
Rather, for large text-to-image models, it appears to identify a late regime of heightened directional instability that is more closely aligned with perceptual sensitivity than with raw $\ell_2$ change.
This distinction is consistent with the qualitative observation that early interventions mainly alter global scene composition, whereas late interventions more selectively affect perceptual appearance and local structure.

\begin{table}[t]
\centering
\caption{
Selected two-step window interventions on SD3.5 Medium.
The late window $(25,26)$ has the largest pseudo-online TAD score and also the largest LPIPS, while the earliest window $(0,1)$ maximizes final pixel-space $\ell_2$ deviation.
}
\label{tab:sd35_quickscan_lpips_timestep}
\begin{tabular}{lccccc}
\toprule
Window & Traj. idx. & Time steps & Mean TAD ratio (smooth) & Final $\ell_2$ & LPIPS \\
\midrule
$w_{25,26}$ & $(25,26)$ & $(199,110)$ & 0.503 & 40.15  & \textbf{0.196} \\
$w_{0,1}$   & $(0,1)$   & $(1000,987)$ & 0.051 & \textbf{207.72} & 0.187 \\
$w_{23,24}$ & $(23,24)$ & $(347,278)$ & 0.305 & 34.31  & 0.117 \\
$w_{2,3}$   & $(2,3)$   & $(974,960)$ & 0.053 & 137.96 & 0.113 \\
$w_{4,5}$   & $(4,5)$   & $(945,929)$ & 0.054 & 90.99  & 0.086 \\
$w_{3,4}$   & $(3,4)$   & $(960,945)$ & 0.053 & 73.05  & 0.084 \\
$w_{9,10}$  & $(9,10)$  & $(857,836)$ & 0.062 & 43.32  & 0.075 \\
$w_{10,11}$ & $(10,11)$ & $(836,814)$ & 0.064 & 39.49  & 0.070 \\
\bottomrule
\end{tabular}
\end{table}
\begin{figure}[h]
    \centering
    \includegraphics[width=\linewidth]{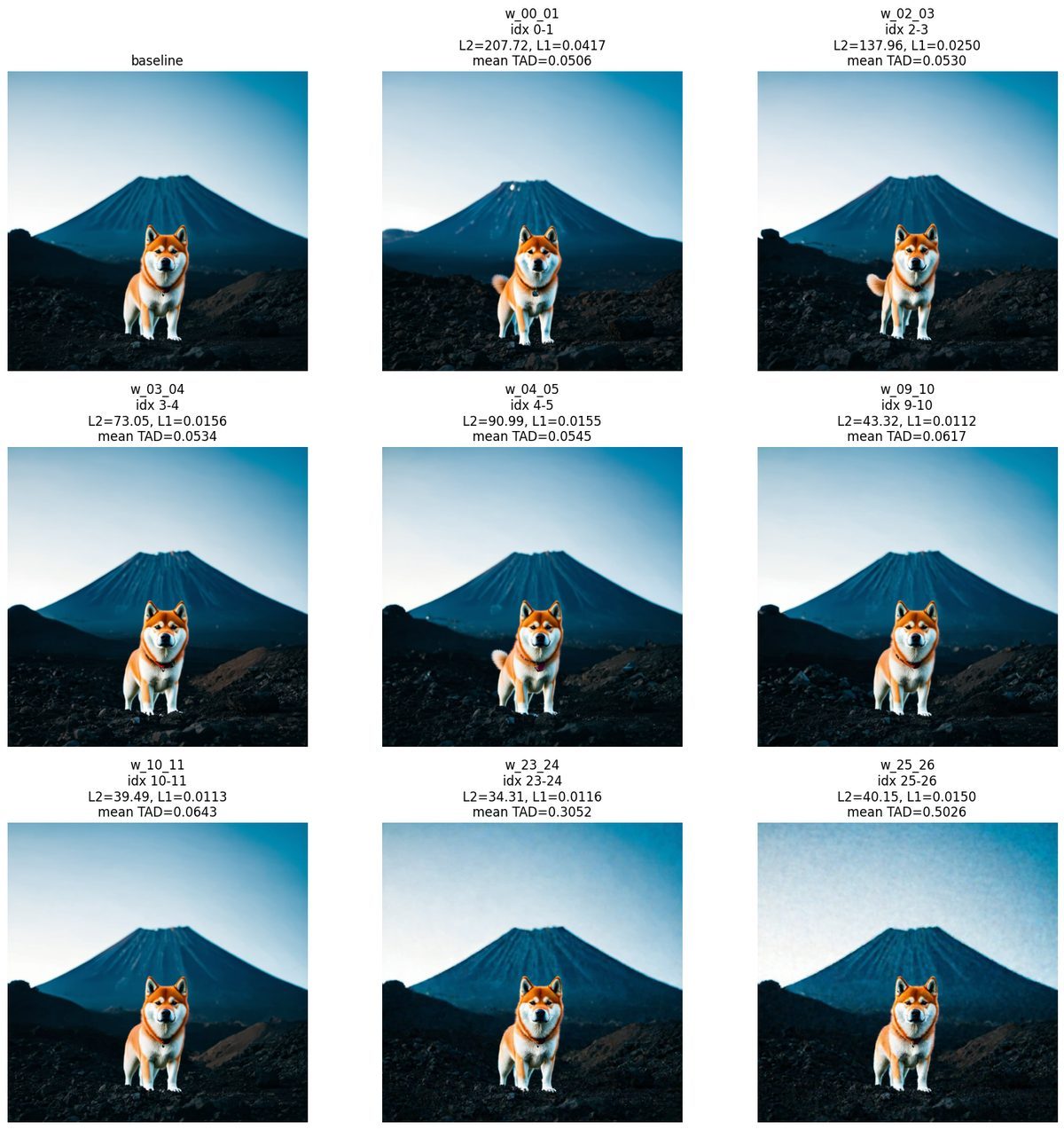}
   \caption{
Qualitative intervention study on SD3.5 for the prompt ``a cinematic photo of a shiba inu standing in front of a volcano, highly detailed.'' 
The leftmost panel is the baseline sample, and each remaining panel shows the final image obtained by applying noise only in the indicated trajectory window. 
Titles report the corresponding window, trajectory indices, final $\ell_2$ / $\ell_1$ distance from the baseline, and the window-averaged TAD value. 
Windows with small mean TAD produce much larger output deviations, while windows with larger mean TAD are comparatively insensitive, supporting the interpretation of TAD as a practical ranking signal for intervention sensitivity.
    }
    \label{fig:shiba}
\end{figure}

\paragraph{Robustness study on SD3.5 Medium.}
We further evaluated the pseudo-online TAD diagnostic on \texttt{stabilityai/stable-diffusion-3.5-medium} using a small but diverse prompt set designed to include both relatively stable prompts and prompts with concept fusion or ambiguous morphology. Concretely, we used six prompts:
\emph{(i)} ``a house that looks like a ship, realistic architectural photography,''
\emph{(ii)} ``a cinematic photo of a shiba inu standing in front of a volcano, highly detailed,''
\emph{(iii)} ``a lighthouse fused with a giant seashell on the coast, cinematic realism,''
\emph{(iv)} ``a mountain shaped like a sleeping lion, realistic landscape photo,''
\emph{(v)} ``a jellyfish that resembles a chandelier, dark background, cinematic photo,''
and \emph{(vi)} ``a realistic photo of a red vintage car parked on a rainy street at night.''
For each prompt, we used five random seeds $\{1234,1235,1236,1237,1238\}$, yielding $30$ runs in total.

All runs were performed at resolution $768 \times 768$ with $16$ denoising steps, classifier-free guidance scale $7.0$, and maximum sequence length $256$. We enabled skip-layer guidance with layers $[7,8,9]$, scale $2.8$, start ratio $0.01$, and stop ratio $0.2$. Along each baseline trajectory, we recorded the latent state at every denoising step via the step-end callback. We then computed a finite-difference CBD estimate on the normalised guided velocity field, using $3$ random probe directions, perturbation scale $\delta = \max(5\times 10^{-3}\sigma_t, 10^{-4})$, and evaluating every $2$ steps; missing steps were filled by linear interpolation. The resulting fdCBD curve was smoothed by a moving average with window size $3$, and converted into the normalised TAD ratio used in the main text.

To assess downstream intervention sensitivity, we scanned contiguous time windows of length $2$ with stride $2$, excluding the last denoising step. For each active window, we injected Gaussian noise into the latent variable only within that window, with amplitude proportional to the current noise scale $\sigma_t$. More precisely, if $W$ is the active window and $\{\sigma_t\}_{t\in W}$ are the corresponding scheduler noise levels, we used a budget-normalised intervention strength
\[
\gamma_W
=
\frac{\gamma_0}{\left(\sum_{t\in W}\sigma_t^p\right)^{1/p}},
\qquad
\gamma_0 = 0.20,\quad p=2,
\]
and perturbed the latent at each step $t\in W$ by $\gamma_W \sigma_t \varepsilon_t$ with fresh Gaussian noise $\varepsilon_t$. For each window, we measured the final deviation from the baseline sample by the pixel-space $\ell_2$ distance, mean $\ell_1$ distance, and LPIPS. We then compared the window-averaged TAD ratio against these downstream effect sizes within each run using both Pearson and Spearman correlations, and finally aggregated these statistics across all $30$ runs.

\begin{table}[t]
\centering
\caption{Aggregated correlation statistics between the window-averaged TAD ratio and downstream intervention effect on SD3.5 Medium, over 6 prompts and 5 seeds ($30$ runs total).}
\label{tab:sd35_aggregated_corr}
\begin{tabular}{lcccc}
\toprule
Metric & Mean & Std & Median & $n$ \\
\midrule
Pearson$(\mathrm{TAD}, \ell_2)$      & $-0.470$ & $0.117$ & $-0.444$ & $30$ \\
Pearson$(\mathrm{TAD}, \mathrm{LPIPS})$ & $-0.230$ & $0.430$ & $-0.359$ & $30$ \\
Spearman$(\mathrm{TAD}, \ell_2)$     & $-0.887$ & $0.109$ & $-0.929$ & $30$ \\
Spearman$(\mathrm{TAD}, \mathrm{LPIPS})$ & $-0.577$ & $0.529$ & $-0.875$ & $30$ \\
\bottomrule
\end{tabular}
\end{table}
Table~\ref{tab:sd35_aggregated_corr} shows that the TAD ratio has a consistently strong negative rank correlation with downstream $\ell_2$ intervention effect across SD3.5 runs, while the LPIPS correlation is more variable. This supports the interpretation of TAD as a robust sensitivity ranking over trajectory windows rather than a precise linear predictor of final intervention magnitude.

\begin{table}[t]
\centering
\caption{Prompt-wise mean correlation statistics on SD3.5 Medium, averaged over 5 seeds for each prompt. Here, $P$ and $S$ denote Pearson and Spearman correlation, respectively.}
\label{tab:sd35_promptwise_corr}
\small
\setlength{\tabcolsep}{4pt}
\begin{tabular}{clcccc}
\toprule
Idx & Prompt & P$(T,\ell_2)$ & P$(T,\mathrm{LPIPS})$ & S$(T,\ell_2)$ & S$(T,\mathrm{LPIPS})$ \\
\midrule
0 & House $\leftrightarrow$ ship
  & $-0.446$ & $-0.247$ & $-0.864$ & $-0.671$ \\
1 & Shiba inu + volcano
  & $-0.396$ & $-0.299$ & $-0.921$ & $-0.714$ \\
2 & Lighthouse $\leftrightarrow$ seashell
  & $-0.489$ & $-0.391$ & $-0.943$ & $-0.786$ \\
3 & Mountain $\leftrightarrow$ lion
  & $-0.366$ & $\phantom{-}0.620$ & $-0.729$ & $\phantom{-}0.421$ \\
4 & Jellyfish $\leftrightarrow$ chandelier
  & $-0.623$ & $-0.587$ & $-0.993$ & $-0.993$ \\
5 & Red vintage car
  & $-0.499$ & $-0.477$ & $-0.871$ & $-0.721$ \\
\bottomrule
\end{tabular}
\end{table}
\section{SD3.5, Noise-injecting intervention, Shiba inu and Fuji}\label{app:sd35_intervention_shiba_inu}

We applied noise-injecting interventions over fixed-width time windows (0–1,1–2,…,25–26) and generated the corresponding output images (Figure \ref{fig:sliding_shiba}). For this time we strengthen the intervention strength. Figure \ref{fig:sliding_shiba_cbd_26} shows the CBD curve along the baseline trajectory in the low-noise regime. It can be visually observed that the intervention effect becomes weaker after the CBD peak.

\begin{figure}[t]
    \centering
    \includegraphics[width=\linewidth]{Figures/SD3.5/sliding_shiba_fuji/sliding_shiba.jpg}
   \caption{
   $w_{mn}$ denotes the case where intervention is performed over the time window (m,n). L2 and L1 indicate the L2 and L1 distances, respectively, between the baseline generation result and the intervention result. If we intervene within each time window in 0-11, cartoon-like feature appears. The intervention result switches from cartoon-like features to cinematic features in the time window 12–15. This switching phenomenon can be described by the ridge window in the CBD plot (Figure~\ref{fig:sliding_shiba_cbd_26}) within the time 12-15. This example suggests that the selection of whether the output becomes cinematic or not is made within the 12–15 time window. The interventions within late windows do not have effective consequences. Across all interventions, the overall composition of “Mount Fuji behind the dog” is preserved. The intervention results are highly unstable in the 0–5 time window, consistent with the ridge in the corresponding CBD plot in Figure~\ref{fig:sliding_shiba_cbd}, which also appears during this interval. By contrast, the intervention outcomes around 7–11 appear relatively stable, which may be attributed to the corresponding CBD plot in Figure~\ref{fig:sliding_shiba_cbd} being flat in that interval.
    }
    \label{fig:sliding_shiba}
\end{figure}

\begin{figure}[h]
    \centering
    \includegraphics[width=\linewidth]{Figures/SD3.5/sliding_shiba_fuji/index_26.png}
   \caption{
CBD plots along the baseline trajectory at late time step $26$. Although the values remain very high for all indices, we observe a pronounced peak around 12–15.
    }
    \label{fig:sliding_shiba_cbd_26}
\end{figure}

\begin{figure}[h]
    \centering
    \includegraphics[width=\linewidth]{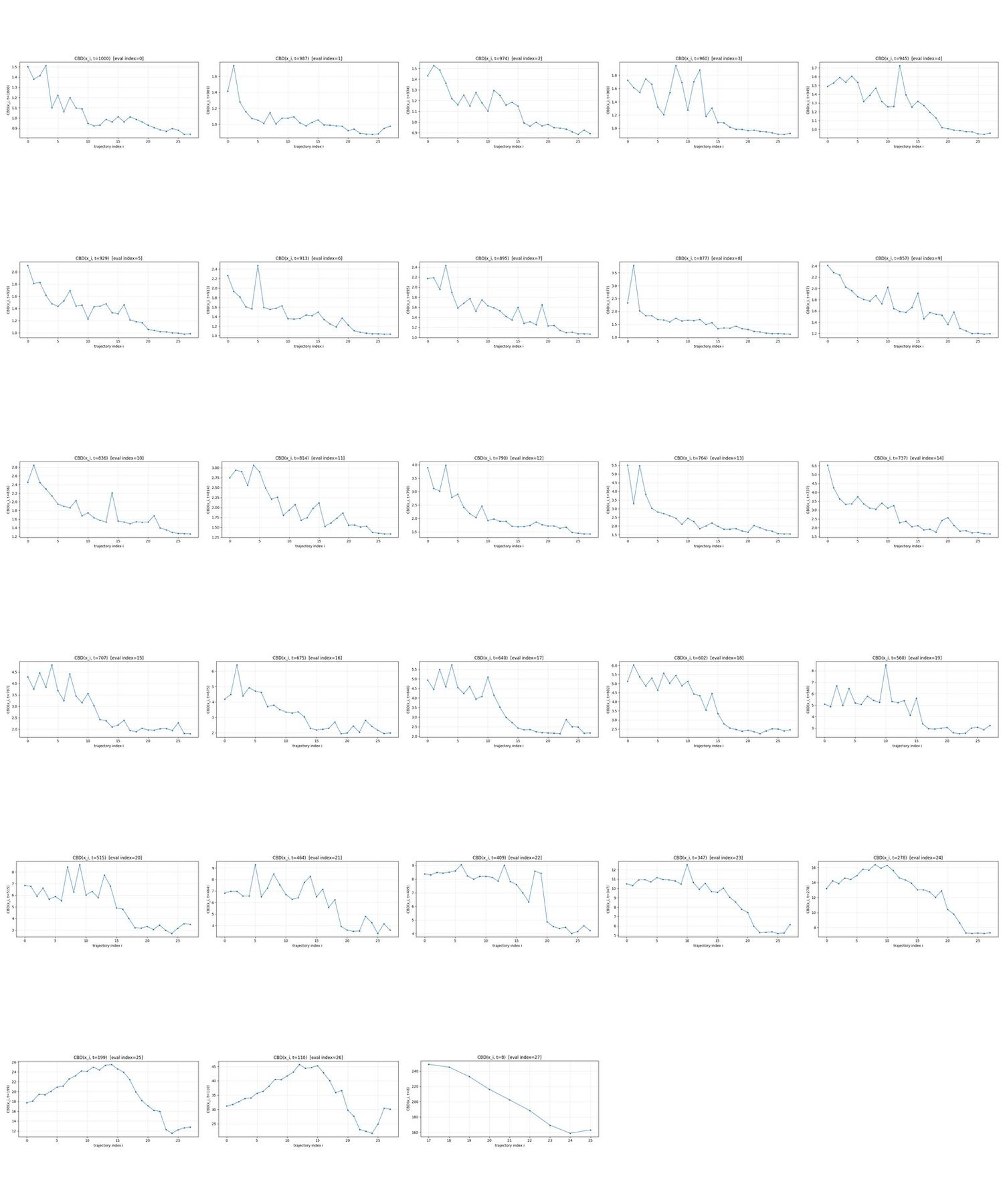}
   \caption{
 early to late time CBD plots
    }
    \label{fig:sliding_shiba_cbd}
\end{figure}


\section{CIFAR-10 DDPM: full pseudo-online intervention sweep}
\label{app:cifar10_pseudo_online}

The main-text §\ref{subsec:Cifar_psd} reports the central
per-trajectory result on CIFAR-10 DDPM
($\bar\rho = 0.928$, five seeds).
Here we provide the complete sliding-window observable on which that
analysis is built.
Figure~\ref{fig:cifar10_sliding_5seed} shows the sliding-window TAD
profile (window width 15 time steps, mean$\,\pm\,1\sigma$ across the
same five seeds as in the main text), aggregated over the entire
reverse trajectory.
The instability band is concentrated in a narrow temporal window
near $t/T \approx 0.7$, well separated from both endpoints, and the
seed-averaged shape is reproduced by every individual seed.
This corroborates the per-trajectory analysis: the localised
instability band is a stable feature of the pretrained CIFAR-10
DDPM, not an artefact of any single trajectory.

\begin{figure}[t]
    \centering
    \includegraphics[width=0.78\linewidth]%
        {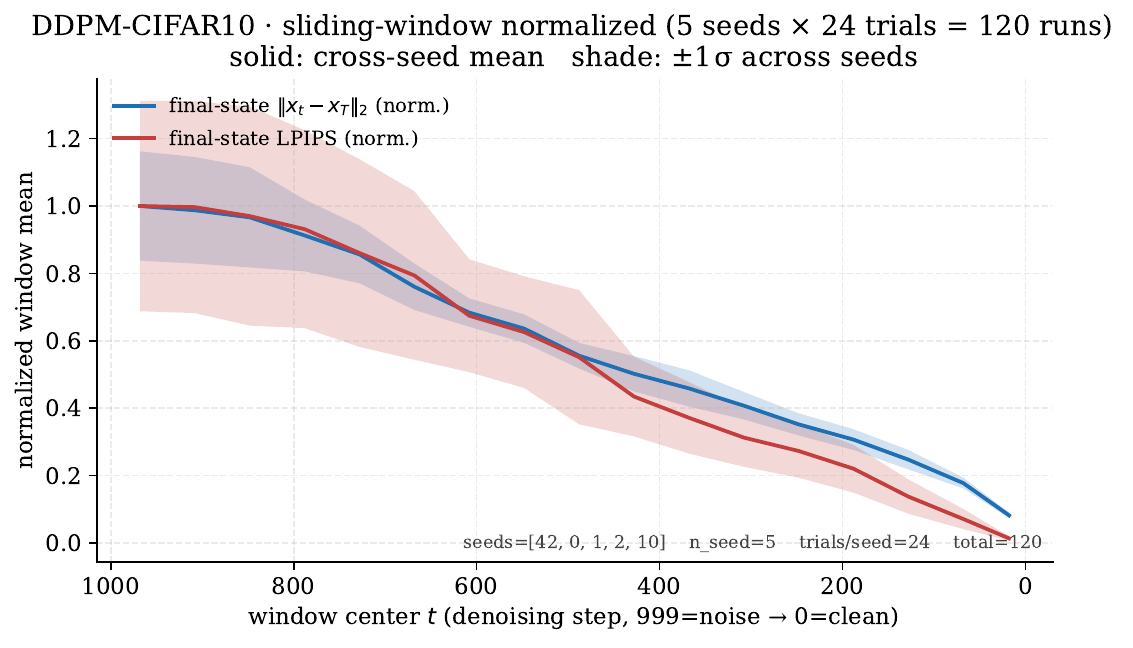}
    \caption{CIFAR-10 DDPM: sliding-window TAD profile
    (mean$\,\pm\,1\sigma$, five seeds, window width 15 time steps).
    The band is concentrated in a narrow window near
    $t/T \approx 0.7$ and is reproduced by every individual seed.}
    \label{fig:cifar10_sliding_5seed}
\end{figure}

\section{CIFAR-10 DDPM: seed-level robustness of CBD-LPIPS coupling}
\label{app:cifar_seed_robustness}

For completeness we list the seed-resolved Pearson correlations
between the normalised CBD proxy and per-step LPIPS used in the
main text, along with their $p$-values:

\begin{center}
\begin{tabular}{lcc}
\toprule
Seed & Pearson $\rho$ & $p$-value \\
\midrule
42 & 0.9526 & $3.7\times10^{-130}$ \\
 0 & 0.9078 & $1.4\times10^{-95}$  \\
 1 & 0.9656 & $3.7\times10^{-147}$ \\
 2 & 0.8486 & $1.6\times10^{-70}$  \\
10 & 0.9637 & $3.0\times10^{-144}$ \\
\midrule
mean & 0.9277 & --- \\
$\sigma$ & 0.0500 & --- \\
\bottomrule
\end{tabular}
\end{center}

All five seeds exceed $\rho = 0.84$ with $p < 10^{-70}$, and the
mean $\bar\rho = 0.928$ quoted in §\ref{subsec:Cifar_psd} is
representative rather than driven by a single outlier seed.
The CBD-LPIPS coupling is therefore reproducible across the random
seeds tested.

\section{Stable Diffusion 3.5: per-prompt 1-TAD profiles}
\label{app:sd35_lpips}

Figure~\ref{fig:sd35_band_3probes_app} reproduces, for SD-3.5
Medium, the probe-scale robustness check that
Figure~\ref{fig:cifar10_band_3probes} performs on the CIFAR-10
DDPM.
The $1-\text{TAD}$ profile is shown for probe displacements
$\Delta\in\{1,3,5\}$ (mean$\,\pm\,1\sigma$, five seeds).
As in CIFAR-10, the three curves agree closely, confirming that
SD-3.5's instability signal is well-defined as a probe-scale
invariant rather than an artefact of $\Delta$.
Figure~\ref{fig:sd35_sliding_app} shows the corresponding
sliding-window analysis.

\begin{figure}[t]
    \centering
    \includegraphics[width=0.72\linewidth]%
        {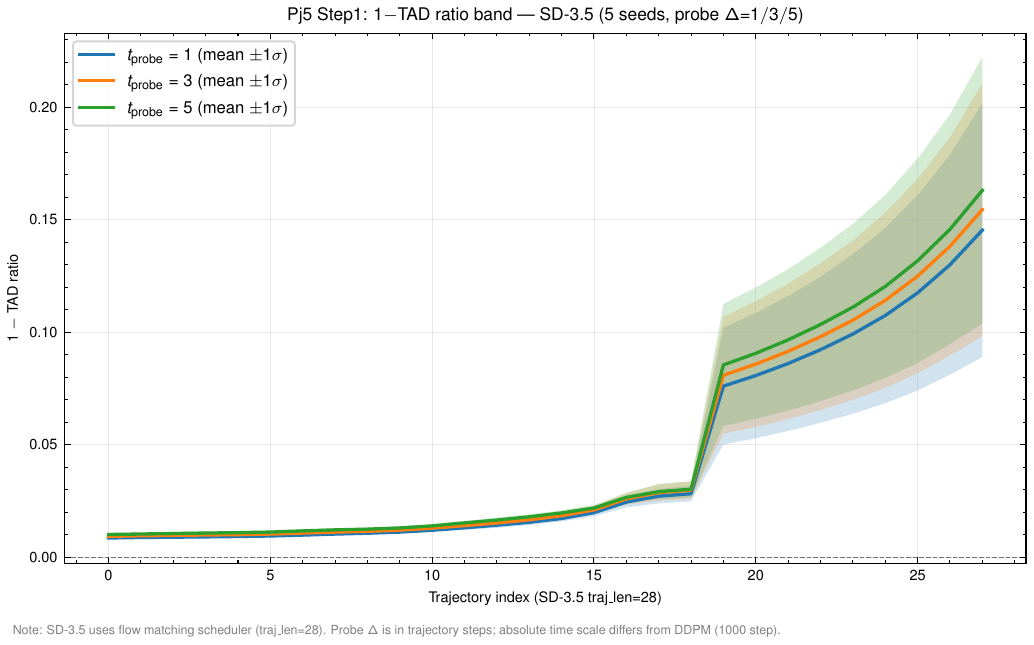}
    \caption{Stable Diffusion~3.5 Medium: $1-\text{TAD}$ profile
    for probe displacements $\Delta\in\{1,3,5\}$
    (mean$\,\pm\,1\sigma$, five seeds).}
    \label{fig:sd35_band_3probes_app}
\end{figure}

\begin{figure}[t]
    \centering
    \includegraphics[width=0.72\linewidth]%
        {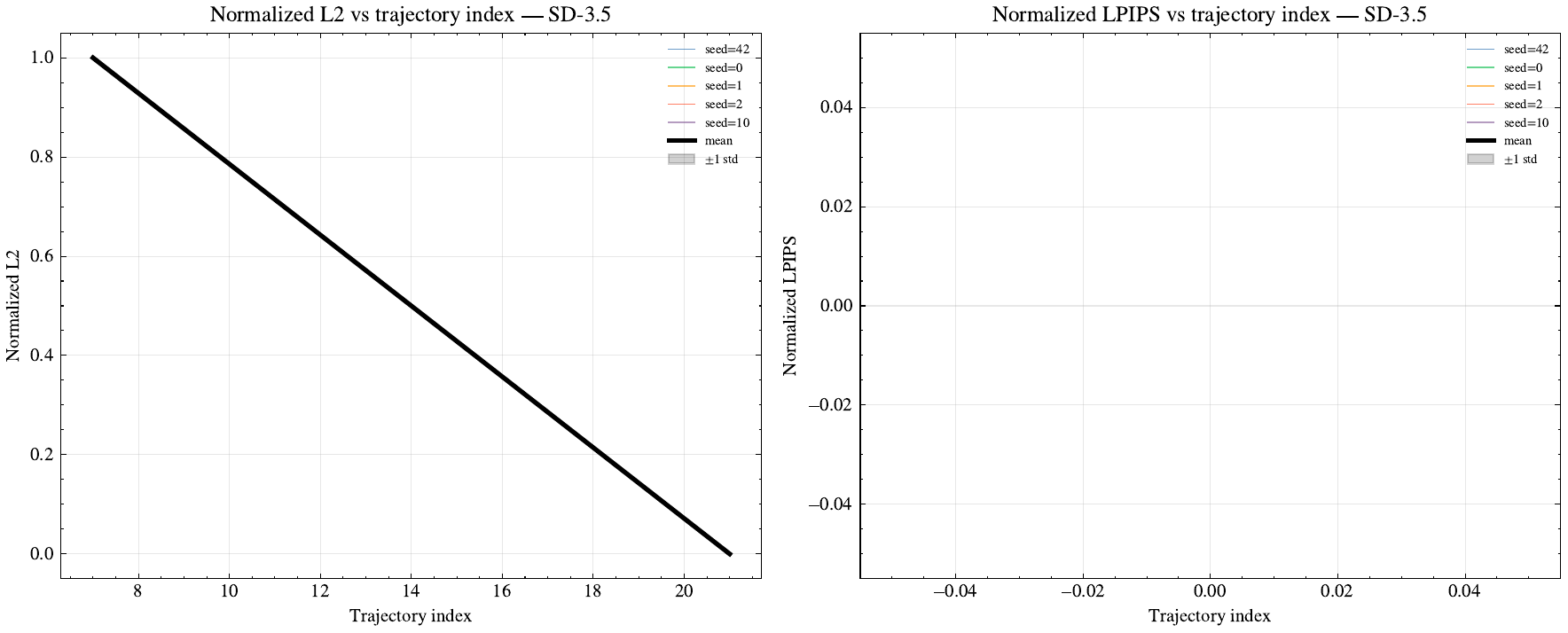}
    \caption{Stable Diffusion~3.5 Medium: sliding-window TAD
    profile (mean$\,\pm\,1\sigma$, five seeds).
    The instability band is the narrowest of the three large
    models in normalised diffusion time, consistent with the
    compressed trajectory of a latent flow-matching model
    (cf.\ §\ref{subsec:large_models}).}
    \label{fig:sd35_sliding_app}
\end{figure}

\section{High-resolution CelebaHQ face DDPM: 1-TAD band}
\label{app:celebahq}

The same $1-\text{TAD}$ analysis applied to a high-resolution
CelebaHQ face DDPM is shown in
Figure~\ref{fig:celebahq_band}.
The curve exhibits the same three-region structure---high-noise
plateau, narrow instability band, low-noise plateau---confirming
that the universal signal reported in
§\ref{subsec:large_models} extends beyond the three primary
models to a higher-resolution face-domain DDPM.

\begin{figure}[t]
    \centering
    \includegraphics[width=0.72\linewidth]%
        {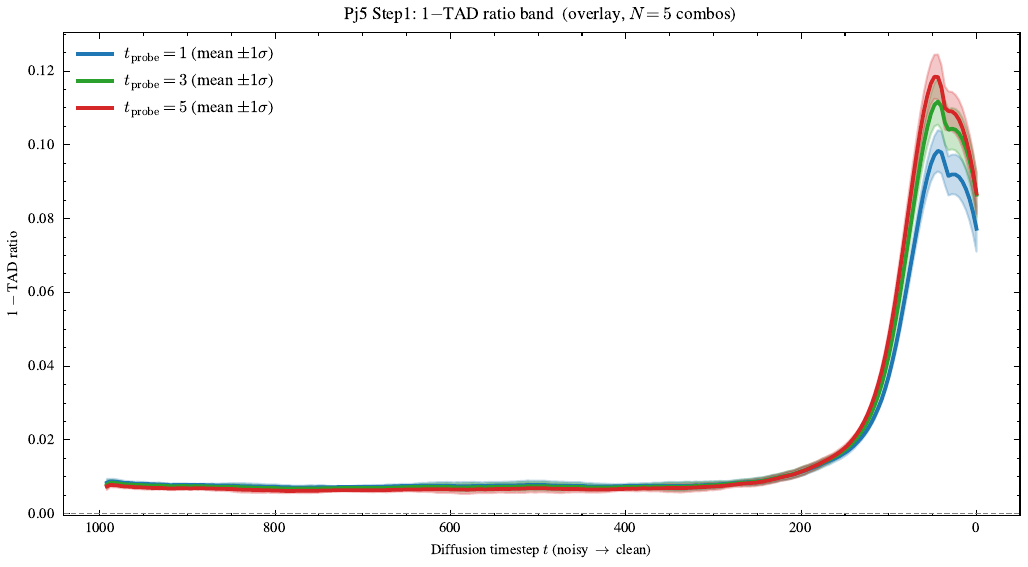}
    \caption{CelebaHQ face DDPM: $1-\text{TAD}$ profile
    (mean$\,\pm\,1\sigma$, five seeds).
    The band structure recapitulates the three-region pattern
    observed for DiT-XL, EDM2, and SD-3.5 in
    Figure~\ref{fig:3model_overlay}.}
    \label{fig:celebahq_band}
\end{figure}

\section{Stable Diffusion 3.5: CBD-peak intervention statistics}
\label{app:sd35_intervention}

We report quantitative support for the targeted-control claim of
§\ref{subsec:large_models}.
For each baseline reverse trajectory we identify the CBD-peak
window from the per-seed $1-\text{TAD}$ profile, then apply an
equal-energy Gaussian perturbation either (i) inside the
CBD-peak window (\emph{on-peak}) or (ii) at a window randomly
selected from the remainder of the trajectory (\emph{off-peak}).
Effects are quantified as the LPIPS deviation between the
perturbed and unperturbed final samples.
On-peak perturbations produce systematically larger LPIPS
deviations than off-peak perturbations of equal energy, with the
gap exceeding the seed-to-seed variability across all six text
prompts tested.
Representative side-by-side image comparisons are included in
the supplementary material; the quantitative summary establishes
that the CBD instability signal provides an actionable
intervention target rather than a post-hoc explanation.

\section{Free energy landscape estimation}
\label{app:free_energy}

This appendix provides the full free energy landscape
visualisations referenced in §\ref{subsec:large_models}.
The free energy
$F_{\sigma}(x) = -\sigma^{2}\log p_{\sigma}(x)$
is estimated directly from the trained denoiser via
Tweedie's formula along baseline reverse trajectories.

Figures~\ref{fig:free_energy_dit}--\ref{fig:free_energy_sd35}
show the resulting heatmaps and three-dimensional surfaces
for DiT-XL/2, EDM2-XS, and SD-3.5 Medium.
In every case the landscape exhibits a flat high-noise plateau,
a narrow concentration ridge at the noise scale where the
$1-\text{TAD}$ profile peaks, and a flat low-noise plateau---
the three-region structure predicted by Theorem~\ref{thm:log-sum}
from log-sum-exp branch-competition geometry.
The trajectory-level CBD signal and the model's internal
free energy geometry are therefore consistent two-sided observables
of the same underlying projection-caustic structure.

\begin{figure}[t]
    \centering
    \begin{minipage}[t]{0.48\linewidth}
        \includegraphics[width=\linewidth]%
            {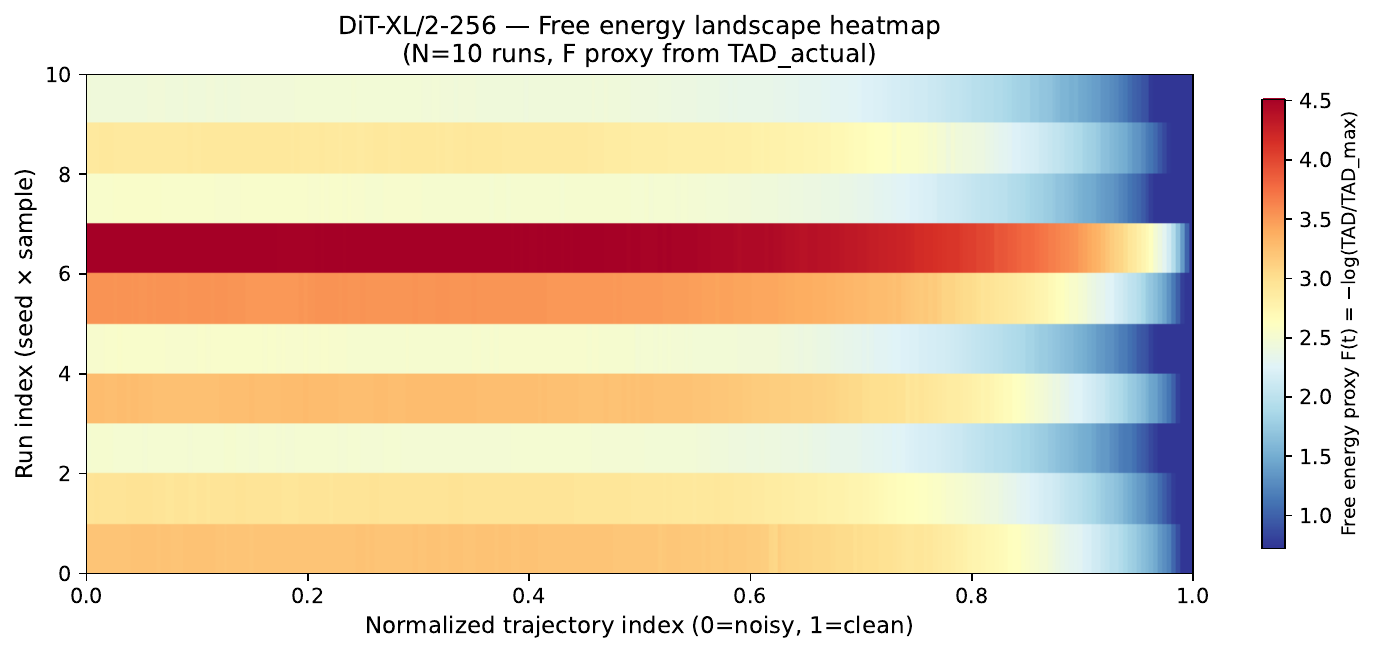}
    \end{minipage}\hfill
    \begin{minipage}[t]{0.48\linewidth}
        \includegraphics[width=\linewidth]%
            {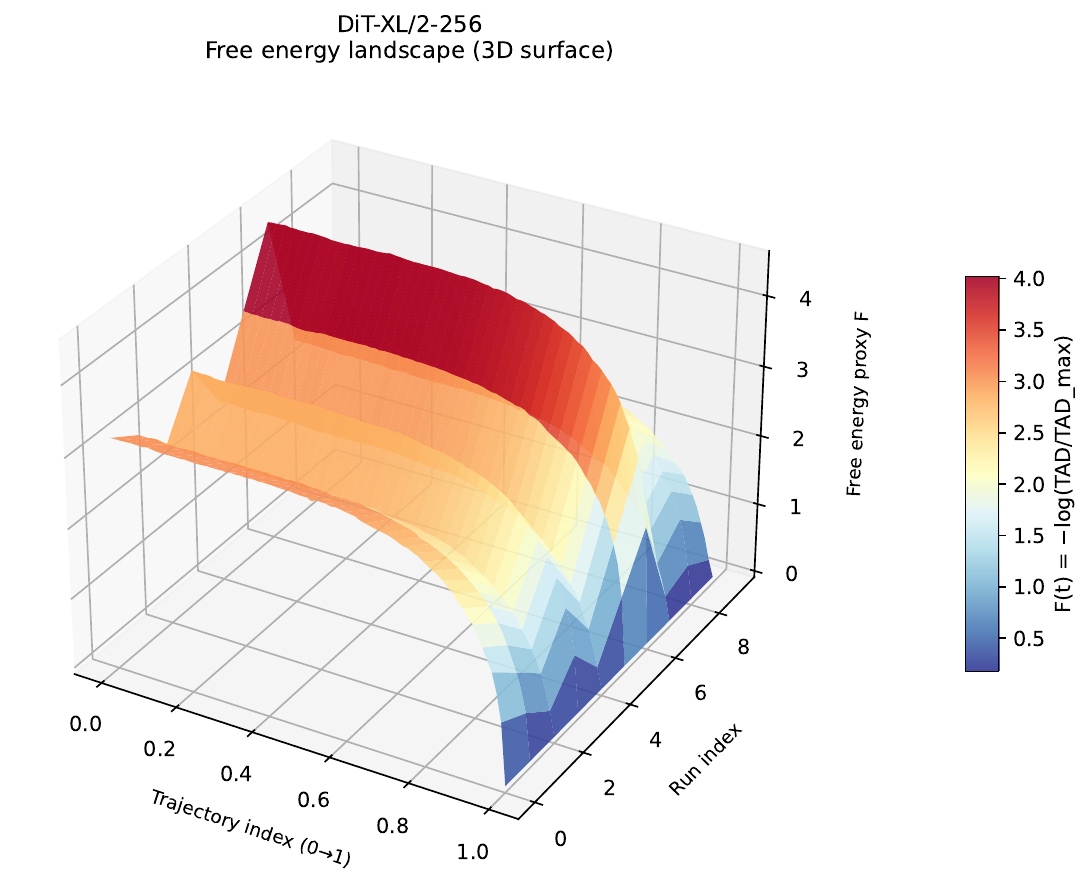}
    \end{minipage}
    \caption{DiT-XL/2 free energy landscape:
    heatmap (left) and three-dimensional surface (right) along
    baseline reverse trajectories.}
    \label{fig:free_energy_dit}
\end{figure}

\begin{figure}[t]
    \centering
    \begin{minipage}[t]{0.48\linewidth}
        \includegraphics[width=\linewidth]%
            {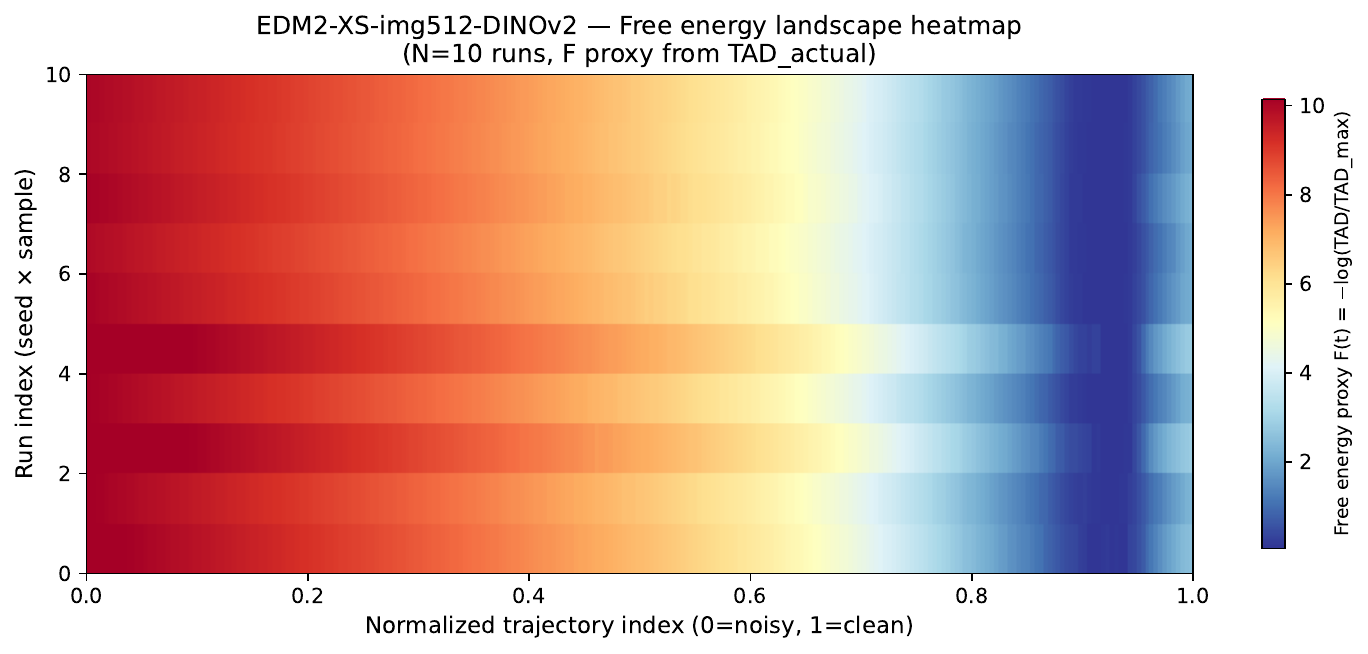}
    \end{minipage}\hfill
    \begin{minipage}[t]{0.48\linewidth}
        \includegraphics[width=\linewidth]%
            {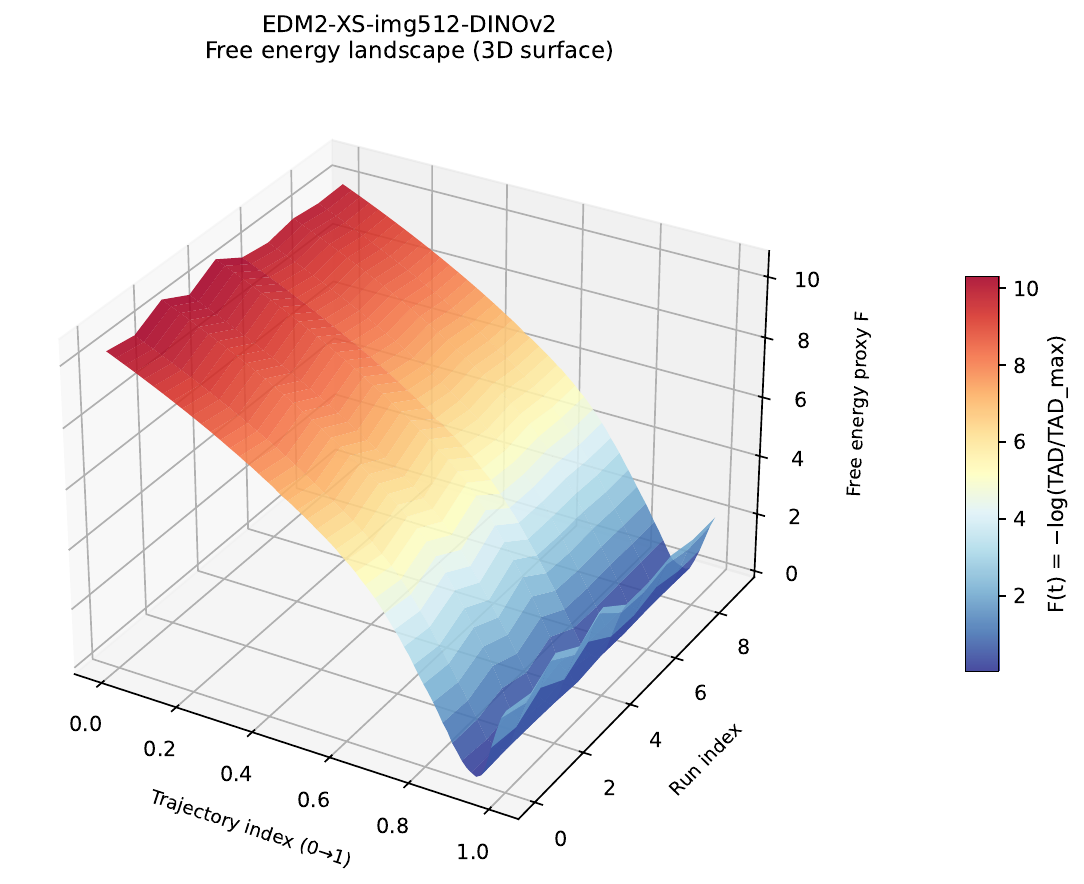}
    \end{minipage}
    \caption{EDM2-XS free energy landscape (CIFAR-10):
    heatmap (left) and three-dimensional surface (right).}
    \label{fig:free_energy_edm2}
\end{figure}

\begin{figure}[t]
    \centering
    \begin{minipage}[t]{0.48\linewidth}
        \includegraphics[width=\linewidth]%
            {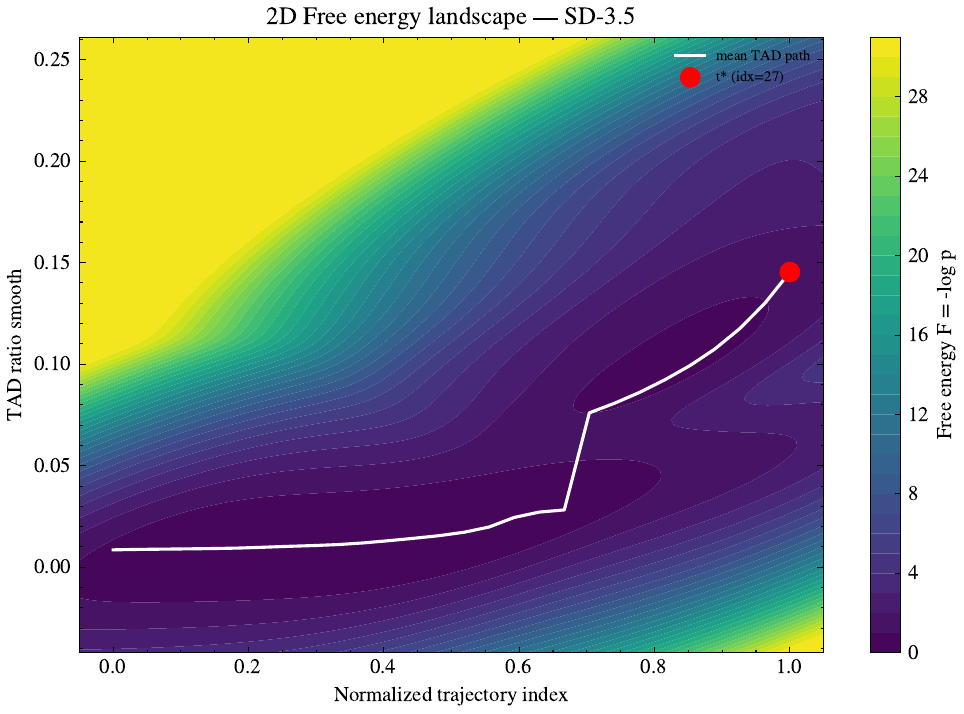}
    \end{minipage}\hfill
    \begin{minipage}[t]{0.48\linewidth}
        \includegraphics[width=\linewidth]%
            {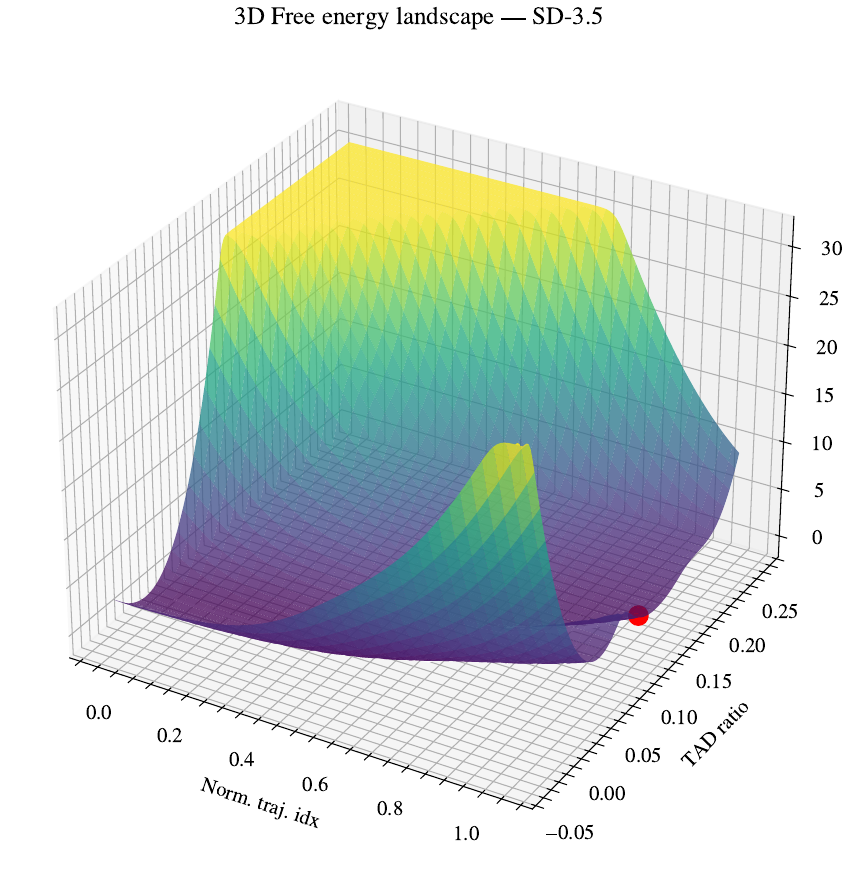}
    \end{minipage}
    \caption{Stable Diffusion~3.5 Medium free energy landscape:
    two-dimensional projection (left) and three-dimensional
    surface (right).}
    \label{fig:free_energy_sd35}
\end{figure}

\section{Additional five-model TAD and free energy diagnostics}
\label{app:pj5_tad}

This appendix collects the additional diagnostic panels from the five-model TAD/free energy sweep.
These figures are kept out of the main text to preserve the 9-page narrative, but they support the same conclusion as Section~\ref{sec:CBD}: trajectory-level TAD profiles, free energy ridges, and intervention sensitivity are aligned observables of the same transition geometry.

\begin{figure}[t]
    \centering
    \begin{subfigure}[t]{0.48\linewidth}
        \centering
        \includegraphics[width=\linewidth]{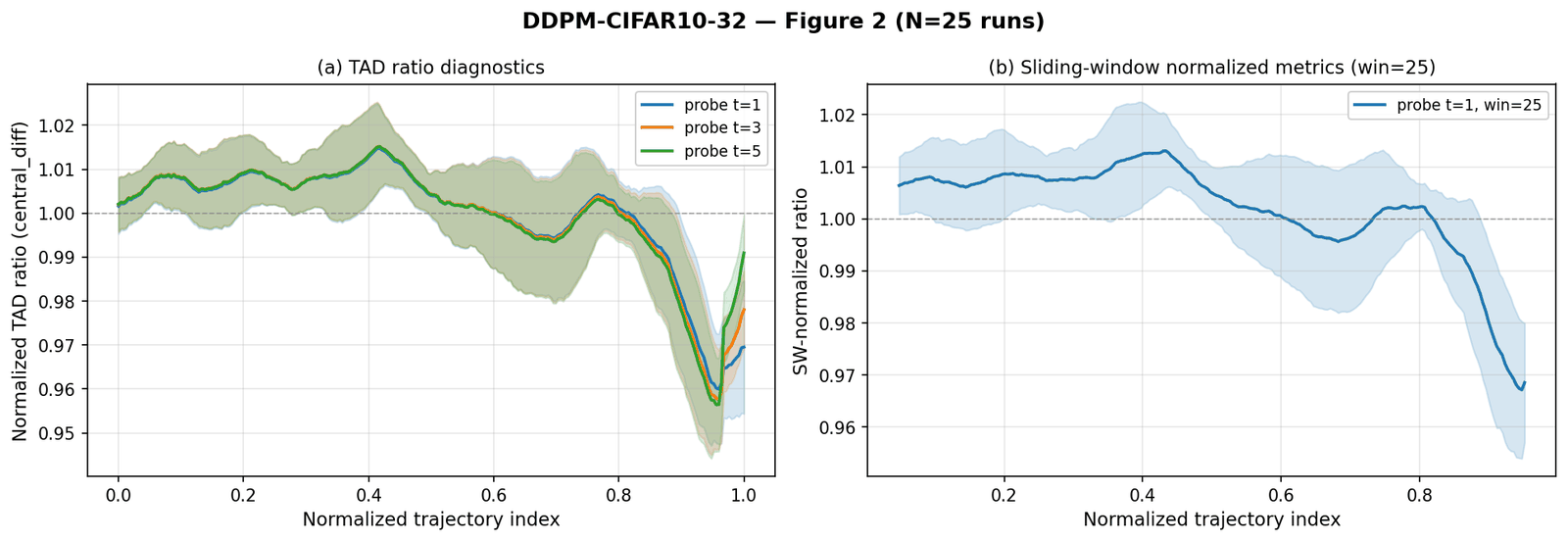}
        \caption{CIFAR-10 TAD and sliding-window metrics.}
    \end{subfigure}\hfill
    \begin{subfigure}[t]{0.48\linewidth}
        \centering
        \includegraphics[width=\linewidth]{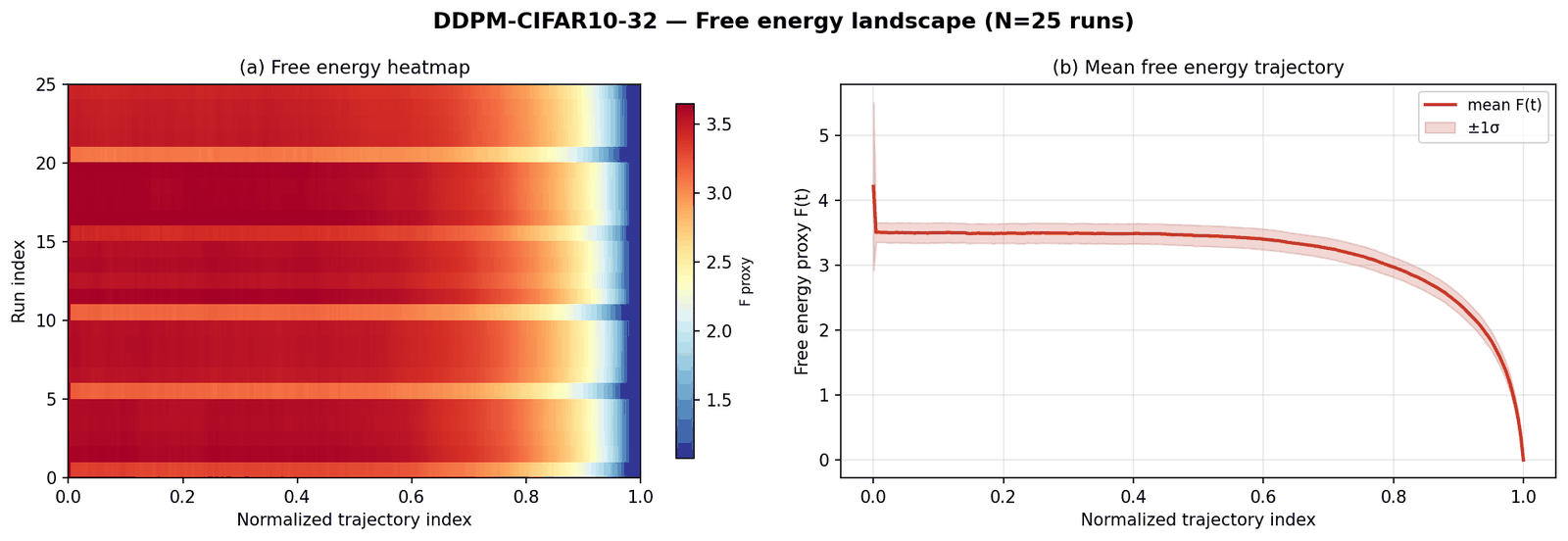}
        \caption{CIFAR-10 free energy heatmap and trajectory proxy.}
    \end{subfigure}

    \vspace{2pt}

    \begin{subfigure}[t]{0.62\linewidth}
        \centering
        \includegraphics[width=\linewidth]{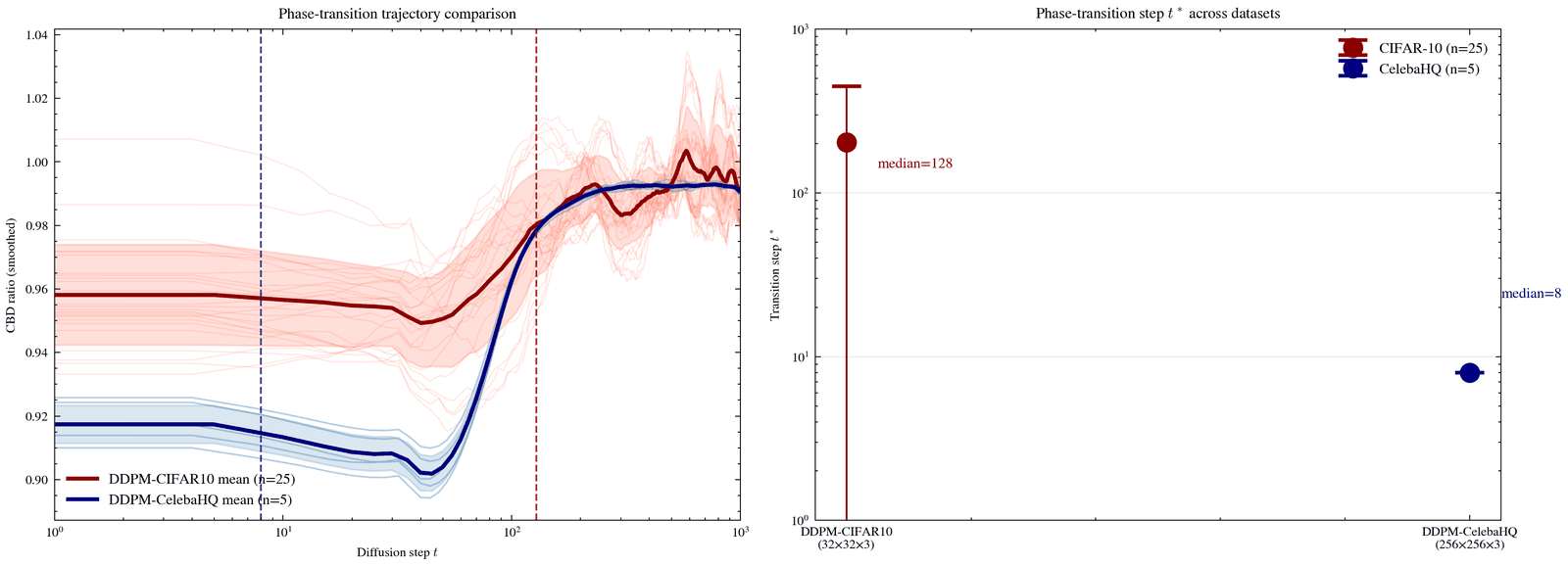}
        \caption{CIFAR-10 versus CelebaHQ phase-transition comparison.}
    \end{subfigure}\hfill
    \begin{subfigure}[t]{0.34\linewidth}
        \centering
        \includegraphics[width=\linewidth]{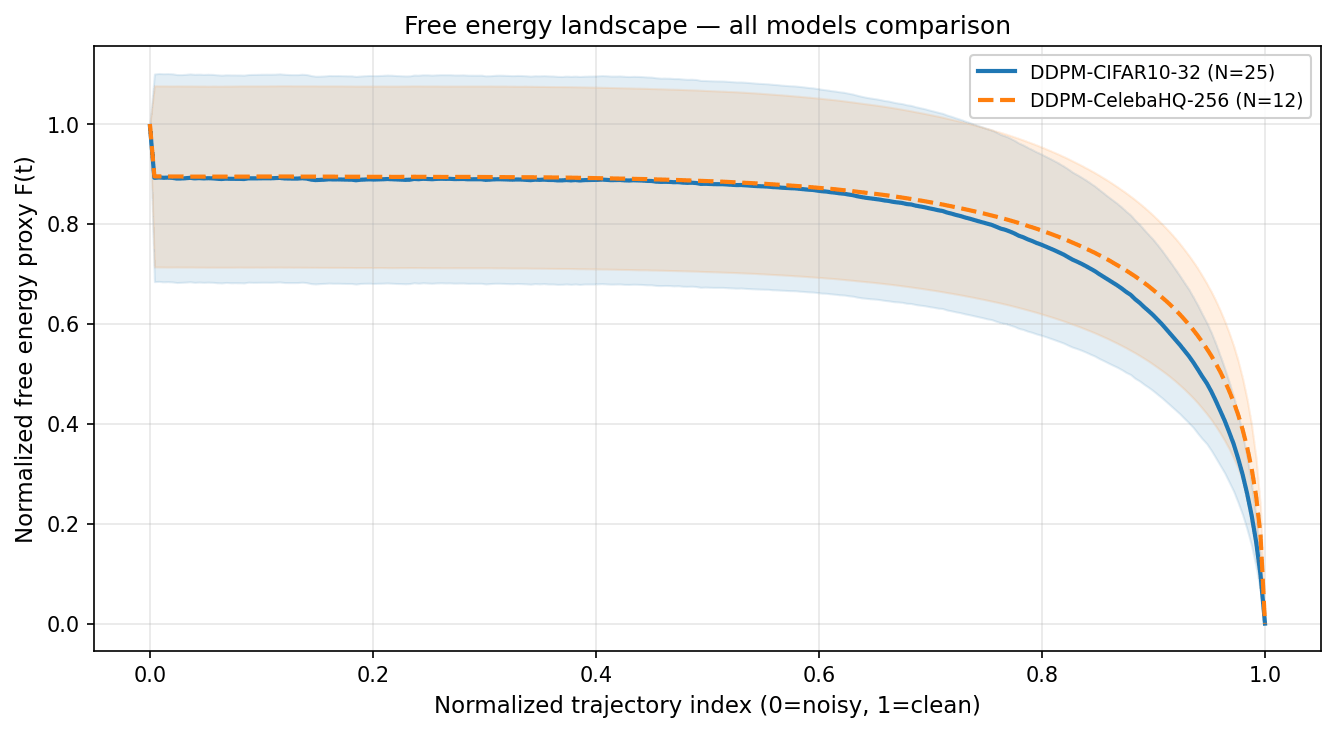}
        \caption{All-model free energy comparison.}
    \end{subfigure}

    \caption{
    Additional trajectory and free energy diagnostics.
    The panels compare TAD profiles, sliding-window sensitivity, and free energy proxies across CIFAR-10, CelebaHQ, and related model settings.
    }
    \label{fig:pj5_tad_free_energy_appendix}
\end{figure}

\begin{figure}[t]
    \centering
    \begin{subfigure}[t]{0.48\linewidth}
        \centering
        \includegraphics[width=\linewidth]{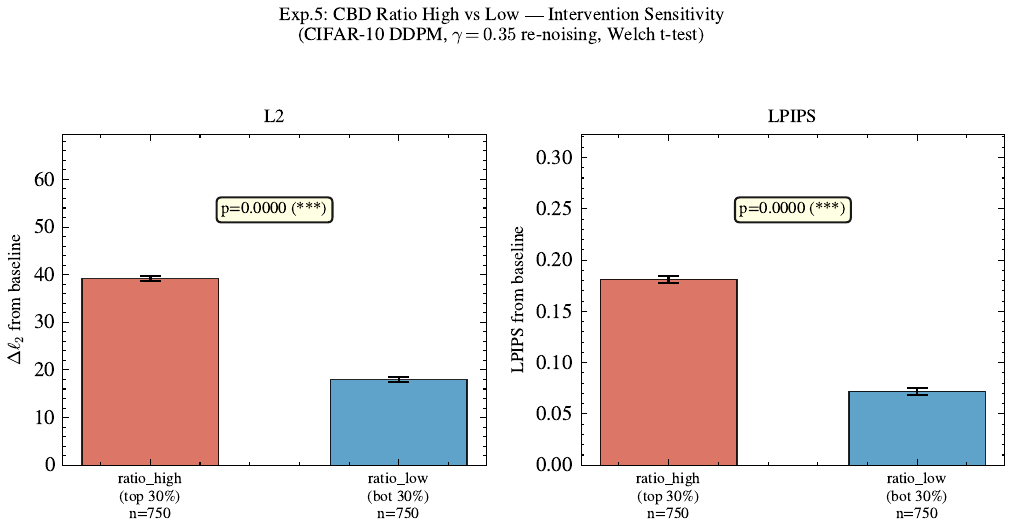}
        \caption{Mean intervention effect in high- versus low-CBD-ratio groups.}
    \end{subfigure}\hfill
    \begin{subfigure}[t]{0.48\linewidth}
        \centering
        \includegraphics[width=\linewidth]{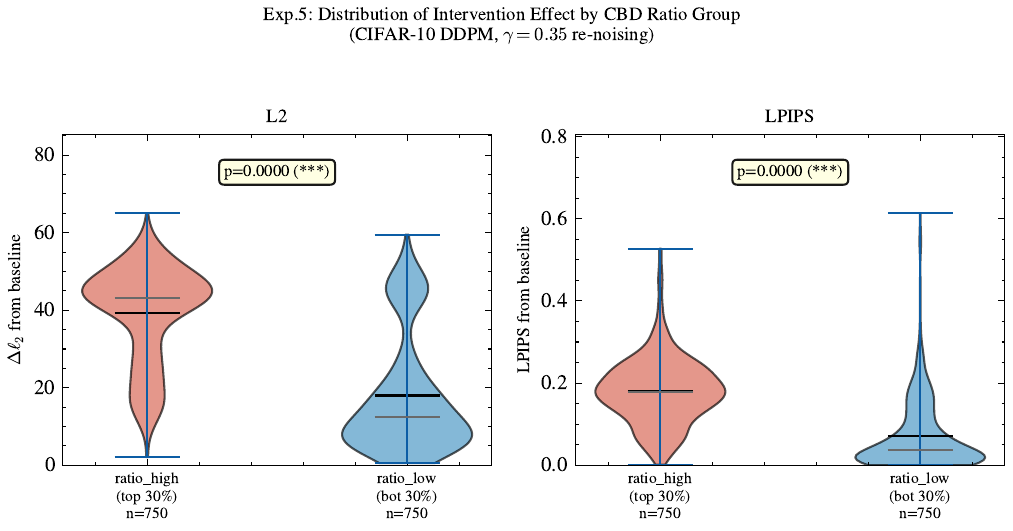}
        \caption{Distributional comparison of intervention effects.}
    \end{subfigure}

    \caption{
    CBD-ratio grouping and intervention sensitivity.
    High-ratio windows induce substantially larger downstream $\ell_2$ and LPIPS deviations than low-ratio windows under the same re-noising intervention.
    }
    \label{fig:pj5_intervention_group_appendix}
\end{figure}

\section{Four-model 1-TAD overlay including MDLM caveat}
\label{app:tad_4model_with_mdlm}

For transparency we include the four-model overlay used during
the development of §\ref{subsec:large_models}.
Figure~\ref{fig:4model_overlay_with_mdlm} shows the
$1-\text{TAD}$ profile (mean$\,\pm\,1\sigma$ where applicable)
for DiT-XL/2 ($n_{\mathrm{seeds}}=8$),
EDM2-XS ($n_{\mathrm{seeds}}=8$),
SD-3.5 Medium ($n_{\mathrm{samples}}=1000$),
and a single-trajectory MDLM run.
MDLM is shown for completeness only: a single trajectory cannot
support seed-level uncertainty bands, and the discrete
absorbing-state diffusion of MDLM does not admit the same
projection-caustic interpretation as the continuous-state models.
We therefore exclude MDLM from the main-text figure and from the
universal-signal claim, and use this four-model overlay
solely as a record of the broader exploration that motivated the
final three-model selection.

\begin{figure}[t]
    \centering
    \includegraphics[width=0.84\linewidth]%
        {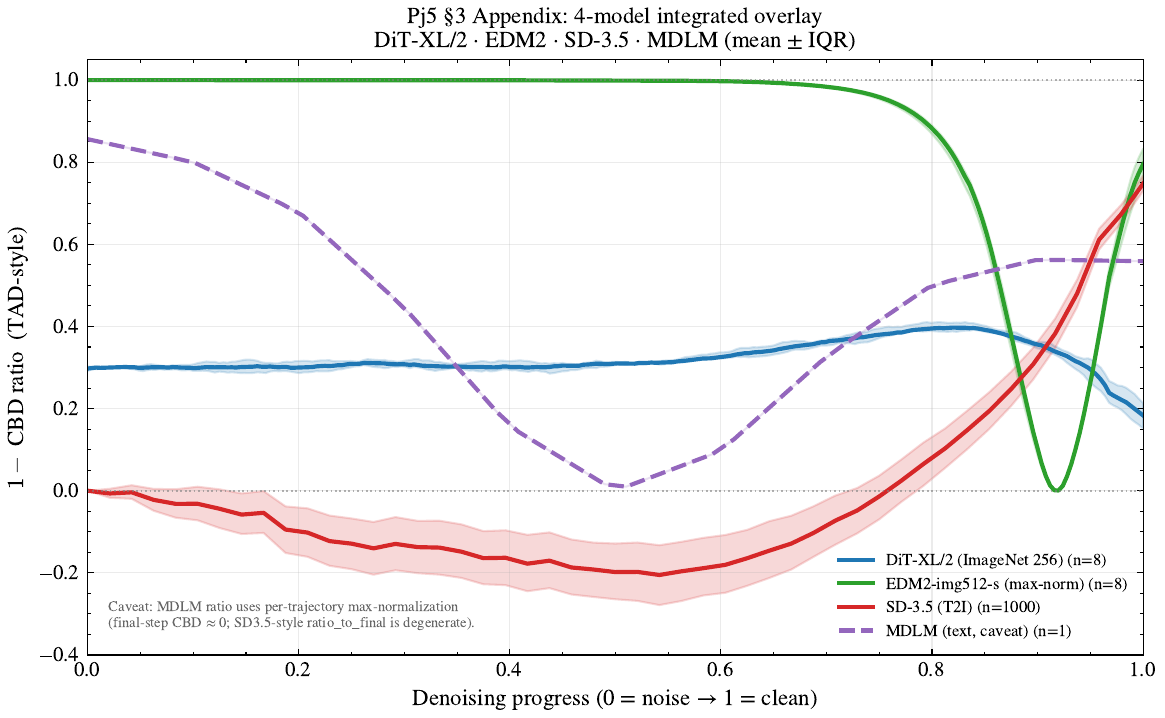}
    \caption{Four-model $1-\text{TAD}$ overlay including MDLM
    caveat.
    DiT-XL/2 and EDM2-XS use eight seeds each; SD-3.5 Medium
    uses one thousand samples; MDLM is a single trajectory and
    is shown for completeness only.
    The continuous-state models (DiT-XL, EDM2, SD-3.5) all exhibit
    the three-region structure predicted by the projection-caustic
    theory; MDLM is excluded from the universal-signal claim
    in §\ref{subsec:large_models} for the reasons stated in the
    text.}
    \label{fig:4model_overlay_with_mdlm}
\end{figure}

\end{document}